\pdfoutput=1
\documentclass[a4paper,fleqn]{cas-sc}
\usepackage[utf8]{inputenc}
\usepackage{url}
\usepackage{nicefrac}
\usepackage{microtype}
\usepackage{amsthm}
\usepackage{multirow}
\usepackage{algorithm}
\usepackage{algorithmic}
\usepackage[authoryear]{natbib}
\usepackage{placeins}
\hypersetup{hypertexnames=false,bookmarksopen=true,bookmarksdepth=3,pdfpagemode=UseOutlines}
\DeclareUnicodeCharacter{2019}{'}

\makeatletter
\newif\ifappendixtoc
\let\appendixtoc@addcontentsline\addcontentsline
\renewcommand{\addcontentsline}[3]{%
  \appendixtoc@addcontentsline{#1}{#2}{#3}%
  \ifappendixtoc
    \begingroup
      \def\appendixtoc@target{#1}%
      \def\appendixtoc@toc{toc}%
      \ifx\appendixtoc@target\appendixtoc@toc
        \appendixtoc@addcontentsline{apc}{#2}{#3}%
      \fi
    \endgroup
  \fi
}
\newcommand{\appendixcontentsname}{Appendix Contents}
\newcommand{\appendixtableofcontents}{%
  \section*{\appendixcontentsname}%
  \begingroup
    \setcounter{tocdepth}{3}%
    \@starttoc{apc}%
  \endgroup
}
\newcommand{\startappendixtoc}{\appendixtoctrue}
\newcommand{\stopappendixtoc}{\appendixtocfalse}
\makeatother
\ExplSyntaxOn
\cs_if_exist:NF \vbox_unpack_clear:N
  { \cs_new_eq:NN \vbox_unpack_clear:N \vbox_unpack_drop:N }
\RenewDocumentCommand \printorcid { }
  {
    \seq_if_empty:NF \g_stm_orcid_seq
      {
        \group_begin:
          \tex_let:D \thefootnote \relax \footnotetext
            {
              \raggedright
              \textsc{orcid}(s):\c_space_token
              \seq_use:Nn \g_stm_orcid_seq { ;~ }
            }
        \group_end:
      }
  }
\cs_set:Npn \__first_footerline:
  { \rmfamily \itshape Preprint }
\cs_set:Npn \__cas_head: { }
\ExplSyntaxOff

\def\mr{\mathrm}
\def\bm{\boldsymbol}

\def\be{\begin{equation}}
\def\ee{\end{equation}}
\def\ba{\begin{aligned}}
\def\ea{\end{aligned}}
\def\bp{\begin{pmatrix}}
\def\ep{\end{pmatrix}}

\newtheorem{theorem}{Theorem}

\newtheorem{proposition}{Proposition}

\newtheorem{assumption}{Assumption}

\begin{document}
\let\WriteBookmarks\relax
\def\floatpagepagefraction{1}
\def\textpagefraction{.001}
\shorttitle{A Theory-grounded Hybrid Neural Network for Stable Visual Object Tracking}
\shortauthors{Zhou et al.}
\title[mode=title]{A Theory-grounded Hybrid Neural Network Integrating Complementary Estimation Mechanisms for Stable Visual Object Tracking}

\author[1,3]{Yancheng Zhou}[orcid=0009-0008-7537-3238]
\credit{Conceptualization, Methodology, Investigation, Data curation, Validation, Formal analysis, Writing - original draft}

\author[1]{Hanle Zheng}[orcid=0009-0002-9622-780X]
\credit{Methodology, Investigation}

\author[1]{Lei Deng}[orcid=0000-0002-5172-9411]
\credit{Conceptualization, Methodology, Resources, Writing - original draft}

\author[1,2]{Yujie Wu}[orcid=0000-0002-8950-6803]
\cormark[1]
\ead{wu-yj16@tsinghua.org.cn}
\credit{Conceptualization, Methodology, Project administration, Writing - review \& editing, Supervision}

\affiliation[1]{
  organization={Center for Brain Inspired Computing Research (CBICR), Department of Precision Instrument, Tsinghua University},
  city={Beijing},
  country={China}
}

\affiliation[2]{
  organization={Department of Computing, The Hong Kong Polytechnic University},
  city={Hong Kong},
  country={China}
}

\affiliation[3]{
  organization={Weixian College, Tsinghua University},
  city={Beijing},
  country={China}
}

\cortext[1]{Corresponding author}
\begin{abstract}
Hybrid neural networks (HNNs) that integrate artificial neural networks (ANNs) with brain-inspired neural networks have achieved broad success across perception and control tasks. However, much of the current success is confined to neuron-scale hybridization, where discrete, spike-based coding fundamentally limits applicability to continuous-state estimation tasks.
In neuroscience, continuous attractor neural networks (CANNs) represent continuous states through neural ensembles, pointing to a population-scale route for HNNs to address this limitation. 
Yet, principled methodologies for ANN--CANN integration remain largely underexplored.
In this work, we propose a theory-grounded ANN-CANN hybridization framework and instantiate it as a hybrid tracking neural network (HTNN) for visual object tracking, a representative continuous-state estimation task.
The framework aligns ANN response maps with CANN dynamics in the same state space, enabling the two heterogeneous branches to interact through the shared state representation. 
Furthermore, we uncover a functional bias-variance complementarity: data-driven ANNs provide asymptotically unbiased estimates, while CANN estimates are low-variance but temporally lagged. 
By operationalizing this complementarity, HTNN achieves stable and accurate tracking across nine visual tracking benchmarks, consistently outperforming single-network baselines and existing hybrid models.
Notably, these performance gains are robustly maintained even under diverse environmental variations, including occlusion, motion blur, and background interference. 
Through this proof-of-concept study, our framework offers a generalizable foundation for advancing HNNs toward population-scale hybridization.

\end{abstract}
\begin{keywords}
Hybrid neural networks \sep Brain-inspired computing \sep Continuous attractor dynamics \sep Neuromorphic computing  \sep Visual object tracking
\end{keywords}
\maketitle
\hypersetup{
  pdftitle={A Theory-grounded Hybrid Neural Network Integrating Complementary Estimation Mechanisms for Stable Visual Object Tracking},
  pdfauthor={Yancheng Zhou, Hanle Zheng, Lei Deng, Yujie Wu},
  pdfkeywords={Hybrid neural networks, Brain-inspired computing, Continuous attractor dynamics, Neuromorphic computing, Visual object tracking}
}

\section{Introduction}
The integration of computer-science-oriented and neuroscience-oriented computational models has long been a pivotal direction in the development of intelligent systems~\citep{AIandBI2,AIandBI3}.
Hybrid Neural Networks (HNNs) instantiate this direction by integrating Artificial Neural Networks (ANNs) with brain-inspired computational models within unified architectures~\citep{HNN1_Tianjic, HNN14_Review}.
Current HNN research has mainly followed a neuron-scale hybridization route, typically integrating ANNs with network architectures based on neuronal spiking dynamics, such as spiking neural networks (SNNs)~\citep{HNN11_Framework,HNN12_SNNRNN}.
Through the exploitation of their complementary advantages in accuracy, robustness, and energy efficiency, this route has achieved broad success across perception, control, and neuromorphic sensing tasks~\citep{HNN5_DeepLNeurodynamics,HNN6_SpikeFlowNet,HNN13_TopDownAttention,HNN16_DanceSNNANN,HNN18_STAttention}.
However, this paradigm struggles with tasks requiring continuous-state estimation, because continuously evolving variables are difficult to maintain through its discrete spiking mechanisms, as shown in Figure~\ref{fig:intro_motivation}(A)~\citep{Compare_SNN_and_RNN,HNN11_Framework}.

Continuous attractor neural networks (CANNs) model an important neural mechanism for continuous-state representation in neuroscience.
Through attractor dynamics shaped by local excitation and global inhibition, CANNs form a continuous family of bump states whose centers can move smoothly within the represented state space~\citep{SiWuCANN1}.
CANNs have been widely used to model continuous variables and cognitive states~\citep{Brain_Attractor}, including head direction~\citep{CANN_in_Head_Cells,CANN_in_Head_Cells_2}, spatial location~\citep{CANN_in_Grid_Cells,CANN_in_Grid_Cells2}, and working memory~\citep{CANN_in_Working_Memory,CANN_in_Working_Memory_2}.
Computational studies of CANNs further show that time-varying external inputs can drive activity bumps to track continuously evolving targets~\citep{SiWuCANN2,SiWuCANN3}.
Building on this property, CANNs have also been applied to visual object tracking (VOT), a representative continuous-state estimation task, and demonstrated that population-bump dynamics can support smooth visual target tracking~\citep{VOT_CANN}.
Together, this evidence motivates a population-scale hybridization route: integrating CANNs with ANNs to enable both data-driven, accurate estimation and dynamical maintenance of continuous states, as illustrated in Figure~\ref{fig:intro_motivation}(B).

Despite this promise, a critical bottleneck remains: effectively bridging these two heterogeneous models to exploit their complementary strengths. 
While recent heuristic attempts have incorporated CANNs into HNN designs~\citep{HNN4_FlyNet,HNN19_NeuroGPR}, their simplistic serial or parallel designs overlook the structural heterogeneity between CANNs and other neural modules. 
Equally important, none of them provided the theoretical analysis to ground the hybridization mechanism.

\begin{figure}[pos=!htbp]
    \centering
    \includegraphics[width=1.0\linewidth]{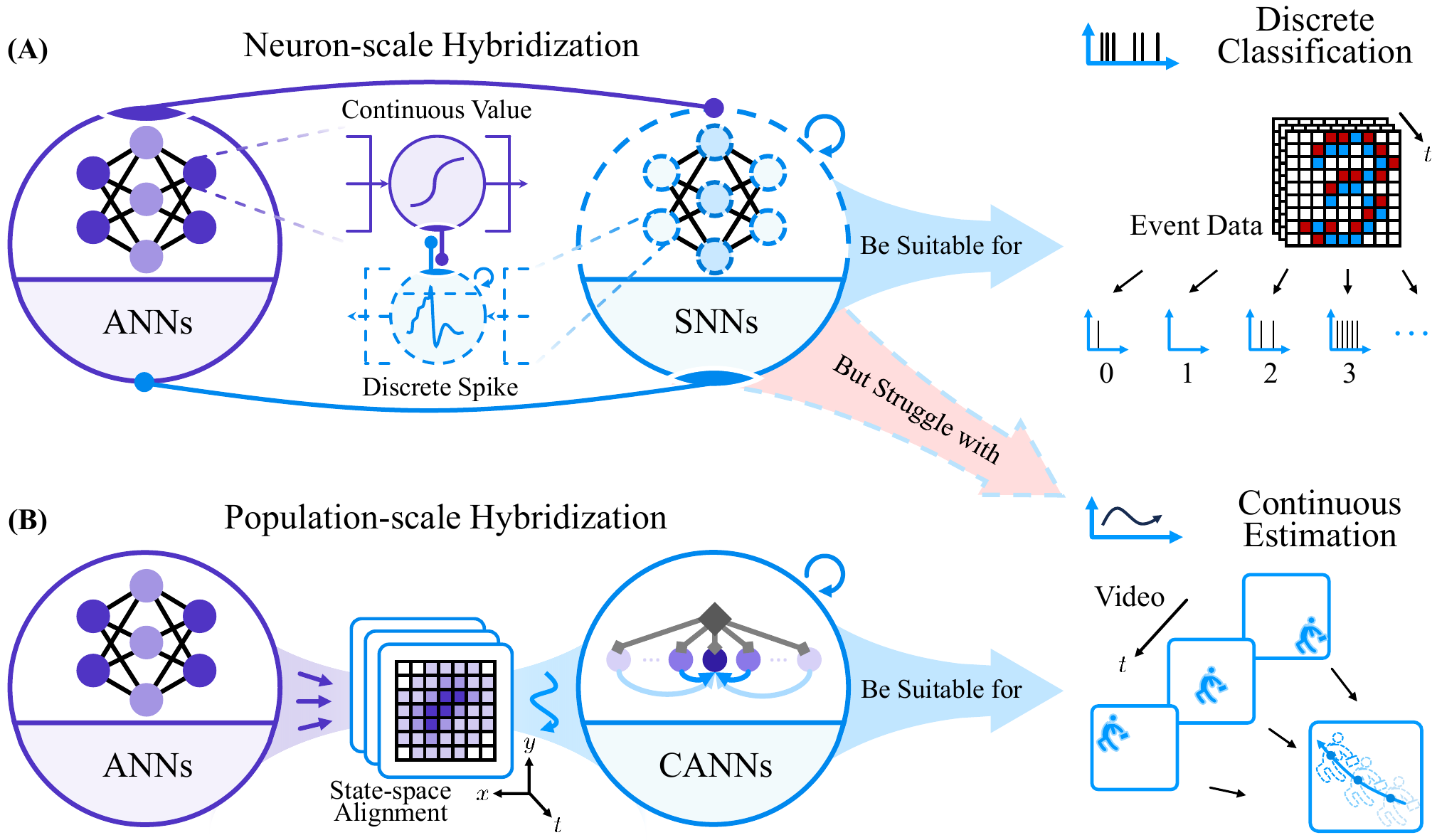}
    \caption{
    Motivation for population-scale hybridization in HNNs.
    (A) Neuron-scale hybridization integrates ANNs with architectures based on neuronal spiking dynamics, such as SNNs. 
    This route is suited to spatiotemporal tasks with discrete, event-driven data flow.
    (B) Population-scale hybridization aligns ANNs with neural population dynamics for continuous-state representation, such as CANNs, in the state space. 
    This route is suited to maintaining and estimating states that evolve continuously in space and time, such as tracking moving visual targets.
    }
    \label{fig:intro_motivation}
\end{figure}

Here, we present a theory-grounded framework that constructs hybrid tracking neural networks (HTNNs) by synergistically hybridizing an ANN and a CANN for VOT.
The framework aligns ANN- and CANN-based representations, characterizes their complementary estimation benefits, and realizes these benefits through the two-stage fusion design of the HTNN.
The ANN branch is instantiated as a classical and compact fully-convolutional Siamese Neural Network (SiamFC)~\citep{VOT_SiamFC}.
This choice preserves the ANN branch's target-localization capability while keeping the experimental focus on the fusion mechanism itself.
Systematic experiments show that HTNN stably and accurately tracks targets, outperforming single-network baselines, existing HNN baselines, and an ANN--CANN Direct Fusion baseline across multiple metrics.
These advantages remain consistent under challenging environmental variations, demonstrating the potential of HTNN for open-world object tracking applications.

The main contributions of this work are threefold.
\begin{itemize}

\item We propose a novel population-scale hybrid framework that integrates an ANN and a CANN for continuous-state estimation, using the VOT as a representative testbed. 
By formulating the two modules as complementary estimators over a shared state space, the framework establishes a principled design that naturally connects representation alignment, functional role characterization, and synergistic fusion of two heterogeneous branches.  

\item We mathematically characterize the complementary advantages of this hybrid framework from a bias--variance perspective.
Specifically, we demonstrate that the data-driven ANN branch provides an asymptotically unbiased estimate of the current state, while the CANN branch provides a lower-variance estimate with finite temporal lag.
Extending this to continuous-time dynamics reveals that the CANN branch supports stable tracking by continuous evolution and suppressing remote response peaks.
Building upon this insight, we develop an HTNN model with a two-stage architecture: a representation-fusion stage that refines ANN response maps using motion cues, and an estimation-fusion stage that additively combines both branches for robust final state estimation.

\item We comprehensively evaluate HTNN on nine visual tracking benchmarks comprising over 1,500 video sequences and 1.57M frames. 
Across 18 challenging environmental variations, HTNN outperforms single-network and HNN baselines in accuracy and stability. 
Compared with the ANN--CANN Direct Fusion baseline, HTNN achieves overall average gains of 77.3\% in Precision and 56.4\% in Success Rate.
Drift analyses indicate that HTNN obtains these empirical gains by exploiting the bias--variance complementarity between the two branches.
Visualizations of response maps and trajectories further provide qualitative evidence that HTNN suppresses unstable response fluctuations and maintains coherent target-state evolution.

\end{itemize}

\section{Related Work}

\subsection{Hybrid Neural Networks}

HNNs have emerged as a representative paradigm for integrating computer-science-oriented artificial neural networks with neuroscience-oriented brain-inspired models. 
Their central goal is to coordinate structurally and functionally heterogeneous modules within unified architectures, allowing different models to contribute complementary computational roles~\citep{HNN14_Review}. 
Existing HNN research has mainly followed a neuron-scale hybridization route.
In this paradigm, data-driven ANNs provide accurate continuous-valued feature representations.
They are integrated with architectures based on neuronal spiking dynamics at the levels of information coding, neural dynamics, and network design, exploiting complementary advantages in accuracy, robustness, and energy efficiency~\citep{HNN11_Framework, HNN12_SNNRNN}.
This route has been developed across system architectures~\citep{HNN2_SystemHiearchy}, general design and learning frameworks~\citep{HNN11_Framework}, biologically inspired neural modules~\citep{HNN5_DeepLNeurodynamics,HNN8_NonSpikeInRNN}, and neuromorphic hardware platforms~\citep{HNN1_Tianjic}.
It has also supported applications in event-based optical flow~\citep{HNN6_SpikeFlowNet}, visual attention~\citep{HNN13_TopDownAttention}, low-power visual perception~\citep{HNN15_SNNANN}, action recognition~\citep{HNN17_ReSpike}, and event-based object detection~\citep{HNN18_STAttention}. 
Yet the discrete nature of this paradigm makes it less naturally matched to continuous-state estimation, where the represented state must evolve continuously over time~\citep{Compare_SNN_and_RNN,HNN11_Framework}. 

This limitation motivates population-scale hybridization as an important direction for extending HNNs.
ANN--CANN hybridization provides a suitable starting point along this direction.
Early studies such as FlyNet and NeuroGPR have integrated CANNs with ANNs or other network modules through serial or parallel architectures for visual or robotic place recognition~\citep{HNN4_FlyNet,HNN19_NeuroGPR}. 
However, these heuristic architectures overlook the structural heterogeneity between CANNs and other network modules, and they lack a theoretical analysis that grounds the hybridization mechanism.
To address these issues, our work develops a theory-grounded ANN--CANN hybridization framework that aligns ANN and CANN representations in a shared state space, characterizes their complementary estimation roles, and realizes functional synergy through the two-stage fusion design of HTNN.

\subsection{Continuous Attractor Neural Networks}

CANNs provide a canonical neural population-dynamics mechanism for continuous-state representation~\citep{CANN_is_Great,Brain_Attractor}. 
Through recurrent local excitation and global inhibition, a CANN can stably maintain a localized activity bump whose center encodes a low-dimensional state~\citep{SiWuCANN1}. 
The classical form of this dynamical structure is detailed later in Section~\ref{sec:2d-cann}. 
This mechanism has been widely used to model continuous variables and cognitive states in neural systems, including head direction~\citep{CANN_in_Head_Cells,CANN_in_Head_Cells_2,CANN_in_Head_Cells_3}, spatial location~\citep{CANN_in_Grid_Cells,CANN_in_Grid_Cells2,CANN_in_Grid_Cells3}, and working memory~\citep{CANN_in_Working_Memory,CANN_in_Working_Memory_2,CANN_in_Working_Memory_3}. 

Beyond these representational accounts, computational studies of CANN tracking dynamics have shown that time-varying external inputs can drive activity bumps to track continuously evolving stimuli, with the resulting dynamics shaped by response delay and bump deformation~\citep{SiWuCANN2,SiWuCANN3}. 
Under appropriate input conditions, such as weak stimulus strength and a small offset from the bump center, translational modes dominate CANN dynamics and drive the bump smoothly toward the target, whereas deformation-related modes decay exponentially over time~\citep{SiWuCANN2}. 
This tracking capability has motivated applications of CANNs to VOT as a representative continuous-state estimation task. 
For example, a two-dimensional CANN implemented on a many-core neural network chip can track visual targets rapidly and smoothly~\citep{VOT_CANN}, and a Tianjic-based unmanned bicycle system further uses CANN dynamics for real-time following of a moving human in outdoor scenes~\citep{HNN1_Tianjic}. 

Despite these successful applications, how to integrate the advantages of CANNs with data-driven ANNs into a unified hybrid framework remains an open question and is still largely underexplored.

\subsection{Visual Object Tracking}

VOT is a representative continuous-state estimation task. 
Given an initial annotation in the first frame, a tracker must maintain the evolving target state from video, such as location, scale, or bounding box~\citep{VOT_Review}. 
VOT methods have evolved from correlation-filter-based trackers~\citep{VOT_Corr1,VOT_Corr2} to deep Siamese trackers~\citep{VOT_SiamFC,VOT_Siam1,VOT_Siam2} and Transformer-based trackers~\citep{VOT_Transformer1,VOT_Transformer2,VOT_Transformer3}. 
Among deep Siamese trackers, SiamFC established fully convolutional Siamese networks as a representative paradigm for generic object tracking~\citep{VOT_SiamFC}. 
Its simplicity and representative design make SiamFC a suitable carrier for evaluating our ANN--CANN hybridization framework: it preserves the discriminative target-state estimation capability of an ANN while reducing the influence of task-irrelevant architectural factors.
Driven by applications in embodied perception~\citep{VOT_Embodied}, robotics~\citep{VOT_Robot}, and autonomous systems~\citep{Dataset_UAV}, VOT benchmarks have also expanded toward long-term sequences, open-world categories, and diverse environmental variations~\citep{Dataset_TrackingNet,Dataset_LaSOT,Dataset_GOT-10k,Dataset_NfS}. 

While existing VOT models have advanced discriminative visual estimation through stronger matching, representation, or attention mechanisms, the explicit dynamical maintenance of continuously and stably evolving target states remains less systematically explored. 
Our ANN--CANN hybridization framework addresses this gap by aligning visual response maps with brain-inspired attractor-based state dynamics, providing a theory-grounded route toward accurate and temporally stable tracking under environmental variations.

\section{Preliminaries}

\subsection{A Canonical 2D Model of CANN}
\label{sec:2d-cann}

The defining dynamical properties of CANNs are translationally invariant recurrent interactions, local excitation, and global inhibition, which give rise to a continuous family of bump states and enable stable tracking of external stimuli. 
Here, we introduce the canonical 2D CANN model established in classical works~\citep{SiWuCANN1,SiWuCANN2}:

\paragraph{2D continuous dynamics.}
Let \( \bm x=(x,y)\in\mathbb R^2 \) denote a coordinate in the two-dimensional state space.
Denote by \( U(\bm x,t) \) the synaptic input at time \( t \) to neurons whose preferred state is \( \bm x \), and by \( r(\bm x,t) \) the firing rate of these neurons.
The standard 2D-CANN dynamics can be written as
\begin{equation}
\label{equ:CANN-2d-general}
\begin{aligned}
\tau_c \frac{\partial U(\bm x,t)}{\partial t}
&=
-U(\bm x,t)
+
I_{\mathrm{ext}}(\bm x,t)
+
\rho_{\mathrm{n}} \int_{\mathbb R^2} J(\bm x-\bm x')\, r(\bm x',t)\, \mathrm d\bm x', \\
r(\bm x,t)
&=
\frac{U^2(\bm x,t)}
{1 + k\rho_{\mathrm{n}} \int_{\mathbb R^2} U^2(\bm x',t)\, \mathrm d\bm x'}, \\
J(\bm x-\bm x')
&=
\frac{A}{2\pi a^2}
\exp\!\left(
-\frac{\lVert \bm x-\bm x' \rVert^2}{2a^2}
\right),
\end{aligned}
\end{equation}
where \( \tau_c\) is the time constant, \( I_{\mathrm{ext}}(\bm x,t) \) is the external input, \( \rho_{\mathrm{n}} \) is the neural density, \( k \) controls the strength of divisive normalization, and \( J(\bm x-\bm x') \) is a translationally invariant Gaussian interaction kernel with amplitude \( A \) and width \( a \). 
This 2D model admits a theoretically tractable analysis of bump dynamics and tracking behavior.

\paragraph{Stationary bump family.}
When the external input vanishes, i.e., \( I_{\mathrm{ext}}(\bm x,t)=0 \), the network admits a continuous family of stationary bump states indexed by any center \( \bm z\in\mathbb R^2 \):
\begin{equation}
\label{equ:CANN-2d-solution}
\begin{aligned}
U(\bm x \mid \bm z)
&=
U_0
\exp\!\left(
-\frac{\lVert \bm x-\bm z \rVert^2}{4a^2}
\right), \\
r(\bm x \mid \bm z)
&=
r_0
\exp\!\left(
-\frac{\lVert \bm x-\bm z \rVert^2}{2a^2}
\right),
\end{aligned}
\end{equation}
where \( U_0 \) and \( r_0 \) are constants determined by the network parameters~\citep{SiWuCANN2}. 
Because these stationary states exist for arbitrary \( \bm z \), the network can continuously represent target locations in the 2D state space while preserving a stable Gaussian bump profile under moderate perturbations.
External inputs can drive the bump to translate along the continuous family of stationary states, whereas non-translational modes that deform the bump shape decay over time.
This property underlies the reduced center dynamics used later in our theoretical analysis~\citep{SiWuCANN1,SiWuCANN2}.

\paragraph{Discrete implementation on a toroidal grid.}
For practical computation, the 2D continuous network is discretized onto an \( N\times N \) toroidal grid, yielding matrix-valued states \( \bm U(t) \) and \( \bm r(t) \). 
The discrete network dynamics are then given by
\begin{equation}
\label{equ:CANN-discrete-update}
\begin{aligned}
\tau_c \frac{\bm U(t+\Delta t)-\bm U(t)}{\Delta t}
&=
-\bm U(t)
+
\bm I_{\mathrm{ext}}(t)
+
\bm J \circledast \bm r(t), \\
\bm r(t)
&=
\frac{\bm U(t)\odot \bm U(t)}
{1 + k \lVert \bm U(t)\rVert_F^2}, \\
\bm J(\bm p,\bm p')
&=
\frac{A}{2\pi a^2}
\exp\!\left(
-\frac{\mathrm{dis}(\bm p,\bm p')^2}{2a^2}
\right),
\end{aligned}
\end{equation}
where \( \Delta t \) is the discrete time step, \( \odot \) denotes element-wise multiplication, \( \lVert\cdot\rVert_F \) is the Frobenius norm, \( \circledast \) denotes circular convolution, and \( \mathrm{dis}(\bm p,\bm p') \) is the toroidal distance between grid locations.

\subsection{Fully-convolutional Siamese Neural Networks}
\label{sec:prelim_siamfc}

\begin{algorithm}[!htbp]
\caption{SiamFC-style online inference at frame \(k\)}
\label{alg:siamfc_inference}
\begin{algorithmic}[1]
\REQUIRE Current frame \( \bm V^{(k)} \), template \( \bm V_{\mr{temp}}^{(0)} \), previous target center \( \bm c^{(k-1)} \), previous target size \( \bm b^{(k-1)} \), scale-index set \( \mathcal S \)
\ENSURE Updated target center \( \bm c^{(k)} \), updated target size \( \bm b^{(k)} \)
\FOR{each scale index \( s\in\mathcal S \)}
    \STATE Construct the scaled search image \( \bm V_{\mr{search},s}^{(k)} \) according to Eq.~\eqref{eq:siamfc_scaled_search_region}
    \STATE Compute the response map \( \bm S_s^{(k)} \) according to Eq.~\eqref{eq:siamfc_scaled_response}
    \STATE Compute the penalized peak score \( \pi_s \max_{(i,j)} \bm S_s^{(k)}(i,j) \)
\ENDFOR
\STATE Select the scale index \( s^{(k)} \) according to Eq.~\eqref{eq:siamfc_scale_select}
\STATE Estimate the response peak \( (i^{(k)},j^{(k)}) \) from \( \bm S_{s^{(k)}}^{(k)} \)
\STATE Convert the response peak into image-coordinate displacement \( \bm d^{(k)} \)
\STATE Update the target center according to Eq.~\eqref{eq:siamfc_center_update}
\STATE Update the target size according to Eq.~\eqref{eq:siamfc_scale_update}
\RETURN \( \bm c^{(k)} \), \( \bm b^{(k)} \)
\end{algorithmic}
\end{algorithm}

SiamFC~\citep{VOT_SiamFC} is a seminal tracker that uses Siamese matching for visual object tracking. It achieves substantial performance gains while maintaining real-time speed.
Given the target template \( \bm V_{\mr{temp}}^{(0)} \) extracted from the first frame and a search image from the \(k\)-th frame, SiamFC uses a shared convolutional backbone \( \varphi(\cdot) \) to extract their feature representations and computes a response map $\bm S^{(k)}$ by cross-correlation.
For a search image \( \bm V_{\mr{search}}^{(k)} \), the response map is defined as
\begin{equation}
    \bm S^{(k)}
    =
    \varphi\!\left(\bm V_{\mr{temp}}^{(0)}\right)
    \star
    \varphi\!\left(\bm V_{\mr{search}}^{(k)}\right),
\end{equation}
where \( \star \) denotes cross-correlation.
Each element \( \bm S^{(k)}(i,j) \) measures the matching score between the target template and the candidate region corresponding to response-grid location \( (i,j) \).
The target center is estimated from the peak displacement of the response map.

Beyond the response-map computation, SiamFC employs an online inference rule that restricts the search region according to the previous prediction and evaluates multiple candidate scales.
Let
\[
\bm c^{(k-1)}
=
(x_{\mr{center}}^{(k-1)}, y_{\mr{center}}^{(k-1)})
\]
denote the previously predicted target center, and let
\[
\bm b^{(k-1)}
=
(w^{(k-1)}, h^{(k-1)})
\]
denote the previous target size.
At frame \(k\), the nominal search region is centered at \( \bm c^{(k-1)} \), and its spatial extent is chosen proportional to \( \bm b^{(k-1)} \):
\begin{equation}
\bm V_{\mr{search}}^{(k)}
=
\mr{Crop}\!\left(
\bm V^{(k)},
\bm c^{(k-1)},
\varsigma\,\bm b^{(k-1)}
\right),
\label{eq:siamfc_search_region}
\end{equation}
where \( \bm V^{(k)} \) is the current frame, \( \mr{Crop}(\bm V,\bm c,\bm b) \) extracts an image region centered at \( \bm c \) with spatial size specified by \( \bm b \), and \( \varsigma>1 \) is a context factor.

To handle scale changes, SiamFC evaluates multiple scaled search regions in parallel.
Let the scale-index set be
\begin{equation}
\mathcal S = \{-m,\dots,-1,0,1,\dots,m\},
\label{eq:siamfc_scale_set}
\end{equation}
where \(m\in\mathbb N\) denotes the number of scale levels on each side of the nominal scale.
Let \(\eta>1\) denote the scale step, so that the scale factor corresponding to \(s\in\mathcal S\) is \(\eta^s\).
For each scale index \(s\), the scale-specific search image is constructed as
\begin{equation}
\bm V_{\mr{search},s}^{(k)}
=
\mr{Resize}_{R_{\mr{search}}}\!\left(
\mr{Crop}\!\left(
\bm V^{(k)},
\bm c^{(k-1)},
\varsigma\,\eta^s\,\bm b^{(k-1)}
\right)
\right),
\label{eq:siamfc_scaled_search_region}
\end{equation}
where \(R_{\mr{search}}\) denotes the fixed search-input resolution of SiamFC, and \( \mr{Resize}_{R_{\mr{search}}}(\cdot) \) resizes the cropped region to this resolution.
The response map at scale index \(s\) is then computed as
\begin{equation}
\bm S_s^{(k)}
=
\varphi\!\left(\bm V_{\mr{temp}}^{(0)}\right)
\star
\varphi\!\left(\bm V_{\mr{search},s}^{(k)}\right).
\label{eq:siamfc_scaled_response}
\end{equation}

The scale is selected according to the penalized peak response:
\begin{equation}
s^{(k)}
=
\arg\max_{s\in\mathcal S}
\left[
\pi_s
\max_{(i,j)}
\bm S_s^{(k)}(i,j)
\right],
\label{eq:siamfc_scale_select}
\end{equation}
where \( \pi_s \) is the scale-penalty factor for scale index \(s\).
After selecting \(s^{(k)}\), the response peak
\begin{equation}
(i^{(k)},j^{(k)})
=
\arg\max_{(i,j)}
\bm S_{s^{(k)}}^{(k)}(i,j)
\end{equation}
is converted into an image-coordinate displacement by the standard SiamFC coordinate mapping:
\begin{equation}
\bm d^{(k)}
=
\mr{MapToImage}\!\left(i^{(k)},j^{(k)},s^{(k)}\right),
\end{equation}
where \( \mr{MapToImage}(\cdot) \) denotes the conversion from response-grid displacement to image-coordinate displacement under the selected scale.
The target center is then updated as
\begin{equation}
\bm c^{(k)}
=
\bm c^{(k-1)}+\bm d^{(k)}.
\label{eq:siamfc_center_update}
\end{equation}
The target size is updated by damped interpolation:
\begin{equation}
\bm b^{(k)}
=
(1-\zeta)\bm b^{(k-1)}
+
\zeta\,\eta^{s^{(k)}}\bm b^{(k-1)},
\label{eq:siamfc_scale_update}
\end{equation}
where \( \zeta\in(0,1] \) is a damping factor.
Equivalently, the two components of \( \bm b^{(k)}=(w^{(k)},h^{(k)}) \) are updated as
\[
w^{(k)}
=
(1-\zeta)w^{(k-1)}
+
\zeta\,\eta^{s^{(k)}}w^{(k-1)},
\qquad
h^{(k)}
=
(1-\zeta)h^{(k-1)}
+
\zeta\,\eta^{s^{(k)}}h^{(k-1)}.
\]

Algorithm~\ref{alg:siamfc_inference} summarizes the SiamFC-style online inference procedure used in this paper.
For controlled comparison, the same online inference rule is used in the experimental evaluation.

From the perspective of ANN--CANN hybridization, SiamFC serves as a representative data-driven ANN estimator.
It infers the current target state from appearance matching and produces a discriminative response map over candidate target locations.
This frame-wise estimation mechanism provides accurate current-state estimation, while leaving temporally continuous state evolution to be modeled by the CANN branch.

\section{Method}

Here, we introduce a systematic ANN--CANN hybridization framework.
In Section~\ref{sec:framework_setting}, we first formulate VOT as a two-dimensional continuous-state estimation problem and align ANN and CANN branches over the same target-center state space.
In Section~\ref{subsec:bias_variance_complementarity}, we characterize the bias-variance complementarity between the two branches in an idealized one-step setting.
Furthermore, Section~\ref{subsec:mse_of_additive_hybridization} shows how convex additive fusion can utilize this complementarity to achieve a lower estimation error, thereby realizing functional synergy.
Section~\ref{subsec:continuous_time_response_map_dynamics} extends the analysis to continuous-time tracking.
In Section~\ref{sec:htnn_architecture}, we present HTNN as a concrete tracking architecture that combines a SiamFC-based ANN branch with a training-free CANN branch and implements the theoretical framework through representation-fusion and estimation-fusion.

\subsection{State-Space Alignment of ANNs and CANNs}
\label{sec:framework_setting}

\begin{figure}[pos=!htbp]
    \centering
    \includegraphics[width=1.0\linewidth]{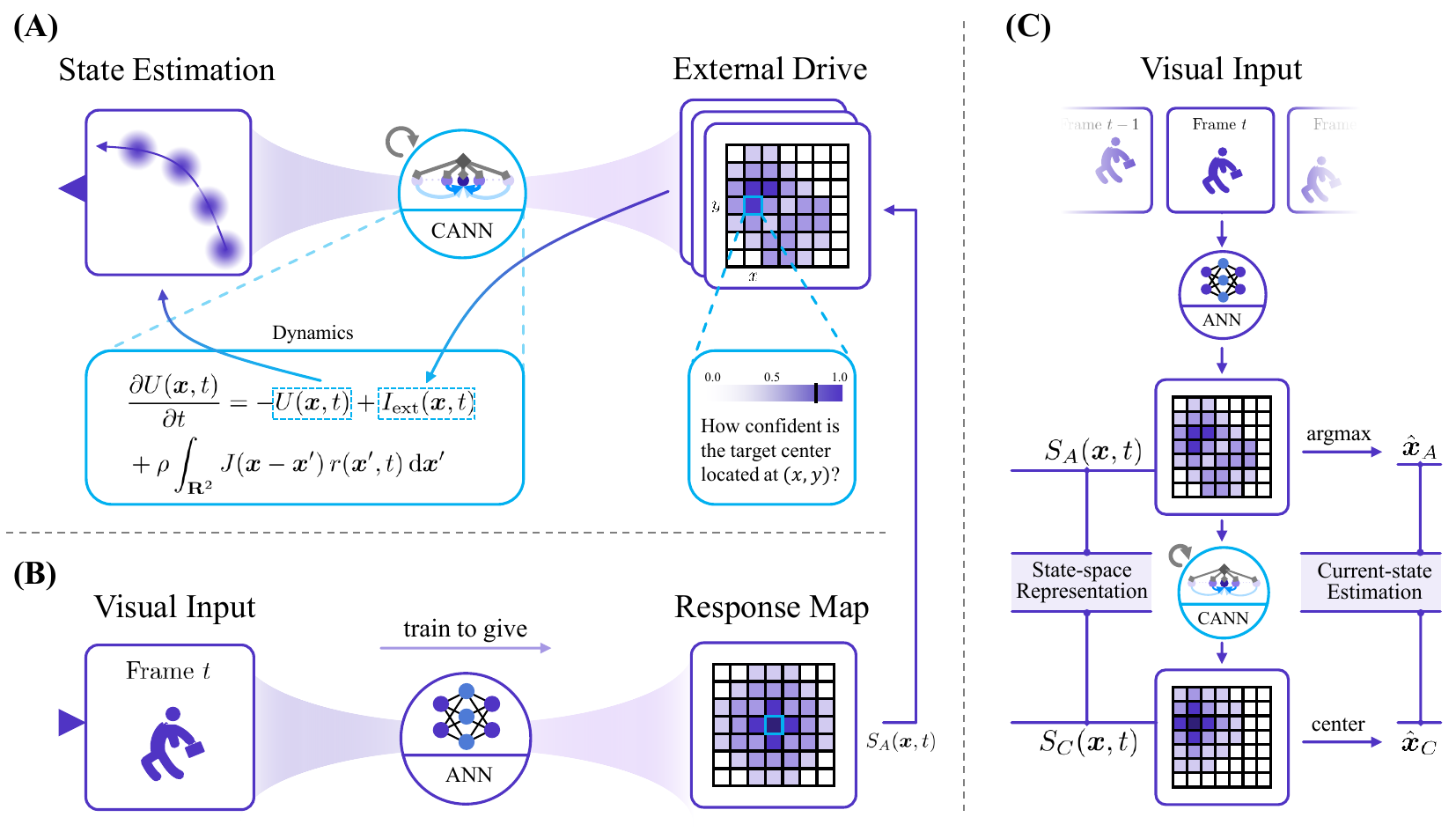}
    \caption{
    State-space alignment of ANNs and CANNs.
    (A) The CANN represents the target state through an activity bump of \(U(\bm x,t)\), whose evolution is driven by the external input \(I_{\mathrm{ext}}(\bm x,t)\).
    The value of \(I_{\mathrm{ext}}(\bm x,t)\) indicates how strongly candidate state \(\bm x\) is supported by the current visual evidence.
    (B) The ANN maps visual input to a response map \(S_A(\bm x,t)\) over candidate target states, where each value scores how likely the target center is to be located at \(\bm x\).
    (C) The aligned framework organizes the ANN and CANN branches over the same target-state space.
    The ANN response map \(S_A(\bm x,t)\) serves as the CANN external drive, while both \(S_A(\bm x,t)\) and the CANN activity bump, denoted as \(U(\bm x,t)\), can be used as state-space representations for current-state estimation and fusion before final decoding.
    }
    \label{fig:framework_setting}
\end{figure}

We first formulate VOT as the problem of estimating an evolving target state from a video stream.
In this work, we focus on the target center \(\bm x^*(t)\in\mathbb R^2\) as the continuous state in the two-dimensional state space.
The tracker is expected to estimate this state as the target moves over time.
As described by the 2D-CANN dynamics in Eq.~\eqref{equ:CANN-2d-general} and illustrated in Figure~\ref{fig:framework_setting}(A), the CANN branch represents a continuous state through an activity bump of \(U(\bm x,t)\).
To make this bump follow a moving target, the CANN requires an external input \(I_{\mathrm{ext}}(\bm x,t)\) to drive its evolution as the target state changes.
At time \(t\), the value of \(I_{\mathrm{ext}}(\bm x,t)\) should indicate how likely the target center is to be at candidate state \(\bm x\) according to the current visual evidence.
With this drive, the activity bump is guided toward the target-centered region, so that the bump center can be decoded as the CANN estimate of the current state.
Therefore, combining an ANN with a CANN requires the visual branch to provide evidence over the same target-state space.

The ANN branch can directly learn from visual data to predict the target center for each input frame.
To make this prediction compatible with the CANN input, the ANN can also be trained to output a response map \(S_A(\bm x,t)\) over candidate target states.
Each value of \(S_A(\bm x,t)\) lies in \([0,1]\) and indicates how likely the target center is to be at candidate state \(\bm x\) at time \(t\).
Through supervised learning with a Gaussian-shaped target response centered at the annotated target center, \(S_A(\bm x,t)\) can be optimized to be close to \(1\) near the ground-truth state and close to \(0\) at distant states, as shown in Figure~\ref{fig:framework_setting}(B).
In the basic aligned formulation, this response map can be directly used as the CANN external input, \(I_{\mathrm{ext}}(\bm x,t)=S_A(\bm x,t)\), allowing the ANN-derived visual evidence to drive the CANN bump as the target state changes.

Building on the two branches, Figure~\ref{fig:framework_setting}(C) illustrates how the ANN and CANN branches are organized within the aligned framework.
At each time \(t\), the ANN receives the current visual input and produces a response map \(S_A(\bm x,t)\), which serves as the CANN external drive \(I_{\mathrm{ext}}(\bm x,t)\).
The ANN response map can be decoded by \(\arg\max\) to obtain the ANN estimate \(\hat{\bm x}_A(t)\), while the CANN activity bump can be decoded by its center to obtain the CANN estimate \(\hat{\bm x}_C(t)\); these two estimates correspond to the same target state and can therefore be fused.
Meanwhile, the ANN response map \(S_A(\bm x,t)\) and the CANN activity bump \(U(\bm x,t)\) can both be viewed as state-space representations with time as an argument, namely functions from \(\mathbb R^2 \times \mathcal T\) to \(\mathbb R\), where \(\mathcal T\) denotes the time domain.
At each \((\bm x,t)\), their values indicate how strongly each branch supports candidate state \(\bm x\) at time \(t\), which makes them also compatible for fusion before decoding the final target state.
Together, this state-space alignment organizes ANN and CANN as two estimation branches for the same continuous target state, providing the basis for analyzing their functional complementarity and how fusion can exploit it.

\subsection{Bias--Variance Complementarity}
\label{subsec:bias_variance_complementarity}

\begin{figure}[pos=!htbp]
    \centering
    \includegraphics[width=1.0\linewidth]{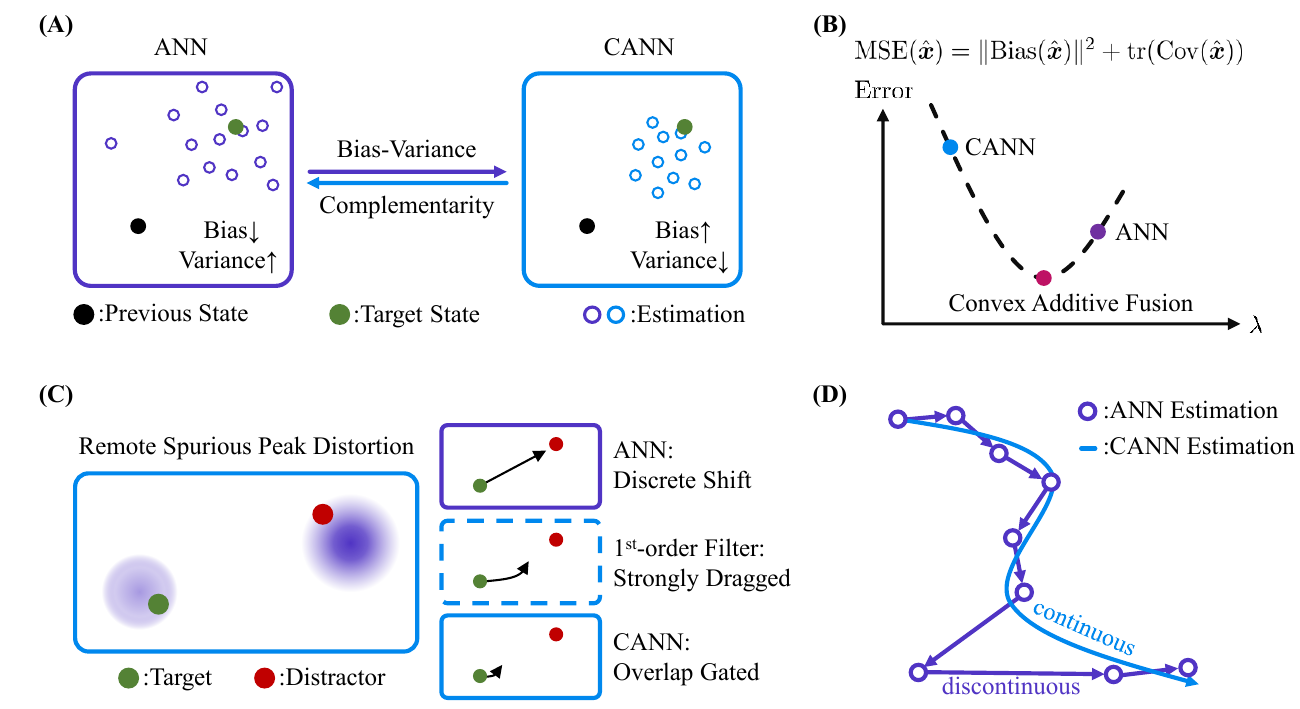}
    \caption{
    Intuition of bias--variance complementarity, convex additive fusion, and continuous-time tracking stability in ANN--CANN hybridization.
    (A) In the one-step setting, the ANN estimator provides an asymptotically unbiased current-state estimate but remains sensitive to response-map fluctuations, whereas the CANN estimator can reduce variance through attractor dynamics under suitable dynamical-parameter choices while introducing finite-response lag.
    (B) Under the bias--variance decomposition of MSE, convex additive fusion balances the complementary error profiles of the two estimators and can achieve a lower estimation error than either pure-estimator endpoint under the derived conditions.
    (C) Under a remote spurious peak distortion, ANN argmax estimation may jump to the distractor, and a first-order center filter driven by decoded coordinates can be strongly dragged toward the remote peak.
    In contrast, CANN dynamics receives the full response map, so the remote peak affects the CANN center through a spatial-overlap-gated velocity contribution and is attenuated when its overlap with the current activity bump is weak.
    (D) Even when ANN argmax estimates switch discontinuously between response peaks, the CANN center evolves through continuous-time dynamics, providing continuous trajectory-level state evolution.
    }
    \label{fig:theory_intuition}
\end{figure}

The proposed alignment bypasses direct structural coupling between the two heterogeneous modules.
However, structural heterogeneity still induces distinct functional roles in state estimation.
A non-trivial hybridization, therefore, requires characterizing their estimation properties and explaining how their advantages can be utilized synergistically.

To achieve this, in the following, we first formulate an idealized one-step setting and define the bias and covariance used to quantify their estimation error. After that, we analyze how the response-map peak of the ANN branch yields an asymptotically unbiased current-state estimate and how response-map fluctuations contribute to its variance. Similarly, for the CANN branch, we analyze how finite-time bump evolution introduces a finite-response bias while filtering response-map perturbations through attractor dynamics.
Finally, we characterize the complementary error structures of the two estimators as shown in Figure~\ref{fig:theory_intuition}(A), thereby identifying their distinct functional advantages.

\paragraph{One-step tracking setting.}
Because VOT videos are sampled by cameras at finite frame rates, the continuous evolution of a target is observed through discrete frame transitions.
We therefore start from an idealized one-step setting, which captures the elementary estimation problem faced by both branches at each frame transition.
Specifically, consider a target moving from the previous state \(\bm x_0\) to the current state \(\bm x_1=\bm x_0+\bm\Delta\), where \(\bm x_0,\bm x_1,\bm\Delta\in\mathbb R^2\), and define \(d:=\|\bm\Delta\|^2\).
The ANN and CANN branches are both treated as estimators of the current state \(\bm x_1\) under this single-step transition.
Expectations, biases, and covariances are taken conditional on the relevant true state or motion when the conditioning variable is clear from context.
For an estimator \(\hat{\bm x}\) of \(\bm x_1\), we write
\(\operatorname{Bias}(\hat{\bm x}) := \mathbb E[\hat{\bm x}]-\bm x_1\) and
\(\operatorname{Cov}(\hat{\bm x}) := \mathbb E[(\hat{\bm x}-\mathbb E[\hat{\bm x}])(\hat{\bm x}-\mathbb E[\hat{\bm x}])^\top]\).
For a vector-valued estimator, its scalar variance is given by the trace of its covariance matrix.

\paragraph{ANN estimator.}
We analyze how finite-sample fluctuations in the ANN response map affect the bias and covariance of the ANN estimator in the one-step setting.
Let \(\bm u=\bm x-\bm x_1\) be the local coordinate, where \(\bm x\in\mathbb R^2\) denotes a candidate target state.
The supervised heatmap is defined as
\(g_h(\bm x-\bm x^\ast)=\exp(-\|\bm x-\bm x^\ast\|^2/(2h^2))\), where \(\bm x^\ast\) is the annotated target center and \(h>0\) is the heatmap width.
We assume isotropic Gaussian annotation noise \(\bm x^\ast\mid\bm x_1\sim\mathcal N(\bm x_1,\sigma_{\mathrm{lab}}^2I_2)\), where \(\sigma_{\mathrm{lab}}^2\) is the annotation-noise variance and \(I_2\) is the \(2\times2\) identity matrix.
Under squared-loss regression, the Bayes response of the supervised heatmap regression problem is
\(f^\star(\bm u)=M\exp(-\|\bm u\|^2/(2\rho^2))\), where \(\rho^2=h^2+\sigma_{\mathrm{lab}}^2\) and \(M>0\) is the induced maximum amplitude.
This response represents the ideal local ANN response around the current target state, and the finite-sample analysis below studies how the learned response deviates from it.

For an effective training sample size \(n\), the learned ANN response is written as
\(S_A^{(n)}(\bm u)=f^\star(\bm u)+\delta_n(\bm u)\).
Here, \(S_A^{(n)}\) denotes the response map learned from \(n\) samples, and \(\delta_n\) denotes the finite-sample response fluctuation around the Bayes response.
Under standard regularity assumptions, \(\sqrt n\,\delta_n\Rightarrow G\) in \(C^2(\mathcal B)\), where \(\mathcal B\) is a local neighborhood of \(\bm 0\), \(C^2(\mathcal B)\) is the space of twice continuously differentiable functions on \(\mathcal B\), \(\Rightarrow\) denotes convergence in distribution, and \(G\) is a zero-mean Gaussian random field.
This assumption gives \(\sqrt n\nabla\delta_n(\bm 0)\Rightarrow\nabla G(\bm 0)\) with \(\mathbb E[\nabla G(\bm 0)]=\bm 0\).
For closed-form constants, we use the local covariance model
\(\operatorname{Cov}(G(\bm u),G(\bm v))=\sigma_G^2\exp(-\|\bm u-\bm v\|^2/(2\xi^2))\), where \(\sigma_G^2>0\) is the fluctuation magnitude and \(\xi>0\) is the correlation length.
We assume \(\rho>\xi\), meaning that the ideal response is broader than the local finite-sample fluctuation.

\begin{proposition}[Asymptotic unbiasedness and covariance of the ANN estimator]
\label{prop:ann_unbiased_2d_main}
Under the local ANN response model
\(S_A^{(n)}(\bm u)=f^\star(\bm u)+\delta_n(\bm u)\), with
\(f^\star(\bm u)=M\exp(-\|\bm u\|^2/(2\rho^2))\), let \(\hat{\bm u}_A\) be the local maximizer of \(S_A^{(n)}(\bm u)\) around \(\bm u=\bm 0\), and define the ANN estimator as \(\hat{\bm x}_A=\bm x_1+\hat{\bm u}_A\).
Here, \(M\) is the ideal response amplitude, \(\rho\) is the Bayes-response width, and \(\sigma_G^2\) and \(\xi\) are the fluctuation magnitude and correlation length in the local Gaussian-field covariance model.
Then
\begin{equation}
\hat{\bm x}_A
=
\bm x_1+\frac{\rho^2}{M}\nabla\delta_n(\bm 0)+o_p(n^{-1/2}),
\end{equation}
where \(o_p(n^{-1/2})\) denotes a stochastic remainder smaller than \(n^{-1/2}\) in probability.
Consequently,
\begin{equation}
\operatorname{Bias}_A
:=
\operatorname{Bias}(\hat{\bm x}_A)
=
o(n^{-1/2}),
\end{equation}
and
\begin{equation}
\Sigma_A
:=
\operatorname{Cov}(\hat{\bm x}_A\mid \bm x_1)
=
\frac{\sigma_G^2}{n}
\frac{\rho^4}{M^2\xi^2}I_2
+
o(n^{-1}).
\label{eq:sigma_A_2d_main}
\end{equation}
where \(o(n^{-1})\) denotes a deterministic remainder smaller than \(n^{-1}\).
Thus
\begin{equation}
s_A
:=
\operatorname{tr}(\Sigma_A)
=
\frac{2\sigma_G^2\rho^4}{nM^2\xi^2}
+
o(n^{-1}).
\end{equation}
\end{proposition}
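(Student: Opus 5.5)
The argument is a standard M-estimation (argmax-continuity) expansion of the first-order condition at the local maximizer, followed by a covariance computation from the Gaussian-field model.

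\emph{Step 1: consistency and the first-order condition.} Since \(\sqrt n\,\delta_n\Rightarrow G\) in \(C^2(\mathcal B)\), we have \(\|\delta_n\|_{C^2(\mathcal B)}=O_p(n^{-1/2})\). The Bayes response has a strict, nondegenerate maximum at \(\bm 0\), with
\[
\nabla f^\star(\bm 0)=\bm 0,\qquad \nabla^2 f^\star(\bm 0)=-\frac{M}{\rho^2}I_2 .
\]
A standard argmax-continuity argument (the perturbation is uniformly small in \(C^1\)) therefore gives a local maximizer \(\hat{\bm u}_A\to\bm 0\) in probability. At this maximizer, \(\nabla S_A^{(n)}(\hat{\bm u}_A)=\bm 0\).

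\emph{Step 2: Taylor expansion around \(\bm 0\).} Expanding the first-order condition gives
\[
\bm 0=\nabla f^\star(\bm 0)+\nabla^2 f^\star(\tilde{\bm u})\hat{\bm u}_A+\nabla\delta_n(\bm 0)+\nabla^2\delta_n(\tilde{\bm u}')\hat{\bm u}_A ,
\]
where the second-order terms are taken row-wise with mean-value points. Since \(\nabla^2\delta_n=O_p(n^{-1/2})\) and \(\nabla^2 f^\star\) is continuous, the first term vanishes and the Hessian factor equals \(-\tfrac{M}{\rho^2}I_2+o_p(1)\). Solving for \(\hat{\bm u}_A\) first shows \(\hat{\bm u}_A=O_p(n^{-1/2})\). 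Substituting this rate back in yields
\[
\hat{\bm u}_A=\frac{\rho^2}{M}\nabla\delta_n(\bm 0)+o_p(n^{-1/2}),
\]
which is the stated expansion for \(\hat{\bm x}_A=\bm x_1+\hat{\bm u}_A\).

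\emph{Step 3: bias and covariance.} Multiplying by \(\sqrt n\) gives \(\sqrt n\,\hat{\bm u}_A\Rightarrow \tfrac{\rho^2}{M}\nabla G(\bm 0)\). Since \(\mathbb E[\nabla G(\bm 0)]=\bm 0\), the bias is \(o(n^{-1/2})\). The covariance of \(\nabla G(\bm 0)\) follows from differentiating the kernel \(K(\bm u,\bm v)=\sigma_G^2 e^{-\|\bm u-\bm v\|^2/(2\xi^2)}\):
\[
\operatorname{Cov}(\partial_i G(\bm 0),\partial_j G(\bm 0))=\partial_{u_i}\partial_{v_j}K\big|_{\bm u=\bm v=\bm 0}=\frac{\sigma_G^2}{\xi^2}\delta_{ij}.
\]
Multiplying by \((\rho^2/M)^2/n\) gives \(\Sigma_A\), and taking the trace gives \(s_A=2\sigma_G^2\rho^4/(nM^2\xi^2)+o(n^{-1})\). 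The assumption \(\rho>\xi\) is only an interpretive scale condition and plays no role in the algebra.

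\emph{Main obstacle.} The delicate step is passing from convergence in distribution to convergence of moments. Statements like \(\operatorname{Bias}=o(n^{-1/2})\) and \(\Sigma_A=\ldots+o(n^{-1})\) concern expectations, but the \(o_p\) remainder alone does not control them. I would close this gap by adding, as part of the "standard regularity assumptions," uniform integrability of \(n\|\hat{\bm u}_A\|^2\). This can be obtained, for example, from uniformly bounded \((2+\epsilon)\)-th moments of \(\sqrt n\,\|\delta_n\|_{C^2(\mathcal B)}\), together with restricting the maximizer to \(\mathcal B\), so that the estimator is bounded. Under this condition, Vitali's theorem converts the distributional limits into the stated first- and second-moment asymptotics.
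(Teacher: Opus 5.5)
Your proposal is correct and follows the same route as the paper. Both arguments expand the first-order condition \(\nabla S_A^{(n)}(\hat{\bm u}_A)=\bm 0\) about \(\bm 0\) using \(\nabla^2 f^\star(\bm 0)=-(M/\rho^2)I_2\), solve for \(\hat{\bm u}_A\), and obtain \(\operatorname{Cov}(\nabla G(\bm 0))=(\sigma_G^2/\xi^2)I_2\) by differentiating the Gaussian kernel. Your mean-value bookkeeping and the uniform-integrability condition on \(n\|\hat{\bm u}_A\|^2\) tighten a step the paper skips: it moves from \(\sqrt n\,\nabla\delta_n(\bm 0)\Rightarrow\nabla G(\bm 0)\) straight to the bias and covariance statements without justifying convergence of moments.
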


Proposition~\ref{prop:ann_unbiased_2d_main} connects the finite-sample fluctuation of the ANN response map to the error of its argmax-based estimate.
Since the local maximizer is displaced by \(\nabla\delta_n(\bm 0)\), whose limiting distribution has zero mean, the relation \(\operatorname{Bias}_A=o(n^{-1/2})\) shows that the ANN estimator is asymptotically unbiased: as the effective training sample size \(n\) grows, the expected displacement of \(\hat{\bm x}_A\) from the current state \(\bm x_1\) vanishes.
At the same time, \(\Sigma_A\) quantifies how finite-sample response fluctuations perturb the argmax-based estimate.
Thus, in the one-step setting, the ANN branch provides an asymptotically unbiased current-state estimate, while its estimate remains sensitive to local response-map fluctuations.

\paragraph{CANN estimator.}
We next analyze how CANN dynamics transform the same response-map fluctuation into the bias and covariance of the CANN estimator.
Under the aligned formulation, the ANN response map serves as the CANN external drive, \(I_{\mathrm{ext}}(\bm x,t)=S_A^{(n)}(\bm x,t)\).
Since \(S_A^{(n)}=f^\star+\delta_n\), the CANN receives both the ideal target-centered response and the finite-sample fluctuation \(\delta_n\).
In a 2D-CANN, a small perturbation to the activity bump can either shift the bump center or deform its shape.
The center shift is governed by the perturbation components along the two translational modes, whereas non-translational deformation components are suppressed by recurrent dynamics in the local perturbation regime.
Let \(r_1\) and \(r_2\) denote the two translational modes associated with shifts along the two state-space coordinates.
We define
\(a_{1,n}:=\langle\delta_n,r_1\rangle\),
\(a_{2,n}:=\langle\delta_n,r_2\rangle\), and
\(\bm a_n=(a_{1,n},a_{2,n})^\top\), where \(\langle\cdot,\cdot\rangle\) denotes the \(L^2\) inner product over the state space.

Let \(\bm y(t)=(y_1(t),y_2(t))^\top\) denote the displacement of the CANN bump center relative to the current state \(\bm x_1\).
At the beginning of the response window, the bump is centered at the previous state \(\bm x_0\), so \(\bm y(0)=\bm x_0-\bm x_1=-\bm\Delta\).
The local perturbation dynamics of the center displacement can be written as
\(\dot y_i=-\beta y_i+\gamma_{\mathrm{dyn}}a_{i,n}+o_p(n^{-1/2})\) for \(i=1,2\).
Here,
\(\beta=8Ma^2\rho^2/[\tau_cU_0(\rho^2+2a^2)^2]\) and
\(\gamma_{\mathrm{dyn}}=\sqrt{2/\pi}/(\tau_cU_0)\), where \(a\) is the CANN bump width, \(U_0\) is the bump amplitude, and \(\tau_c\) is the CANN time constant defined in Eq.~\eqref{equ:CANN-2d-general}.
For a fixed response time \(T\), define \(\alpha=e^{-\beta T}\).
The CANN estimate after this response window is \(\hat{\bm x}_C=\bm x_1+\bm y(T)\).

\begin{proposition}[Finite-response bias and covariance of the CANN estimator]
\label{prop:cann_bv_2d_main}
Under the aligned formulation, the CANN branch is driven by the ANN response map
\(I_{\mathrm{ext}}=S_A^{(n)}\).
Around the current state \(\bm x_1\), write
\(S_A^{(n)}=f^\star+\delta_n\), where \(f^\star\) is the target-centered Bayes response and \(\delta_n\) is the finite-sample response fluctuation.
Let \(r_1\) and \(r_2\) be the two translational modes of the CANN bump, and set
\(a_{i,n}:=\langle \delta_n,r_i\rangle\) for \(i=1,2\), with
\(\bm a_n=(a_{1,n},a_{2,n})^\top\).
Let \(\hat{\bm x}_C\) be the CANN estimate obtained from the bump center after response time \(T\), with residual response factor \(\alpha=e^{-\beta T}\).
Then
\begin{equation}
\hat{\bm x}_C
=
\bm x_1-\alpha\bm\Delta
+
\frac{1-\alpha}{\beta}\gamma_{\mathrm{dyn}}\bm a_n
+
o_p(n^{-1/2}).
\label{eq:cann_estimator_expansion_main}
\end{equation}
Consequently,
\begin{equation}
\operatorname{Bias}_C
:=
\operatorname{Bias}(\hat{\bm x}_C)
=
-\alpha\bm\Delta
+
o(n^{-1/2}),
\end{equation}
and
\begin{equation}
\Sigma_C
:=
\operatorname{Cov}(\hat{\bm x}_C\mid \bm\Delta)
=
\frac{\sigma_G^2}{n}
(1-\alpha)^2
\frac{\xi^2(\rho^2+2a^2)^4}
{M^2\rho^4(\xi^2+4a^2)^2}
I_2
+
o(n^{-1}).
\label{eq:sigma_C_2d_main}
\end{equation}
Thus
\begin{equation}
s_C
:=
\operatorname{tr}(\Sigma_C)
=
\frac{2\sigma_G^2}{n}
(1-\alpha)^2
\frac{\xi^2(\rho^2+2a^2)^4}
{M^2\rho^4(\xi^2+4a^2)^2}
+
o(n^{-1}).
\end{equation}
\end{proposition}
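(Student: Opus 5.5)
The plan is to reduce everything to the scalar linear center dynamics stated just before the proposition, and then to compute second moments of the projected fluctuation \(\bm a_n\) using the Gaussian-field model already set up for Proposition~\ref{prop:ann_unbiased_2d_main}. I will take as given the reduced center equation \(\dot y_i=-\beta y_i+\gamma_{\mathrm{dyn}}a_{i,n}+o_p(n^{-1/2})\), with the stated \(\beta\) and \(\gamma_{\mathrm{dyn}}\), together with the initial condition \(\bm y(0)=-\bm\Delta\). This equation comes from projecting the CANN dynamics onto the translational modes \(r_1,r_2\) after linearizing the drive \(f^\star+\delta_n\) around the bump.

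\textbf{Step 1 (variation of constants).} The input \(a_{i,n}\) is constant over the response window, because the frame is fixed. Integrating the linear ODE exactly gives
\[
y_i(T)=e^{-\beta T}y_i(0)+\frac{1-e^{-\beta T}}{\beta}\gamma_{\mathrm{dyn}}a_{i,n}+o_p(n^{-1/2}).
\]
The remainder stays \(o_p(n^{-1/2})\) because \(T\) is fixed and the Gr\"onwall factor is bounded. Substituting \(\alpha=e^{-\beta T}\) and \(\hat{\bm x}_C=\bm x_1+\bm y(T)\) yields Eq.~\eqref{eq:cann_estimator_expansion_main}.

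\textbf{Step 2 (bias).} We have \(\sqrt n\,\bm a_n=\langle\sqrt n\,\delta_n,(r_1,r_2)\rangle\Rightarrow\langle G,(r_1,r_2)\rangle\), by continuity of the \(L^2\) pairing and the assumed convergence \(\sqrt n\,\delta_n\Rightarrow G\). Since \(G\) has zero mean, this gives \(\mathbb E[\bm a_n]=o(n^{-1/2})\). I will assume uniform integrability, which is the same regularity already used in Proposition~\ref{prop:ann_unbiased_2d_main} to pass from distributional limits to moments. Hence \(\operatorname{Bias}_C=-\alpha\bm\Delta+o(n^{-1/2})\).

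\textbf{Step 3 (covariance).} From the expansion, \(\Sigma_C=\big((1-\alpha)\gamma_{\mathrm{dyn}}/\beta\big)^2\operatorname{Cov}(\bm a_n)+o(n^{-1})\), with
\[
n\operatorname{Cov}(a_{i,n},a_{j,n})\to\sigma_G^2\iint r_i(\bm u)r_j(\bm v)\,e^{-\|\bm u-\bm v\|^2/(2\xi^2)}\,\mathrm d\bm u\,\mathrm d\bm v .
\]
The translational modes are the normalized derivatives of the stationary bump, \(r_i(\bm u)\propto u_i\exp(-\|\bm u\|^2/(4a^2))\), with the same normalization that produced \(\beta\) and \(\gamma_{\mathrm{dyn}}\). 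They are odd in \(u_i\) and even in the other coordinate, so the off-diagonal terms vanish and the matrix is isotropic, \(\propto I_2\).

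The diagonal entry is a Gaussian integral. I will write the kernel as the convolution of two Gaussians of variance \(\xi^2/2\), or equivalently Fourier transform. This reduces the integral to \(\|r_i * \phi_{\xi}\|^2\)-type quantities and produces a factor \(\xi^2/(\xi^2+4a^2)^2\) times powers of \(a\). With \(\gamma_{\mathrm{dyn}}^2/\beta^2=\tfrac{2}{\pi}(\rho^2+2a^2)^4/(64M^2a^4\rho^4)\), the claimed formula needs the diagonal entry to equal \(32\pi a^4\xi^2\sigma_G^2/(\xi^2+4a^2)^2\). Checking this identity is the verification target. Taking the trace then gives \(s_C\).

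\textbf{Main obstacle.} I expect the main obstacle to be Step~3's bookkeeping. The exact normalization of \(r_i\) must be the one consistent with the stated \(\gamma_{\mathrm{dyn}}=\sqrt{2/\pi}/(\tau_cU_0)\), and the double Gaussian integral must reproduce exactly the \((\xi^2+4a^2)^{-2}\) factor. I will first fix the normalization by requiring \(\langle r_i,\partial_i U(\cdot\mid\bm 0)\rangle\) to match the projection used in the center dynamics. I will then evaluate the integral by completing squares in the variables \(\bm u+\bm v\) and \(\bm u-\bm v\).
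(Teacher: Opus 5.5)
Your proposal is correct and matches the paper's proof: variation of constants on the reduced linear center equation, the bias from the zero-mean limit of \(\bm a_n\), and \(\mathrm{Var}(a_{i,n})\) via Parseval. With the unit-\(L^2\) modes \(r_i(\bm u)=u_i e^{-\|\bm u\|^2/(4a^2)}/(\sqrt{2\pi}a^2)\) one has \(\widehat r_1(\bm k)=-4i\sqrt{2\pi}\,a^2k_1e^{-a^2\|\bm k\|^2}\) and \(\widehat K(\bm k)=2\pi\sigma_G^2\xi^2e^{-\xi^2\|\bm k\|^2/2}\), which gives exactly your target \(32\pi a^4\xi^2\sigma_G^2/(\xi^2+4a^2)^2\). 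The only difference is scope: the paper also derives the reduced center equation itself, obtaining \(\beta\) from the first-order expansion of \(\langle f^\star,r_1(\cdot-\bm y)\rangle\) and \(\gamma_{\mathrm{dyn}}\) from \(\partial_{u_i}\tilde U=-U_0\sqrt{\pi/2}\,r_i\), whereas you take that equation as given from the setup stated before the proposition.
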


Proposition~\ref{prop:cann_bv_2d_main} connects finite-time bump evolution and translational-mode projection to the error of the CANN estimate.
The term \(-\alpha\bm\Delta\) shows the finite-response bias: after a finite response time \(T\), the initial displacement from the previous state is attenuated but not fully eliminated.
The stochastic term is determined by \(\bm a_n\), which is the projection of the ANN response fluctuation onto the CANN translational modes.
Under the zero-mean fluctuation model above, this projected fluctuation does not introduce a persistent systematic displacement, but it contributes to the covariance \(\Sigma_C\).
Thus, in the one-step setting, the CANN branch provides a biased estimate with temporal lag, while its covariance is shaped by how attractor dynamics filter response-map fluctuations through the bump-center modes.

\paragraph{Bias--variance complementarity.}
The preceding results show that the ANN and CANN estimators have complementary error structures in the one-step setting.
In terms of bias, the ANN estimator has \(\operatorname{Bias}_A=o(n^{-1/2})\), so its expected estimate approaches the current state \(\bm x_1\) as the effective training sample size increases, while the CANN estimator carries the finite-response bias \(-\alpha\bm\Delta+o(n^{-1/2})\) caused by incomplete bump convergence within the response window.
In terms of variance, comparing Eq.~\eqref{eq:sigma_A_2d_main} and Eq.~\eqref{eq:sigma_C_2d_main} gives
\begin{equation}
\Sigma_C
=
\bar\kappa^2\Sigma_A
+
o(n^{-1}),
\qquad
s_C
=
\bar\kappa^2s_A
+
o(n^{-1}),
\end{equation}
where
\begin{equation}
\bar\kappa
=
(1-\alpha)
\frac{\xi^2(\rho^2+2a^2)^2}
{\rho^4(\xi^2+4a^2)}.
\label{eq:variance_ratio_main}
\end{equation}
Thus, when the CANN parameters satisfy \(\bar\kappa<1\), the CANN estimator has a smaller variance term than the ANN estimator in this local perturbation model.
The parameter \(a\) controls how the response-map fluctuation overlaps with the translational modes of the bump, and the response time \(T\) enters through \(\alpha=e^{-\beta T}\), which also determines the finite-response bias.

A simple sufficient condition for \(\bar\kappa<1\) is
\begin{equation}
a^2
<
\frac{\rho^2(\rho^2-\xi^2)}{\xi^2}.
\label{eq:variance_ratio_bound_main}
\end{equation}
Together with the factor \((1-\alpha)^2\), this condition specifies a regime in which CANN dynamics attenuates the variance induced by local response-map fluctuations.

Together, this idealized one-step analysis establishes the bias--variance complementarity illustrated in Figure~\ref{fig:theory_intuition}(A).
The ANN branch provides an asymptotically unbiased current-state estimate, whereas the CANN branch can provide a lower-variance estimate through dynamical-parameter design, at the cost of finite-response lag.
This complementary error structure identifies the functional role of each branch in a setting matched to frame-based VOT.
Detailed derivations of this one-step bias--variance analysis are provided in Appendix~\ref{app:bias_variance_complementarity}.

\subsection{Functional Synergy through Additive Fusion}
\label{subsec:mse_of_additive_hybridization}

Having identified the bias--variance complementarity between ANN and CANN estimators, we next show how convex additive fusion can utilize their complementary advantages to realize functional synergy in ANN--CANN hybridization.
We use the mean-squared error (MSE) of the current-state estimate as the performance measure, with the bias--variance decomposition \(\operatorname{MSE}(\hat{\bm x})=\|\operatorname{Bias}(\hat{\bm x})\|^2+\operatorname{tr}(\operatorname{Cov}(\hat{\bm x}))\).

In the following, we first derive the MSE of a convex combination of the ANN and CANN estimates.
We then examine how adding a small contribution from the other branch improves each endpoint estimator.
Finally, we show that, under suitable dynamical-parameter choices, a convex fusion weight can achieve a smaller MSE than either individual branch, as illustrated in Figure~\ref{fig:theory_intuition}(B).

\paragraph{MSE of additive fusion.}
We derive the MSE of a convex additive estimator that combines the ANN and CANN estimates in the shared target-state space.
Since \(\hat{\bm x}_A\) and \(\hat{\bm x}_C\) estimate the same current state \(\bm x_1\), we define
\begin{equation}
\hat{\bm x}_H(\lambda)
=
\lambda\hat{\bm x}_A+(1-\lambda)\hat{\bm x}_C,
\qquad
\lambda\in[0,1].
\label{eq:hybrid_estimator_main}
\end{equation}
Here, \(\lambda=1\) recovers the ANN estimator, \(\lambda=0\) recovers the CANN estimator, and \(\lambda\in(0,1)\) gives a convex fusion of the two estimates.

Because the same response-map fluctuation affects both the ANN argmax-based estimate and the CANN bump-center estimate, their stochastic errors are correlated.
Let
\(\tilde{\bm e}_A:=\hat{\bm x}_A-\mathbb E[\hat{\bm x}_A]\) and
\(\tilde{\bm e}_C:=\hat{\bm x}_C-\mathbb E[\hat{\bm x}_C]\)
denote the centered stochastic errors of the two estimators.
Define \(\Sigma_{AC}:=\mathbb E[\tilde{\bm e}_A\tilde{\bm e}_C^\top]\) and \(s_{AC}:=\operatorname{tr}(\Sigma_{AC})\).
Under the local fluctuation model above, the cross term satisfies
\(s_{AC}=\chi_{AC}\sqrt{s_As_C}+o(n^{-1})\), where
\(\chi_{AC}=\xi^2(\xi^2+4a^2)/(\xi^2+2a^2)^2\in(0,1)\).

Using the bias and covariance results derived above, the MSE of the convex additive estimator is
\begin{equation}
\operatorname{MSE}_H(\lambda)
=
(1-\lambda)^2\alpha^2d
+
\lambda^2s_A
+
(1-\lambda)^2s_C
+
2\lambda(1-\lambda)s_{AC}
+
o(n^{-1}).
\label{eq:hybrid_mse_main}
\end{equation}
The first term is the retained CANN finite-response bias, the next two terms are the scalar variances of the ANN and CANN estimators, and the last term accounts for their shared response-map fluctuation.
Thus, \(\lambda\) controls how the fusion balances the ANN estimator's low bias and the CANN estimator's dynamically shaped variance.

\paragraph{Endpoint improvement.}
We next examine the two pure-estimator endpoints of the convex fusion, namely \(\lambda=1\) for the ANN estimator and \(\lambda=0\) for the CANN estimator, and ask whether a small contribution from the other branch can reduce the MSE.
Near the ANN endpoint, let \(\lambda=1-\epsilon\) with \(0<\epsilon\ll1\).
Then
\begin{equation}
\operatorname{MSE}_H(1-\epsilon)-\operatorname{MSE}_H(1)
=
-2\epsilon(s_A-s_{AC})+O(\epsilon^2).
\end{equation}
Thus, a small CANN contribution improves the ANN endpoint when \(s_{AC}<s_A\).
This condition follows from the variance-reduction regime identified above: if \(\bar\kappa<1\), then \(s_C<s_A\), and together with \(s_{AC}=\chi_{AC}\sqrt{s_As_C}+o(n^{-1})\) and \(\chi_{AC}\in(0,1)\), it yields \(s_{AC}<s_A\).

Near the CANN endpoint, let \(\lambda=\epsilon\) with \(0<\epsilon\ll1\).
Then
\begin{equation}
\operatorname{MSE}_H(\epsilon)-\operatorname{MSE}_H(0)
=
-2\epsilon(\alpha^2d+s_C-s_{AC})+O(\epsilon^2).
\end{equation}
The ANN contribution reduces the retained CANN lag term while also changing the stochastic term.
A sufficient condition for local improvement, independent of the motion magnitude \(d\), is \(s_{AC}<s_C\).
This condition can be achieved by choosing the response time \(T\) such that
\begin{equation}
\frac{\xi^2+4a^2}{\xi^2+2a^2}
<
\sqrt{1-\alpha}
\frac{\rho^2+2a^2}{\rho^2}.
\label{eq:sac_less_sc_condition_main}
\end{equation}
When \(\rho^2<\xi^2+2a^2\), a sufficiently long response time makes \(1-\alpha=1-e^{-\beta T}\) close enough to \(1\) to satisfy Eq.~\eqref{eq:sac_less_sc_condition_main}.
These endpoint expansions show that neither pure estimator is locally fixed under the complementary error structure: a small CANN contribution can improve the ANN endpoint, and a small ANN contribution can improve the CANN endpoint under the stated conditions.
This motivates considering convex weights away from the two endpoints, which we formalize next.

\paragraph{Convex fusion yields synergy.}
The endpoint expansions show that, under suitable dynamical-parameter choices, moving away from either pure-estimator endpoint can reduce the MSE.
We now convert this local endpoint observation into a convex-fusion result by minimizing the quadratic part of Eq.~\eqref{eq:hybrid_mse_main}.

\begin{theorem}[Convex additive fusion under dynamical-parameter design]
\label{thm:optimal_lambda_main}
Let the squared one-step target displacement be \(d:=\|\bm\Delta\|^2\), and let the CANN finite-response factor after response time \(T\) be \(\alpha:=e^{-\beta T}\in(0,1)\).
Let the scalar variance and cross-covariance terms be
\(s_A:=\operatorname{tr}(\Sigma_A)\),
\(s_C:=\operatorname{tr}(\Sigma_C)\), and
\(s_{AC}:=\operatorname{tr}(\Sigma_{AC})\).
Define
\begin{equation}
Q_H:=\alpha^2d+s_A+s_C-2s_{AC}.
\end{equation}
Suppose the bump width \(a\) and response time \(T\) are chosen so that \(s_{AC}<s_A\) and \(s_{AC}<s_C\).
Then \(Q_H>0\), and the minimizer of the quadratic part of Eq.~\eqref{eq:hybrid_mse_main} is
\begin{equation}
\lambda^\star
=
\frac{\alpha^2d+s_C-s_{AC}}
{\alpha^2d+s_A+s_C-2s_{AC}}.
\label{eq:lambda_star_main}
\end{equation}
Moreover, \(\lambda^\star\in(0,1)\), and the MSE at \(\lambda^\star\) is smaller than the MSE at both pure-estimator endpoints \(\lambda=1\) and \(\lambda=0\), up to the \(o(n^{-1})\) remainder in Eq.~\eqref{eq:hybrid_mse_main}.
\end{theorem}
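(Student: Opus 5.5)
The plan is to treat the quadratic part of Eq.~\eqref{eq:hybrid_mse_main} as an explicit one-variable quadratic in \(\lambda\) and read off the minimizer and the endpoint values. Write
\[
q(\lambda):=(1-\lambda)^2\alpha^2 d+\lambda^2 s_A+(1-\lambda)^2 s_C+2\lambda(1-\lambda)s_{AC}.
\]
Expanding in powers of \(\lambda\), the coefficient of \(\lambda^2\) is \(\alpha^2d+s_A+s_C-2s_{AC}=Q_H\). The linear coefficient is \(-2(\alpha^2d+s_C-s_{AC})\), and the constant term is \(\alpha^2 d+s_C\).

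The first step is positivity of \(Q_H\). Rewrite it as
\[
Q_H=\alpha^2 d+(s_A-s_{AC})+(s_C-s_{AC}).
\]
Here \(\alpha^2d\ge0\), and both brackets are strictly positive by the hypotheses \(s_{AC}<s_A\) and \(s_{AC}<s_C\), so \(Q_H>0\). Hence \(q\) is strictly convex, and setting \(q'(\lambda)=0\) gives the unique minimizer
\[
\lambda^\star=\frac{\alpha^2d+s_C-s_{AC}}{Q_H},
\]
which is Eq.~\eqref{eq:lambda_star_main}.

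The second step shows \(\lambda^\star\in(0,1)\). The numerator \(\alpha^2d+(s_C-s_{AC})\) is strictly positive. Also \(1-\lambda^\star=(s_A-s_{AC})/Q_H>0\). Both facts follow directly from the same two hypotheses.

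The third step is strict improvement over both endpoints. For a strictly convex quadratic with interior minimizer, \(q(\lambda)-q(\lambda^\star)=Q_H(\lambda-\lambda^\star)^2\). This gives
\[
q(1)-q(\lambda^\star)=\frac{(s_A-s_{AC})^2}{Q_H}>0,\qquad q(0)-q(\lambda^\star)=\frac{(\alpha^2d+s_C-s_{AC})^2}{Q_H}>0.
\]
These agree with the endpoint derivatives \(q'(1)=2(s_A-s_{AC})\) and \(q'(0)=-2(\alpha^2d+s_C-s_{AC})\) computed earlier. Since \(\operatorname{MSE}_H(\lambda)=q(\lambda)+o(n^{-1})\) uniformly on \([0,1]\), with the remainder inherited from Propositions~\ref{prop:ann_unbiased_2d_main} and~\ref{prop:cann_bv_2d_main}, the MSE comparison holds up to that remainder, as the statement claims.

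The only delicate point is the meaning of "up to the \(o(n^{-1})\) remainder." The gaps \((s_A-s_{AC})^2/Q_H\) are themselves of order \(n^{-1}\) when \(d=0\), since \(s_A,s_C,s_{AC}=O(n^{-1})\). So a strict comparison of the true MSEs needs the leading-order constants to give gaps of exact order \(n^{-1}\), not smaller. I would handle this by stating the conclusion at the level of the quadratic part, and noting that under the closed forms \(s_C=\bar\kappa^2 s_A\) and \(s_{AC}=\chi_{AC}\bar\kappa s_A\), with \(\bar\kappa\) and \(\chi_{AC}\) fixed, the leading gaps are positive multiples of \(n^{-1}\). That makes the comparison valid for large \(n\).
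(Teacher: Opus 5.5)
Your proposal is correct and follows the same route as the paper. You expand the quadratic part, write \(Q_H=\alpha^2d+(s_A-s_{AC})+(s_C-s_{AC})>0\), solve the stationarity condition for \(\lambda^\star\), use \(1-\lambda^\star=(s_A-s_{AC})/Q_H\) to place it in \((0,1)\), and get endpoint improvement from strict convexity; your explicit gaps \(Q_H(\lambda-\lambda^\star)^2\) just make that last step quantitative, and the paper's extra paragraph on choosing \(a\) and \(T\) is not needed for the statement as given.

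One caveat concerns only your closing remark. The claim that the leading gaps are positive multiples of \(n^{-1}\) needs \(d=O(n^{-1})\). For fixed \(d>0\) one has \(Q_H\to\alpha^2d\), so the ANN-endpoint gap \((s_A-s_{AC})^2/Q_H\) is \(O(n^{-2})\), which is below the \(o(n^{-1})\) remainder. In that regime only the quadratic-part comparison survives, and that is all the theorem asserts.
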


The inequalities \(s_{AC}<s_A\) and \(s_{AC}<s_C\) make both sides of the convex optimum positive: the numerator of \(\lambda^\star\) is \(\alpha^2d+s_C-s_{AC}>0\), and \(1-\lambda^\star=(s_A-s_{AC})/Q_H>0\).
They also give \(Q_H=(\alpha^2d+s_C-s_{AC})+(s_A-s_{AC})>0\), so the optimum lies strictly between the two endpoints.
Under the scale assumption \(\rho>\xi\), the feasible design range
\begin{equation}
\frac{\rho^2-\xi^2}{2}
<
a^2
<
\frac{\rho^2(\rho^2-\xi^2)}{\xi^2}
\label{eq:a_design}
\end{equation}
allows the two inequalities to be jointly satisfied when \(T\) is sufficiently large.

Theorem~\ref{thm:optimal_lambda_main} formalizes the intuition in Figure~\ref{fig:theory_intuition}(B).
The convex weight \(\lambda^\star\) balances the asymptotically unbiased ANN estimate with the lower-variance CANN estimate obtained under suitable dynamical-parameter choices, while accounting for finite-response lag and shared response-map fluctuations.
Thus, once the two branches are aligned in the same target-state space, additive fusion provides a simple mechanism for utilizing their complementary error structures.
In the idealized one-step setting, this realizes functional synergy in ANN--CANN hybridization.
The proof is given in Appendix~\ref{app:proof_hybrid_mse}.

\subsection{Continuous-Time Dynamics for Stable Tracking}
\label{subsec:continuous_time_response_map_dynamics}

The one-step analysis above shows how ANN and CANN estimates complement each other in bias and variance, and how convex additive fusion can utilize this complementarity at the estimation level.
Real tracking further requires the estimated state to evolve coherently over time and to remain stable when the ANN response is distorted by distractors.
We therefore extend the analysis to continuous-time dynamics.

In the following, we first show that, in a local constant-velocity regime, CANN center dynamics preserves the same bias--variance complementarity and supports synergistic fusion over time.
We then analyze a spurious remote peak regime, where CANN dynamics suppresses distractor-induced pulls and maintains continuous center evolution, as illustrated in Figure~\ref{fig:theory_intuition}(C) and (D).
Finally, we explain why stable estimation should fuse the ANN response map and the CANN activity bump before final decoding.

\paragraph{Local continuous-time complementarity and synergy.}
We first show that the ANN and CANN branch retains bias--variance complementarity and supports convex fusion in a local continuous-time regime.
Let \(\bm x^*(t)\in\mathbb R^2\) denote the ground-truth target trajectory, and write the ANN estimate as
\(\hat{\bm x}_A(t)=\bm x^*(t)+\bm\varepsilon(t)\), where \(\bm\varepsilon(t)\) is the ANN estimation error.
We assume
\(\mathbb E[\bm\varepsilon(t)]=\bm 0\),
\(\operatorname{Cov}(\bm\varepsilon(t))=\sigma_A^2I_2\), and
\(\operatorname{Cov}(\bm\varepsilon(t),\bm\varepsilon(s))=\sigma_A^2e^{-\eta_{\varepsilon}|t-s|}I_2\), where \(\sigma_A^2>0\) is the instantaneous error variance and \(\eta_{\varepsilon}>0\) is the temporal decorrelation rate.
Under the local constant-velocity approximation
\(\bm x^*(t)=\bm x^*(0)+\bm vt\), where \(\bm v\in\mathbb R^2\) is the local target velocity, the translational-mode reduction of the CANN center dynamics gives
\begin{equation}
\tau_{\mathrm{ct}}\dot{\hat{\bm x}}_C(t)+\hat{\bm x}_C(t)=\hat{\bm x}_A(t),
\end{equation}
where \(\hat{\bm x}_C(t)\) is the CANN center estimate and \(\tau_{\mathrm{ct}}>0\) is the effective center-dynamics time constant.

Taking bias relative to the ground-truth state \(\bm x^*(t)\), this local model gives
\begin{equation}
\begin{aligned}
\operatorname{Bias}_A(t)&=\bm 0,
&
\operatorname{Cov}(\hat{\bm x}_A(t))&=\sigma_A^2I_2,\\
\operatorname{Bias}_C(t)&=-\tau_{\mathrm{ct}}\bm v,
&
\operatorname{Cov}(\hat{\bm x}_C(t))&=
\frac{\sigma_A^2}{1+\eta_{\varepsilon}\tau_{\mathrm{ct}}}I_2 .
\end{aligned}
\label{eq:ct_bias_variance_result}
\end{equation}
Thus, the ANN branch provides an unbiased instantaneous estimate but retains the full temporal fluctuation, whereas the CANN branch reduces this fluctuation by the factor \((1+\eta_{\varepsilon}\tau_{\mathrm{ct}})^{-1}\) while introducing the velocity-dependent lag \(-\tau_{\mathrm{ct}}\bm v\).
This gives the continuous-time counterpart of the one-step bias--variance complementarity.

The same local model also shows how this complementarity can be utilized by convex fusion over time.
For
\(\hat{\bm x}_H(t;\lambda)=\lambda\hat{\bm x}_A(t)+(1-\lambda)\hat{\bm x}_C(t)\), with \(\lambda\in[0,1]\), the MSE in this local regime is
\begin{equation}
\operatorname{MSE}^{\mathrm{ct}}_H(\lambda)
=
(1-\lambda)^2\tau_{\mathrm{ct}}^2\|\bm v\|^2
+
\lambda^2s_A^{\mathrm{ct}}
+
(1-\lambda)^2s_C^{\mathrm{ct}}
+
2\lambda(1-\lambda)s_{AC}^{\mathrm{ct}},
\label{eq:ct_mse}
\end{equation}
where
\(s_A^{\mathrm{ct}}=2\sigma_A^2\),
\(s_C^{\mathrm{ct}}=2\sigma_A^2/(1+\eta_{\varepsilon}\tau_{\mathrm{ct}})\), and
\(s_{AC}^{\mathrm{ct}}=2\sigma_A^2/(1+\eta_{\varepsilon}\tau_{\mathrm{ct}})\).
The first term is the retained CANN lag, while the remaining terms describe how the ANN fluctuation and the CANN-smoothed fluctuation are combined.
Minimizing this quadratic MSE gives the convex weight
\begin{equation}
\lambda^\star_{\mathrm{ct}}
=
\frac{\tau_{\mathrm{ct}}^2\|\bm v\|^2}
{
\tau_{\mathrm{ct}}^2\|\bm v\|^2
+
2\sigma_A^2\eta_{\varepsilon}\tau_{\mathrm{ct}}/(1+\eta_{\varepsilon}\tau_{\mathrm{ct}})
}.
\label{eq:ct_lambda_star}
\end{equation}
When \(\|\bm v\|>0\) and \(\eta_{\varepsilon}\tau_{\mathrm{ct}}>0\), we have \(\lambda^\star_{\mathrm{ct}}\in(0,1)\), so the local continuous-time optimum uses both the ANN and CANN estimates.
Therefore, in the local continuous-time regime, convex fusion can balance the ANN branch's unbiased but fluctuating estimate with the CANN branch's smoother but lagged estimate.

\paragraph{Spurious remote-peak suppression.}
We next show that, beyond the local temporal smoothing captured by a first-order center filter, CANN dynamics supports stable tracking by suppressing remote spurious peaks and maintaining continuous center evolution.
Consider an ANN response map containing one target peak and one remote distractor peak:
\begin{equation}
S_A(\bm x,t)
=
A_\ast e^{-\|\bm x-\bm x^*(t)\|^2/(2\omega_\ast^2)}
+
A_s e^{-\|\bm x-\bm x_s(t)\|^2/(2\omega_s^2)}.
\label{eq:two_peak_response_map}
\end{equation}
Here, the first peak is centered at the target state \(\bm x^*(t)\) with amplitude \(A_\ast>0\) and width \(\omega_\ast>0\), while the second peak is centered at a remote distractor \(\bm x_s(t)\) with amplitude \(A_s>0\) and width \(\omega_s>0\).
The ANN argmax estimate \(\hat{\bm x}_A(t)=\arg\max_{\bm x}S_A(\bm x,t)\) may abruptly jump to the distractor when the remote peak becomes dominant.

The CANN branch receives the full response map rather than only the decoded ANN coordinate.
When the CANN maintains a localized activity bump, we use the single-bump approximation \(U(\bm x,t)\approx \tilde U(\bm x-\hat{\bm x}_C(t))\), where \(\tilde U(\bm u)=U_0e^{-\|\bm u\|^2/(4a^2)}\), \(\hat{\bm x}_C(t)\) is the CANN center estimate, \(U_0\) is the bump amplitude, and \(a\) is the bump width.
Let \(r_s(t)=\|\bm x_s(t)-\hat{\bm x}_C(t)\|\) denote the distance between the remote distractor peak and the current CANN center.
Projecting the spurious Gaussian peak \(A_s e^{-\|\bm x-\bm x_s(t)\|^2/(2\omega_s^2)}\) onto the CANN translational modes gives the following contribution to the CANN center dynamics:
\begin{equation}
\dot{\hat{\bm x}}_{C,s}(t)
=
\beta_s
\exp\!\left(
-\frac{r_s(t)^2}{2(\omega_s^2+2a^2)}
\right)
\bigl(\bm x_s(t)-\hat{\bm x}_C(t)\bigr),
\qquad
\beta_s=
\frac{8A_sa^2\omega_s^2}
{\tau_cU_0(\omega_s^2+2a^2)^2}.
\label{eq:remote_peak_pull_vector}
\end{equation}
Here, \(\dot{\hat{\bm x}}_{C,s}(t)\) denotes the component of the CANN center velocity induced by the spurious peak.
Its magnitude is
\begin{equation}
\|\dot{\hat{\bm x}}_{C,s}(t)\|
=
\beta_s r_s(t)
\exp\!\left(
-\frac{r_s(t)^2}{2(\omega_s^2+2a^2)}
\right).
\label{eq:remote_peak_pull}
\end{equation}
The factor \(r_s(t)\exp[-r_s(t)^2/(2(\omega_s^2+2a^2))]\) increases only up to \(r_s(t)=\sqrt{\omega_s^2+2a^2}\) and then decays as the distractor moves farther away from the current CANN center.
Thus, the remote spurious peak affects the CANN estimate through an overlap-gated velocity contribution.
Its influence is suppressed by the weak spatial overlap between the remote response peak and the current activity bump.

This mechanism differs from directly filtering the decoded ANN coordinate.
For a first-order center filter \(\dot{\hat{\bm x}}_F(t)=\beta_0(\hat{\bm x}_A(t)-\hat{\bm x}_F(t))\) with \(\beta_0>0\), an ANN jump to \(\bm x_s(t)\) produces a pull proportional to \(\|\bm x_s(t)-\hat{\bm x}_F(t)\|\).
In contrast, the CANN pull caused by the same remote peak is proportional to \(r_s(t)\exp[-r_s(t)^2/(2(\omega_s^2+2a^2))]\), and is therefore attenuated at large offsets, as illustrated in Figure~\ref{fig:theory_intuition}(C).

The same continuous-time dynamics also support stable tracking by enforcing continuous center evolution.
Even if the ANN argmax estimate \(\hat{\bm x}_A(t)=\arg\max_{\bm x}S_A(\bm x,t)\) switches abruptly between the target peak and a remote spurious peak, the CANN center is governed by a continuous-time dynamical system.
Under mild regularity conditions on the response amplitudes, peak locations, and mode-reduction remainder, \(\hat{\bm x}_C(t)\) is locally absolutely continuous.
That is, the CANN estimate remains continuous and differentiable almost everywhere, preventing the instantaneous coordinate jumps that may appear in ANN argmax estimation.
This continuous center evolution provides the second stability mechanism illustrated in Figure~\ref{fig:theory_intuition}(D).

\paragraph{Estimation fusion for stable tracking.}
We finally show that stable estimation should combine the ANN response map and the CANN activity bump before final decoding, rather than directly fusing their peak- or center-decoded estimates.
When both branches are locally concentrated around the target state and have comparable local shapes, this operation is consistent with the convex additive estimator analyzed above:
\begin{equation}
\hat{\bm x}_H(t)
=
\arg\max_{\bm x}
\left\{
\lambda S_A(\bm x,t)+(1-\lambda)U(\bm x,t)
\right\}
\approx
\lambda\hat{\bm x}_A(t)+(1-\lambda)\hat{\bm x}_C(t),
\label{eq:estimation_fusion_local_approximation}
\end{equation}
where \(U(\bm x,t)\) is the CANN activity bump and \(\lambda\in[0,1]\) is the fusion weight.
Thus, in the local unimodal regime, combining the two scalar fields before decoding recovers the convex additive estimate used in the MSE analysis.

When a remote spurious peak appears, this local approximation no longer holds.
If the ANN argmax estimate jumps to the distractor while the CANN center remains near the target-side activity bump, directly averaging the decoded estimates can produce an intermediate state that is not supported by either branch.
We therefore form the fused score before final decoding:
\begin{equation}
S_H(\bm x,t)
=
\lambda S_A(\bm x,t)+(1-\lambda)U(\bm x,t),
\qquad
\hat{\bm x}_H(t)
=
\arg\max_{\bm x}S_H(\bm x,t).
\label{eq:estimation_fusion_stable_tracking}
\end{equation}
The remote ANN peak receives weak support from the CANN activity bump because of their small spatial overlap, whereas the target-side response is reinforced near the current CANN center.
Thus, this estimation-fusion rule inherits the remote-peak suppression and continuous center evolution analyzed above, providing a stable final state estimate under distractor-induced response distortions.

Overall, this continuous-time analysis extends the complementarity and synergy in one-step analysis to trajectory-level stable tracking.
In the local regime, the ANN--CANN pair retains complementary error profiles that can be balanced by convex fusion, while in the spurious remote-peak regime, the CANN activity bump suppresses distractor-induced pulls through spatial-overlap gating and maintains continuous center evolution.
These mechanisms support the fusion of the ANN response map and the CANN activity bump for stable state estimation.
The detailed derivations are given in Appendix~\ref{app:continuous_time_response_map_dynamics}.

\subsection{Architecture of HTNN}
\label{sec:htnn_architecture}
Driven by the theoretical insights into bias--variance complementarity, functional synergy, and stable-tracking mechanisms, we propose the HTNN model for visual tracking.
The HTNN comprises a SiamFC-based ANN branch and a training-free CANN branch.
The ANN branch converts visual observations from the current frame into response maps over the target-center state space. 
The CANN branch receives external drives constructed from these response maps across frames and evolves an activity bump over the shared state space.
As shown in Figure~\ref{fig:htnn_architecture}, the two branches are connected through a theory-guided two-stage fusion design. 
Representation-fusion combines the ANN response map with a motion cue to construct a more reliable CANN external drive. 
Estimation-fusion combines the ANN response map and the CANN activity bump before final target-state decoding.
This design follows the state-space alignment in Section~\ref{sec:framework_setting}, the additive-fusion analysis in Section~\ref{subsec:mse_of_additive_hybridization}, and the continuous-time tracking analysis in Section~\ref{subsec:continuous_time_response_map_dynamics}.

\begin{figure}[pos=!htbp]
    \centering
    \includegraphics[width=1.0\linewidth]{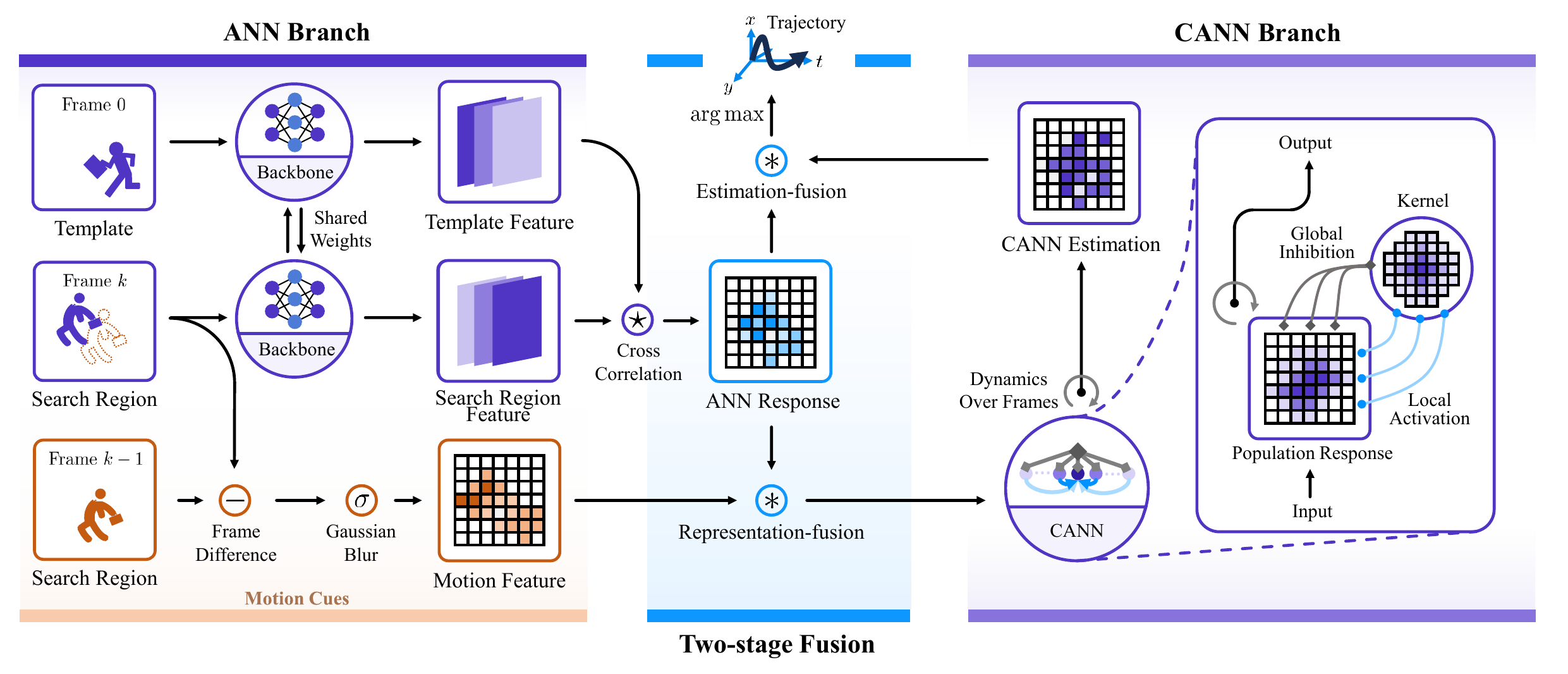}
    \caption{
    Architecture of HTNN. 
    The ANN branch follows a SiamFC-style tracking pipeline, where the template and search region are processed by shared backbones and their cross-correlation produces an ANN response map. 
    Representation-fusion combines the ANN response map with a motion cue extracted from adjacent search regions to construct the CANN external drive. 
    The CANN branch evolves an activity bump through local excitation and global inhibition over the same target-center state space. 
    Estimation-fusion then combines the ANN response map and the evolved CANN activity bump before final target-state decoding.
    The final state estimate is obtained by applying an argmax operation to the fused score.
    }
    \label{fig:htnn_architecture}
\end{figure}

\paragraph{ANN branch and representation-fusion.}
The ANN branch provides current-frame visual evidence over candidate target-center states, and representation-fusion converts this evidence into the external drive for CANN evolution.
Following the SiamFC tracking paradigm in~\ref{sec:prelim_siamfc}, the initial template image \(\bm V_{\mathrm{temp}}^{(0)}\) and the search region \(\bm V_{\mathrm{search}}^{(k)}\) at frame \(k\) are processed by a shared backbone \(\varphi(\cdot)\).
The ANN response map is computed by cross-correlation:
\begin{equation}
S_A^{(k)}
=
\varphi(\bm V_{\mathrm{temp}}^{(0)})
\star
\varphi(\bm V_{\mathrm{search}}^{(k)}).
\end{equation}
Here, \(S_A^{(k)}(\bm x)\) represents the response map that matches evidence whether the target center is located at candidate state \(\bm x\in\mathbb R^2\).
The ANN estimate can be decoded as \(\hat{\bm x}_A^{(k)}=\arg\max_{\bm x} S_A^{(k)}(\bm x)\).
In HTNN, the same response map also provides the visual evidence used to construct the CANN external drive.

Representation-fusion constructs this external drive by combining appearance-based matching evidence with local motion information.
In practical tracking, \(S_A^{(k)}\) may contain scattered activations or spurious peaks caused by background texture, target deformation, or appearance ambiguity.
Although CANN dynamics can suppress remote spurious peaks, an overly scattered drive may still introduce additional perturbations into bump evolution.
A motion cue \(M^{(k)}:\mathbb R^2\rightarrow\mathbb R\) is therefore introduced to modulate the ANN response using local inter-frame changes.
In implementation, \(M^{(k)}\) is computed from the frame difference between \(\bm V_{\mathrm{search}}^{(k)}\) and \(\bm V_{\mathrm{search}}^{(k-1)}\), followed by Gaussian smoothing.
The CANN external drive is then constructed as
\begin{equation}
I_{\mathrm{ext}}^{(k)}
=
\nu_A S_A^{(k)}
+
\nu_M M^{(k)}
+
\nu_{AM}\big(S_A^{(k)}\odot M^{(k)}\big),
\label{eq:representation_fusion}
\end{equation}
where \(\nu_A\), \(\nu_M\), and \(\nu_{AM}\) are scalar representation-fusion coefficients, and \(\odot\) denotes element-wise multiplication.
This construction keeps the drive in the same target-center state space and provides the CANN branch with evidence that is more concentrated around visually and temporally supported target regions.

\paragraph{CANN branch and estimation-fusion.}
The CANN branch evolves an activity bump over the same target-center state space as the ANN response map.
Following the discrete CANN implementation introduced in Section~\ref{sec:2d-cann}, the external drive \(I_{\mathrm{ext}}^{(k)}\) is mapped onto the \(N\times N\) toroidal grid.
The CANN state is then iteratively updated according to Eq.~\eqref{equ:CANN-discrete-update}.
For each video frame, \(\bm I_{\mathrm{ext}}^{(k)}\) is held fixed during a finite response window of \(L\) update steps.
This response window corresponds to response time \(T=L\Delta t\).
After CANN evolution, we denote the evolved activity bump as \(S_C^{(k)}(\bm x):=U^{(k)}(\bm x,T)\), and the CANN estimate can be decoded as \(\hat{\bm x}_C^{(k)}=\arg\max_{\bm x} S_C^{(k)}(\bm x)\).

Estimation-fusion combines the ANN response map and the evolved CANN activity bump before final decoding.
Since both scalar fields are defined over the same target-center state space, HTNN computes
\begin{equation}
S_H^{(k)}
=
\lambda S_A^{(k)}
+
(1-\lambda)S_C^{(k)},
\qquad
\lambda\in[0,1],
\label{eq:estimation_fusion}
\end{equation}
and decodes the final target center as
\begin{equation}
\hat{\bm x}_H^{(k)}
=
\arg\max_{\bm x} S_H^{(k)}(\bm x).
\label{eq:htnn_final_decode}
\end{equation}
This fusion uses current-frame visual evidence from the ANN branch and the temporally evolved activity bump from the CANN branch.
As discussed in Sections~\ref{subsec:mse_of_additive_hybridization} and~\ref{subsec:continuous_time_response_map_dynamics}, estimation-fusion follows the functional-synergy analysis of convex additive fusion and the continuous-time stable-tracking mechanism of CANN dynamics.
By combining the ANN response map with the evolved CANN activity bump before decoding, it can reduce estimation error while preserving response support under remote distractors, yielding a more stable and accurate final tracking estimate.

\paragraph{Overall training procedure and inference acceleration approach.}

\begin{figure}[pos=!htbp]
    \centering
    \includegraphics[width=0.7\linewidth]{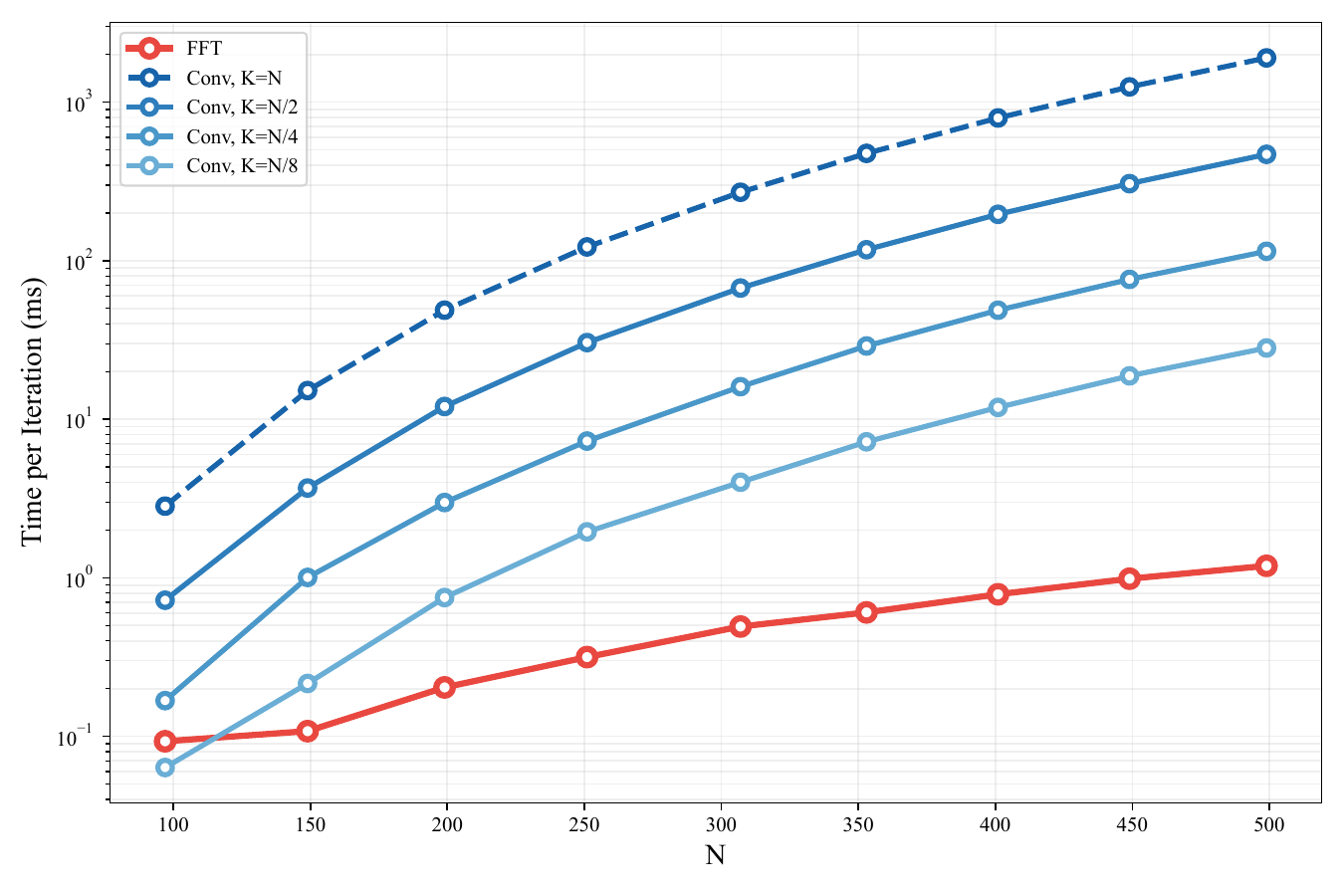}
    \caption{
    Efficiency comparison of recurrent convolution implementations in CANN inference.
    Runtime is measured and averaged for one recurrent convolution with batch size 32.
    The red curve denotes the FFT-based implementation, which computes the exact circular convolution on the toroidal grid.
    The blue curves denote direct or truncated convolution with different kernel sizes \(K\).
    The benchmark deliberately uses prime grid sizes to avoid cuFFT-favorable sizes of the form \(2^a3^b5^c7^d\), giving a conservative setting for FFT acceleration.
    }
    \label{fig:fft_speedup}
\end{figure}

In HTNN, the ANN branch is trained from template--search pairs with dense response-map supervision, whereas the CANN branch is integrated as a training-free dynamical module.
Its recurrent structure is specified by the CANN dynamics and contains a small set of dynamical parameters, including the bump width \(a\), the time constant \(\tau_c\), the inhibition strength $k$, the input gain $A$, and the response time \(T\).
These parameters control the lag term, the variance term, and the response-map dynamics analyzed in Sections~\ref{subsec:bias_variance_complementarity}--\ref{subsec:continuous_time_response_map_dynamics}.
They are selected designed parameters rather than learned by backpropagation.
Therefore, HTNN does not require end-to-end optimization of the full ANN--CANN system.

For efficient inference, we introduce an acceleration approach that uses the Fast Fourier Transform (FFT) to compute recurrent CANN updates exactly on the toroidal grid.
The computational bottleneck is the recurrent convolution term \( \bm J\circledast \bm r \) in Eq.~\eqref{equ:CANN-discrete-update}, which requires \(O(N^4)\) time per update step under direct convolution on an \(N\times N\) response grid.
This cost becomes significant because the CANN state is updated for multiple recurrent steps within each video frame.

A common alternative is truncated convolution, which reduces the runtime to \(O(N^2K^2)\) by limiting the effective kernel size to \(K\).
However, truncation changes the original recurrent dynamics by restricting the spatial range of interaction, and its efficiency depends on the chosen kernel size.
In contrast, the toroidal-grid implementation makes the recurrent term a circular convolution, which can be computed in the frequency domain as
\begin{equation}
\bm J \circledast \bm r
=
\mathcal F^{-1}
\!\left(
\mathcal F(\bm J)\odot \mathcal F(\bm r)
\right),
\label{eq:fft_cann}
\end{equation}
where \(\mathcal F\) and \(\mathcal F^{-1}\) denote the two-dimensional FFT and its inverse.
This reduces each CANN update to \(O(N^2\log N)\) time while preserving the discrete circular-convolution dynamics on the toroidal grid.
Since the recurrent kernel \(\bm J\) is fixed after the CANN parameters are selected, \(\mathcal F(\bm J)\) is precomputed and reused during inference.

Figure~\ref{fig:fft_speedup} compares FFT-based convolution with direct and truncated convolution implementations.
Direct convolution becomes rapidly more expensive as the response-grid size increases.
Truncated convolution reduces the cost by using a smaller kernel, but even the strongly truncated \(K=N/8\) setting only approaches the FFT runtime in the small-\(N\) range and does so by substantially restricting the original recurrent interaction range.
Across the tested response-grid sizes, the FFT-based implementation is approximately \(10^1\)--\(10^3\) times faster than the truncated and direct alternatives while preserving the exact circular convolution.
Thus, HTNN uses FFT-based circular convolution as its exact acceleration route for efficient recurrent CANN inference.

\FloatBarrier

\section{Results}

Here, we present a systematic experimental evaluation of the VOT task.
In Section~\ref{subsec:experimental_setup}, we first introduce the datasets, evaluation metrics, baseline methods, and implementation details used in our evaluation.
In Section~\ref{subsec:main_results}, we compare HTNN with different models on nine benchmark datasets and various challenging conditions.
We then analyze the impact of the temporal continuity of input videos and examine the effectiveness of the bias--variance complementarity.
We further provide qualitative explanations for the tracking stability of HTNN in Section~\ref{sec:visual_analytics} and conduct ablation experiments in Section~\ref{subsec:ablation_study}.
Finally, Section~\ref{subsec:parameter_sensitivity} provides parameter sensitivity analysis to estimate how HTNN responds to changes in fusion parameters and CANN dynamical parameters.

\subsection{Experimental Setup}
\label{subsec:experimental_setup}
\paragraph{Datasets.}
We evaluate HTNN on nine standard single-object tracking benchmarks, including \textbf{OTB100} and \textbf{OTB50}~\citep{Dataset_OTB},
\textbf{GOT-10k}~\citep{Dataset_GOT-10k}, 
\textbf{LaSOT}~\citep{Dataset_LaSOT},
\textbf{TColor128}~\citep{Dataset_TColor128},
\textbf{UAV123}~\citep{Dataset_UAV},
\textbf{NfS}~\citep{Dataset_NfS}, 
\textbf{VOT2019}~\citep{Dataset_VOT2019},
and \textbf{TrackingNet}~\citep{Dataset_TrackingNet}.
These datasets cover diverse tracking scenarios, including high-frame-rate tracking in NfS, regular-frame-rate tracking in OTB and UAV123, long-term tracking in LaSOT, diverse scenes in TrackingNet, and sparse-sampling tracking in GOT-10k.
Detailed dataset statistics are shown in Table~\ref{tab:testing_dataset_stats}.

\begin{table}[htbp]
\centering
\caption{Statistics of the testing datasets used in our evaluation.}
\begin{tabular}{@{}lccc@{}}
\toprule
\textbf{Dataset} & \textbf{Videos} & \textbf{Frames} & \textbf{FPS} \\
\midrule
OTB50        & 50  & 27K  & Not specified\textsuperscript{a} \\
OTB100       & 100 & 59K  & Not specified \\
GOT-10k      & 180 & 21K  & 10fps \\
LaSOT        & 279 & 685K & Not specified \\
TColor128    & 128 & 56K  & Not specified \\
UAV123       & 123 & 113K & 30fps \\
NfS          & 100 & 383K & 240fps \\
VOT2019      & 106 & 20K  & Not specified \\
TrackingNet  & 511 & 226K & Not specified \\
\bottomrule
\end{tabular}
\par\vspace{3pt}\footnotesize\raggedright
\textsuperscript{a} ``Not specified'' indicates that the original dataset paper does not report a fixed frame rate, or that the videos are collected from the Internet or multiple sources and may therefore have different frame rates.
\par
\label{tab:testing_dataset_stats}
\end{table}

\paragraph{Evaluation metrics.}
We primarily adopt two groups of standard evaluation metrics.
The first group evaluates frame-level localization accuracy.
Following common evaluation protocols~\citep{Dataset_GOT-10k}, we use \textbf{Precision} (\(\mathrm{Pr}\)) defined as the proportion of frames in which the Euclidean distance between the predicted target center and the ground-truth target center is smaller than 20 pixels.
We also use \textbf{Success Rate} (\(\mathrm{SR}\)) defined as the proportion of frames in which the intersection-over-union (IoU) between the predicted and ground-truth bounding boxes is greater than 0.5.
Collectively, Pr and SR measure whether a tracker correctly localizes the target.

The second group evaluates trajectory-level stability.
We use \textbf{Velocity Norm Error} (\(\mathrm{VNE}\))~\citep{Metric_Trajectory} to measure the consistency between the predicted motion magnitude and the ground-truth motion magnitude.
Let \(\bm x_t^\ast\) and \(\hat{\bm x}_t\) denote the ground-truth target center and the predicted target center at frame \(t\), respectively.
We compute the ground-truth and predicted velocity magnitudes as \(v_t=\|\bm x_t^\ast-\bm x_{t-1}^\ast\|_2\) and \(\hat v_t=\|\hat{\bm x}_t-\hat{\bm x}_{t-1}\|_2\), respectively.
\(\mathrm{VNE}\) is then defined as the average of \(|v_t - \hat v_t|\) over the evaluated frames.
A lower VNE indicates that the velocity magnitude of the predicted trajectory is closer to that of the ground-truth trajectory.

\paragraph{Baseline methods.}
We compare HTNN with five related methods to separate the roles of different hybridization mechanisms.

\begin{enumerate}
  \item \(\textbf{ANN}^{\mathrm{SiamFC}}\) serves as the data-driven ANN baseline. 
  It directly estimates the current target state through Siamese appearance matching and does not include explicit continuous state evolution.

  \item \(\textbf{CANN}^{\mathrm{M}}\) serves as the motion-driven CANN baseline. 
  It uses motion information as the external drive and obtains the final estimate entirely from CANN evolution, without using the ANN appearance response map.

  \item \(\textbf{CANN}^{\mathrm{DH}}\) serves as the direct ANN--CANN hybridization baseline. 
  It directly feeds the ANN response map into the CANN branch and decodes the final target state from the CANN response, without the representation-fusion and estimation-fusion stages used in HTNN.

  \item \textbf{FlyNet}~\citep{HNN4_FlyNet} serves as a heuristic ANN-CANN hybridization baseline. 
  Although the original model was designed for visual place recognition, we adapt it to VOT by only replacing the neural-network backbone while keeping the tracking pipeline unchanged for fair comparison.
  
  \item \textbf{HSTNN} serves as a recent neuron-scale hybridization baseline.
  In this work, we adapt the Stage-1 hybridization of HSTNN~\citep{HNN12_SNNRNN} to VOT by inserting a lightweight RNN--SNN temporal response head after the SiamFC response map and before final peak decoding.
  We use the Stage-1 variant because it preserves the main response-refinement capacity of HSTNN, while the later selection and restoration stages mainly target neuron selection, compactness, and retraining of the hybrid model.

\end{enumerate}

\paragraph{Implementation details.}
HTNN is implemented in PyTorch and evaluated on a single NVIDIA RTX 3090 GPU.
For the data-driven components, \(\mathrm{ANN}^{\mathrm{SiamFC}}\), FlyNet, and HSTNN are trained on GOT-10k.
For the training-free CANN branch, we selected the dynamical parameters and fusion weights by grid search on the LaSOT training set, using tracking accuracy and stability as the selection criteria.
The selected parameters were then fixed for all benchmark evaluations without dataset-specific tuning.
For CANN dynamics, we use an \(N\times N\) toroidal grid with \(N=85\).
The CANN response window contains \(L=8\) discrete update steps with \(\Delta t=1\), corresponding to response time \(T=L\Delta t\).
The dynamical parameters are set to \(\tau_c=0.71\), \(A=2.01\), \(k=0.08\), and \(a=0.40\).
For the two-stage fusion design, the weights are set to \(\nu_A=0.008\), \(\nu_M=0.47\), \(\nu_{AM}=1.10\), and \(\lambda=0.92\).
More implementation details are given in Appendix~\ref{app:implementation_details}.
\subsection{Main Results}
\label{subsec:main_results}

\paragraph{Overall performance.}

\begin{table}[htbp]
\centering
\caption{Overall comparison across nine benchmarks in terms of localization accuracy and tracking stability.}
\label{tab:overall_pr_sr_vne}
\setlength{\tabcolsep}{2pt}
\begin{tabular}{c ccc ccc ccc ccc ccc ccc}
\toprule
\textbf{Method}\textsuperscript{a}
& \multicolumn{3}{c}{\textbf{HTNN}}
& \multicolumn{3}{c}{\(\textbf{CANN}^{\mathrm{DH}}\)}
& \multicolumn{3}{c}{\textbf{FlyNet}}
& \multicolumn{3}{c}{\textbf{HSTNN}}
& \multicolumn{3}{c}{\(\textbf{ANN}^{\mathrm{SiamFC}}\)}
& \multicolumn{3}{c}{\(\textbf{CANN}^{\mathrm{M}}\)} \\
\cmidrule(lr){2-4}\cmidrule(lr){5-7}\cmidrule(lr){8-10}\cmidrule(lr){11-13}\cmidrule(lr){14-16}\cmidrule(lr){17-19}
\textbf{Metric}\textsuperscript{a}
& \textbf{Pr}\(\uparrow\) & \textbf{SR}\(\uparrow\) & \textbf{VNE}\(\downarrow\)
& \textbf{Pr}\(\uparrow\) & \textbf{SR}\(\uparrow\) & \textbf{VNE}\(\downarrow\)
& \textbf{Pr}\(\uparrow\) & \textbf{SR}\(\uparrow\) & \textbf{VNE}\(\downarrow\)
& \textbf{Pr}\(\uparrow\) & \textbf{SR}\(\uparrow\) & \textbf{VNE}\(\downarrow\)
& \textbf{Pr}\(\uparrow\) & \textbf{SR}\(\uparrow\) & \textbf{VNE}\(\downarrow\)
& \textbf{Pr}\(\uparrow\) & \textbf{SR}\(\uparrow\) & \textbf{VNE}\(\downarrow\) \\
\midrule
OTB50
& \textbf{75.1} & \textbf{68.7} & \textbf{2.58}
& 38.8 & 32.7 & 5.18
& 17.7 & 8.3 & 5.29
& 71.6 & 64.4 & 3.66
& 71.3 & 66.2 & 2.72
& 16.4 & 9.6 & 6.11 \\

OTB100
& \textbf{80.9} & \textbf{75.3} & \textbf{2.23}
& 48.9 & 44.6 & 4.37
& 19.0 & 11.1 & 4.53
& 76.4 & 71.1 & 3.23
& 79.0 & 73.8 & 2.38
& 18.6 & 13.0 & 5.33 \\

GOT-10k
& 35.8 & 59.4 & \textbf{12.54}
& 23.1 & 50.2 & 15.00
& 6.1 & 19.8 & 15.49
& 31.4 & 57.7 & 16.02
& \textbf{35.9} & \textbf{61.6} & 14.07
& 6.7 & 23.8 & 18.00 \\

LaSOT
& \textbf{32.0} & \textbf{35.4} & 6.71
& 20.2 & 26.2 & 6.06
& 5.7 & 6.5 & \textbf{5.98}
& 29.4 & 33.7 & 9.79
& 31.0 & 35.2 & 7.66
& 6.4 & 8.2 & 6.33 \\

TColor128
& \textbf{71.2} & \textbf{64.4} & \textbf{2.84}
& 39.8 & 33.0 & 4.01
& 13.8 & 7.5 & 4.09
& 65.3 & 58.2 & 3.99
& 69.3 & 63.2 & 3.07
& 13.0 & 7.8 & 4.72 \\

UAV123
& \textbf{70.1} & \textbf{61.7} & \textbf{2.27}
& 36.7 & 35.1 & 2.67
& 9.3 & 4.6 & 2.83
& 68.0 & 59.9 & 2.85
& 68.2 & 59.9 & 2.64
& 5.4 & 3.4 & 2.99 \\

NfS
& \textbf{64.8} & \textbf{65.5} & \textbf{2.81}
& 38.7 & 42.7 & 2.84
& 5.5 & 4.0 & 2.98
& 58.9 & 61.9 & 4.20
& 63.7 & 65.1 & 3.25
& 3.9 & 3.4 & 3.17 \\

VOT2019
& \textbf{64.1} & \textbf{43.4} & \textbf{6.54}
& 25.8 & 20.5 & 8.97
& 8.2 & 5.7 & 8.82
& 56.6 & 39.6 & 6.79
& 60.9 & 41.4 & 6.74
& 10.3 & 5.8 & 9.61 \\

TrackingNet\textsuperscript{b}
& \textbf{51.4} & - & -
& 47.7 & - & -
& 28.2 & - & -
& 49.7 & - & -
& 51.2 & - & -
& 28.2 & - & - \\
\bottomrule
\end{tabular}
\par\vspace{3pt}\footnotesize\raggedright
\textsuperscript{a} For Pr and SR, higher is better; for VNE, lower is better. Best results are shown in bold.
\textsuperscript{b} TrackingNet can only be evaluated online, with no SR or VNE returns.
\par
\end{table}

Table~\ref{tab:overall_pr_sr_vne} reports the overall comparison across nine benchmarks and six methods. 
For localization accuracy, HTNN achieves the best Pr and SR on eight out of nine datasets, with GOT-10k being the only exception.  
In contrast, \(\mathrm{CANN}^{\mathrm{DH}}\), FlyNet, and HSTNN perform worse than \(\mathrm{ANN}^{\mathrm{SiamFC}}\) in most cases.
It indicates that direct ANN--CANN coupling, heuristic HNN adaptation, or inherently discrete neuronal-scale hybridization is insufficient for reliable target tracking.
With theoretical guidance from functional synergy, HTNN achieves more accurate tracking.

The stability results show a consistent tendency. 
Among the eight benchmarks with available ground-truth trajectories for offline VNE computation, HTNN obtains the best VNE on most datasets. 
On LaSOT, HTNN does not achieve the lowest VNE, but it still improves over \(\mathrm{ANN}^{\mathrm{SiamFC}}\) while maintaining substantially stronger localization accuracy than \(\mathrm{CANN}^{\mathrm{M}}\), \(\mathrm{CANN}^{\mathrm{DH}}\), and FlyNet. 
Since VNE measures the consistency between predicted and ground-truth velocity magnitudes, these results suggest that HTNN improves tracking stability without sacrificing localization accuracy in most evaluated settings.

The result on GOT-10k should not be viewed as a simple exception. 
Admittedly, on this low-frame-rate benchmark (10 FPS), \(\mathrm{ANN}^{\mathrm{SiamFC}}\) obtains higher Pr and SR. 
This result instead suggests an operating regime of HTNN, where CANN-based continuous dynamics may become less suitable when frame sampling is too sparse. 
We examine this boundary in Section~\ref{failure_case_analysis}, showing that a reduced frame rate and stronger sequence discontinuity weaken the advantage of CANN-based state evolution.

\paragraph{Condition-wise performance.}

\begin{figure}[pos=!htbp]
    \centering
    \includegraphics[width=1.0\linewidth]{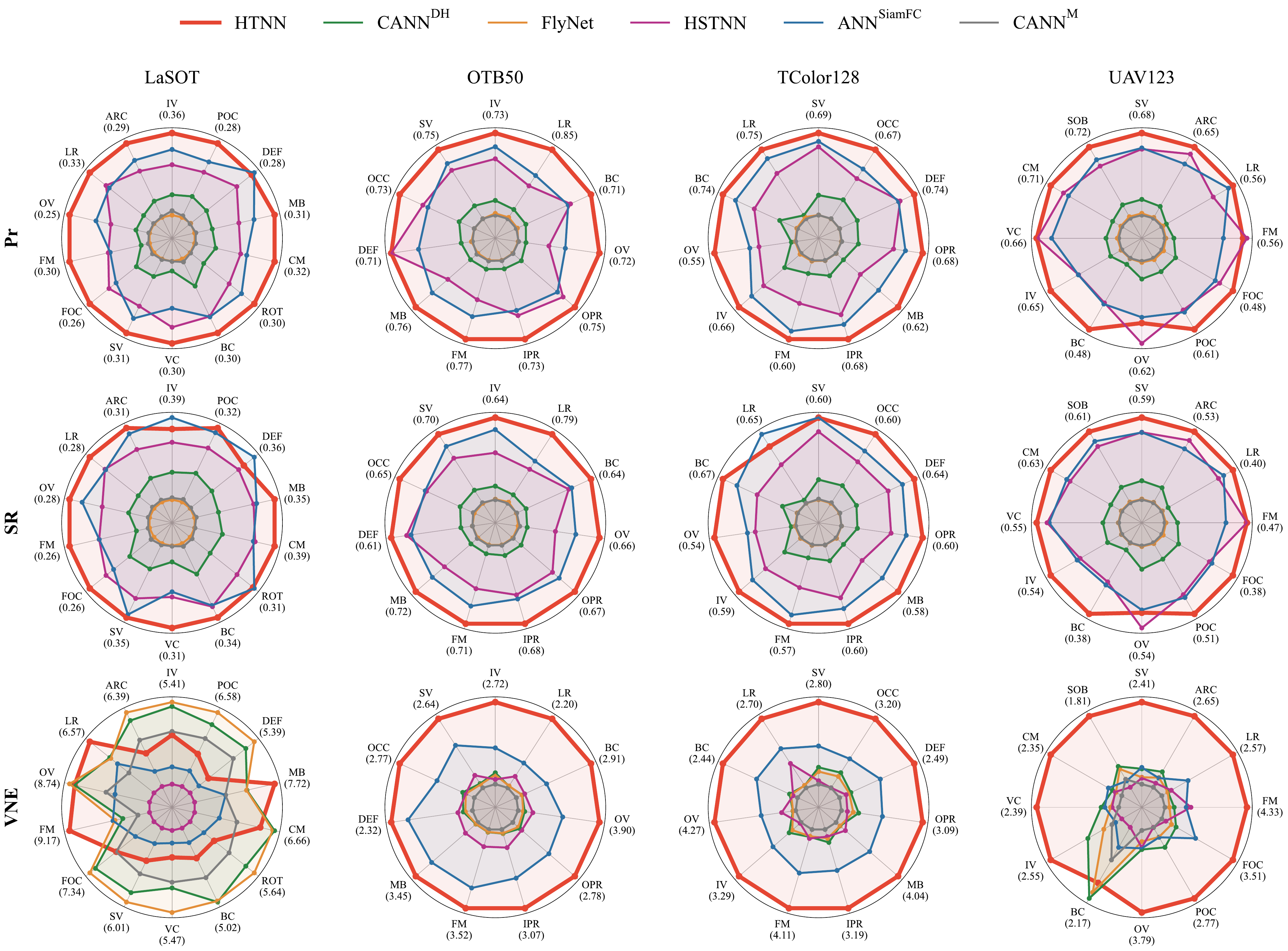}
    \caption{
    Condition-wise comparison on LaSOT, OTB50, TColor128, and UAV123. 
    Each radar chart summarizes Pr, SR, or VNE across the standard challenging condition of the corresponding dataset.
    For visualization, each axis is normalized by the best method on that attribute with a log-scaled gap-to-best radius, so a larger radius always indicates better performance, while the number in parentheses reports the raw best score.
    Condition abbreviations: ARC, aspect ratio change; BC, background clutter; CM, camera motion; DEF, deformation; FM, fast motion; FOC, full occlusion; IPR, in-plane rotation; IV, illumination variation; LR, low resolution; MB, motion blur; OCC, occlusion; OPR, out-of-plane rotation; OV, out-of-view; POC, partial occlusion; ROT, rotation; SOB, similar object; SV, scale variation; VC, viewpoint change. 
    }
    \label{fig:attribute_wise_performance}
\end{figure}

Figure~\ref{fig:attribute_wise_performance} examines HTNN under challenging tracking conditions.
These conditions separate different sources of tracking difficulty, including appearance variation, target motion, occlusion, and background interference, thereby testing whether a method remains reliable across diverse tracking scenarios.
Consistent with Table~\ref{tab:overall_pr_sr_vne}, HTNN maintains favorable Pr, SR, and VNE across most conditions.
This indicates that its gains are broadly distributed across diverse tracking challenges, supporting the role of ANN--CANN fusion in combining accurate localization with stable target-state evolution.

\paragraph{Temporal-continuity boundary analysis.}
\label{failure_case_analysis}

\begin{figure}[pos=!htbp]
    \centering
    \includegraphics[width=1.0\linewidth]{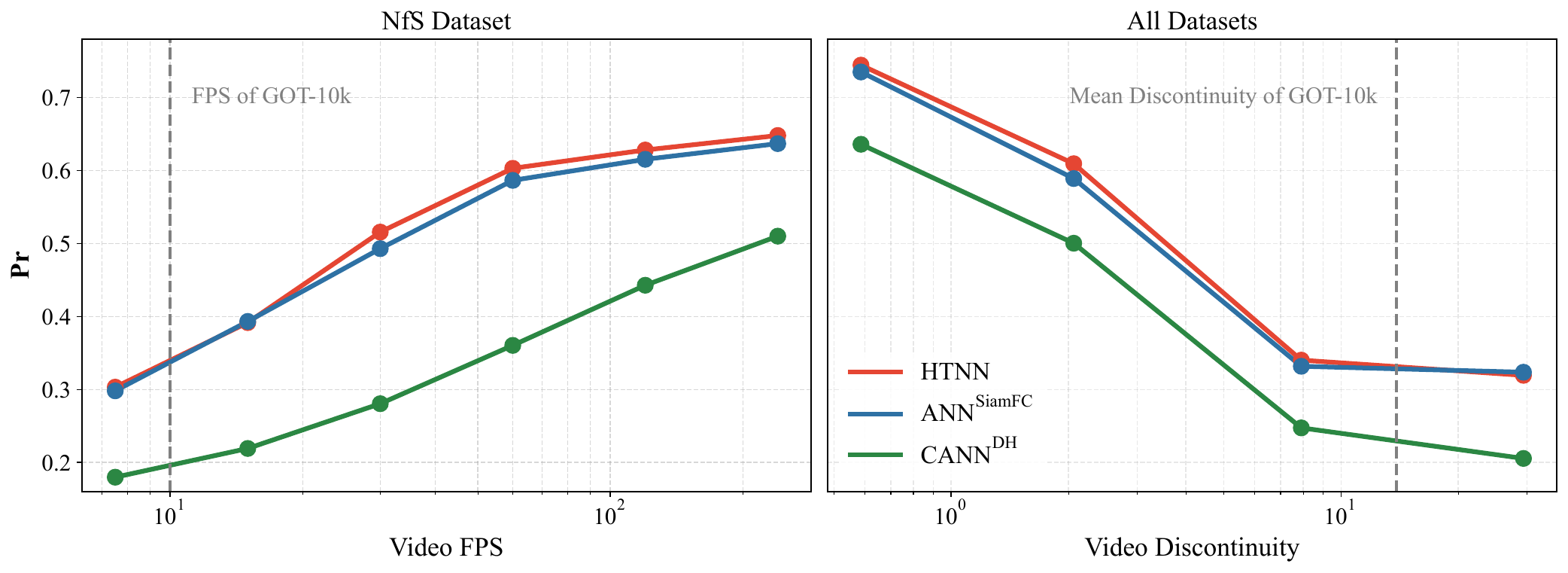}
    \caption{
    Controlled continuity-degradation analyses. 
    Left: average Precision on NfS under temporal subsampling from 240 FPS to lower frame rates. 
    Right: average Precision versus sequence discontinuity aggregated across datasets. 
    The dashed vertical lines indicate the FPS and the mean discontinuity level corresponding to GOT-10k, respectively. Sequence discontinuity is evaluated by Velocity Smoothness Error (VSE)~\citep{Metric_Trajectory}.
    }
    \label{fig:failure_case_analysis}
\end{figure}

The GOT-10k result raises the question of which cases HTNN could benefit from CANN-based state evolution.
Figure~\ref{fig:failure_case_analysis} examines this through two degradation analyses.
Both panels measure how the advantage of HTNN over \(\mathrm{ANN}^{\mathrm{SiamFC}}\) changes as temporal continuity weakens.
The left panel controls temporal continuity by reducing the sampling density of the same video source, while the right panel compares datasets with different motion discontinuity levels.
For the cross-dataset analysis, sequence discontinuity is quantified by Velocity Smoothness Error (VSE)~\citep{Metric_Trajectory}, which measures the average deviation between the ground-truth velocity magnitude sequence and its Savitzky--Golay-smoothed counterpart.
A larger VSE indicates stronger high-frequency variation in target motion and therefore weaker local temporal continuity.

In the NfS subsampling experiment, the performance gap between HTNN and \(\mathrm{ANN}^{\mathrm{SiamFC}}\) narrows as the effective frame rate decreases.
When the subsampled frame rate approaches the GOT-10k regime, the Pr advantage of HTNN disappears.
The cross-dataset analysis shows the same tendency.
As sequence discontinuity increases, the advantage of HTNN over \(\mathrm{ANN}^{\mathrm{SiamFC}}\) becomes smaller, and no clear gain remains beyond the discontinuity level associated with GOT-10k.

These observations characterize the operating regime of the proposed ANN--CANN hybridization.
When adjacent frames preserve sufficient local continuity, CANN dynamics can capture target motion through finite-response state evolution.
When frame sampling becomes sparse or adjacent-frame displacement becomes too large, the target may move beyond the effective range of local bump tracking.
Under this condition, CANN evolution provides less benefit for localization and may introduce a stronger finite-response lag.
The GOT-10k behavior is therefore consistent with the operating regime implied by the continuous-state dynamics of HTNN.
It is also worth noting that GOT-10k represents an extremely low-frame-rate setting (10 FPS), which may be more discontinuous than common real-world applications operating at 30 FPS or higher.

\paragraph{Drift analysis.}

\begin{figure}[pos=!htbp]
    \centering
    \includegraphics[width=1.0\linewidth]{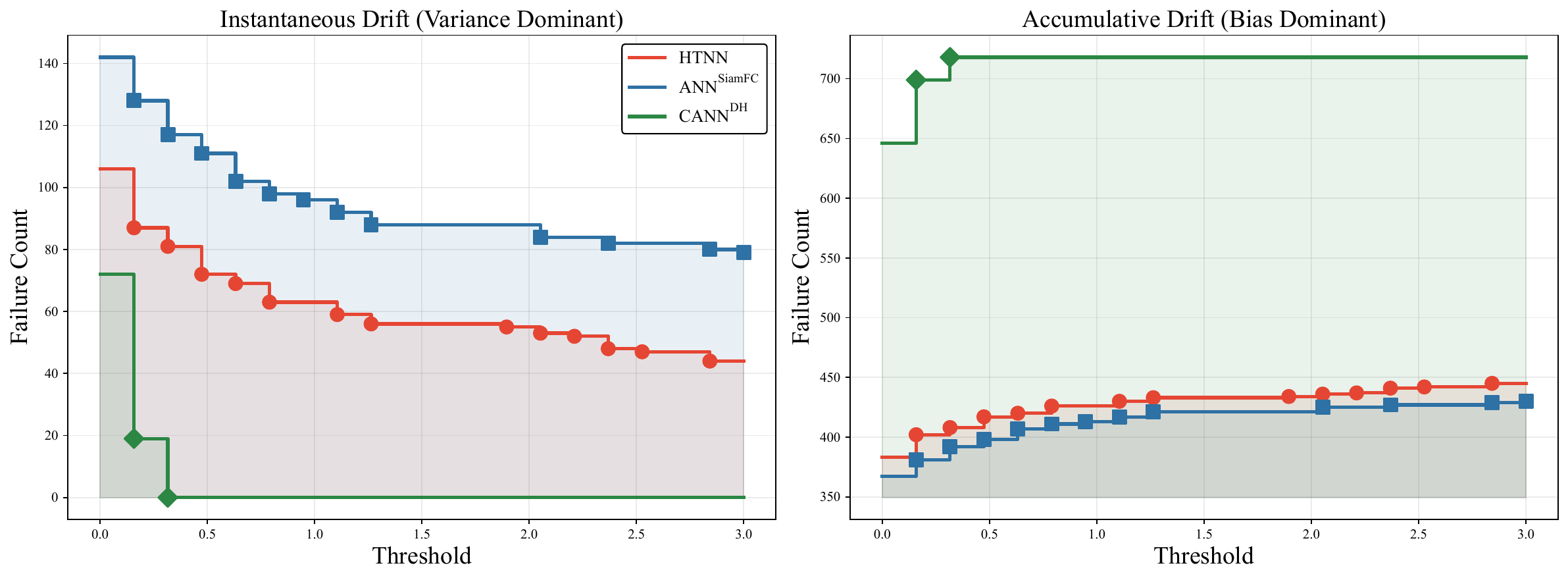}
    \caption{
    Threshold-aggregated drift failure curves over all evaluated sequences. 
    Left: instantaneous drift. 
    Right: accumulative drift. 
    The horizontal axis is the displacement-ratio threshold used for drift classification, and the vertical axis is the aggregated number of failures.
    }
    \label{fig:drift_failure_curves}
\end{figure}

\begin{table}[htbp]
\centering
\caption{Threshold-aggregated drift failure statistics over all evaluated sequences.}
\label{tab:drift_failure_statistics}
\setlength{\tabcolsep}{4pt}
\resizebox{1.0\linewidth}{!} {%
\begin{tabular}{c ccc ccc ccc}
\toprule
\textbf{Failure type}
& \multicolumn{3}{c}{\textbf{Inst. Drift}\textsuperscript{a}\(\downarrow\)} 
& \multicolumn{3}{c}{\textbf{Acc. Drift}\textsuperscript{b}\(\downarrow\)} 
& \multicolumn{3}{c}{\textbf{Total Drift}\textsuperscript{c}\(\downarrow\)} \\
\cmidrule(lr){2-4}\cmidrule(lr){5-7}\cmidrule(lr){8-10}
\textbf{Method}\textsuperscript{d}
& \textbf{ANN}\textsuperscript{\(\mathrm{SiamFC}\)} & \textbf{HTNN} & \textbf{CANN}\textsuperscript{\(\mathrm{DH}\)}
& \textbf{ANN}\textsuperscript{\(\mathrm{SiamFC}\)} & \textbf{HTNN} & \textbf{CANN}\textsuperscript{\(\mathrm{DH}\)}
& \textbf{ANN}\textsuperscript{\(\mathrm{SiamFC}\)} & \textbf{HTNN} & \textbf{CANN}\textsuperscript{\(\mathrm{DH}\)} \\
\midrule
OTB50
& 7.75 & 3.75 & \textbf{0.20}
& \textbf{14.25} & 15.25 & 35.80
& 22.0 & \textbf{19.0} & 36.0 \\

OTB100
& 7.75 & 3.75 & \textbf{0.25}
& \textbf{20.25} & 21.25 & 60.75
& 28.0 & \textbf{25.0} & 61.0 \\

GOT-10k
& 4.30 & 4.35 & \textbf{1.00}
& \textbf{32.70} & 34.65 & 47.00
& \textbf{37.0} & 39.0 & 48.0 \\

LaSOT
& 41.35 & 29.55 & \textbf{1.20}
& \textbf{186.65} & 193.45 & 242.80
& 228.0 & \textbf{223.0} & 244.0 \\

TColor128
& 12.00 & 8.40 & \textbf{0.80}
& \textbf{36.00} & 37.60 & 95.20
& 48.0 & \textbf{46.0} & 96.0 \\

UAV123
& 7.65 & 3.40 & \textbf{0.75}
& \textbf{52.35} & 52.60 & 98.25
& 60.0 & \textbf{56.0} & 99.0 \\

NfS
& 9.00 & 3.00 & \textbf{0.00}
& \textbf{38.00} & 43.00 & 76.00
& 47.0 & \textbf{46.0} & 76.0 \\

VOT2019
& 5.15 & 4.50 & \textbf{0.35}
& 33.85 & \textbf{30.50} & 57.65
& 39.0 & \textbf{35.0} & 58.0 \\

\midrule
All
& 94.95 & 60.70 & \textbf{4.55}
& \textbf{414.05} & 428.30 & 713.45
& 509.0 & \textbf{489.0} & 718.0 \\
\bottomrule
\end{tabular}%
}
\par\vspace{3pt}\footnotesize\raggedright
\textsuperscript{a} Inst. Drift denotes instantaneous-drift failures.
\textsuperscript{b} Acc. Drift denotes accumulative-drift failures.
\textsuperscript{c} Total Drift is the sum of Inst. Drift and Acc. Drift.
Each value is the average failure count over 20 displacement-ratio thresholds uniformly sampled from 0 to 3, which leads to non-integer entries.
\textsuperscript{d} The method columns are ordered as \(\mathrm{ANN}^{\mathrm{SiamFC}}\)--HTNN--\(\mathrm{CANN}^{\mathrm{DH}}\) to make the aggregate failure-count patterns easy to read: Inst. Drift follows \(\mathrm{ANN}^{\mathrm{SiamFC}} > \mathrm{HTNN} > \mathrm{CANN}^{\mathrm{DH}}\), Acc. Drift follows \(\mathrm{ANN}^{\mathrm{SiamFC}} < \mathrm{HTNN} < \mathrm{CANN}^{\mathrm{DH}}\), and Total Drift follows \(\mathrm{ANN}^{\mathrm{SiamFC}} > \mathrm{HTNN} < \mathrm{CANN}^{\mathrm{DH}}\).
\par
\end{table}

We now turn to drift analysis to examine whether the performance gains of HTNN arise from the ANN--CANN complementarity predicted by the one-step analysis in Section~\ref{subsec:bias_variance_complementarity} and the continuous-time analysis in Section~\ref{subsec:continuous_time_response_map_dynamics}.
In full VOT trajectories, however, the corresponding bias and variance terms cannot be directly estimated under the standard tracking protocol.
Only the first frame is initialized by ground truth, and each subsequent search region depends on previous predictions.

Thus, in Figure~\ref{fig:drift_failure_curves} and Table~\ref{tab:drift_failure_statistics}, we analyze two tracking-failure patterns as empirical proxies that connect the theory with tracking behavior. Here, instantaneous drift refers to failures dominated by abrupt erroneous displacement, such as jumps to remote distractors, and reflects sensitivity to instantaneous response-map fluctuations, or variance.
Accumulative drift refers to failures in which the prediction gradually moves away from the target, for example, when the object moves too fast, and reflects sensitivity to finite-response lag or bias.
For each sequence, we first identify the last stage of complete tracking failure, defined as the final frame of a period in which IoU remains below 0.1 for 30 consecutive frames.
For the interval in which tracking degrades from success to failure, a failure is classified as instantaneous drift if, under a given threshold, the ratio between the predicted displacement and the ground-truth displacement exceeds the threshold, and this displacement causes a sharp IoU drop.
Otherwise, the failure is classified as accumulative drift.

Figure~\ref{fig:drift_failure_curves} visualizes how the number of the two drift failures changes with the displacement-ratio threshold.
For instantaneous drift, the curves consistently follow the ordering \(\mathrm{ANN}^{\mathrm{SiamFC}} > \mathrm{HTNN} > \mathrm{CANN}^{\mathrm{DH}}\).
For accumulative drift, the ordering becomes \(\mathrm{ANN}^{\mathrm{SiamFC}} < \mathrm{HTNN} < \mathrm{CANN}^{\mathrm{DH}}\).
This is consistent with the bias--variance complementarity expected from the theoretical analysis: ANN is more sensitive to instantaneous response-map fluctuations, whereas CANN is more sensitive to finite-response lag.
Compared with \(\mathrm{ANN}^{\mathrm{SiamFC}}\), HTNN substantially reduces instantaneous-drift failures at the cost of introducing only a small amount of accumulative drift.
Compared with \(\mathrm{CANN}^{\mathrm{DR}}\), HTNN substantially reduces accumulative-drift failures while introducing only a small amount of instantaneous drift.
Thus, by exploiting the complementary error structures of the two branches, HTNN achieves stronger tracking performance.
Table~\ref{tab:drift_failure_statistics} reports the drift failure statistics across individual datasets and is broadly consistent with the above observations.
This analysis completes the theory--experiment loop at the level of failure patterns.
At the same time, the reduction of instantaneous-drift failures relative to \(\mathrm{ANN}^{\mathrm{SiamFC}}\) further indicates that HTNN achieves more stable tracking.

\subsection{Visual Analytics}
\label{sec:visual_analytics}

The quantitative results above show that HTNN improves tracking stability and localization accuracy. 
We further examine this behavior through visual analytics of response maps and tracking trajectories. 
These visualizations are used to illustrate how HTNN behaves in concrete tracking sequences.

\begin{figure}[pos=!htbp]
    \centering
    \includegraphics[width=1.0\linewidth]{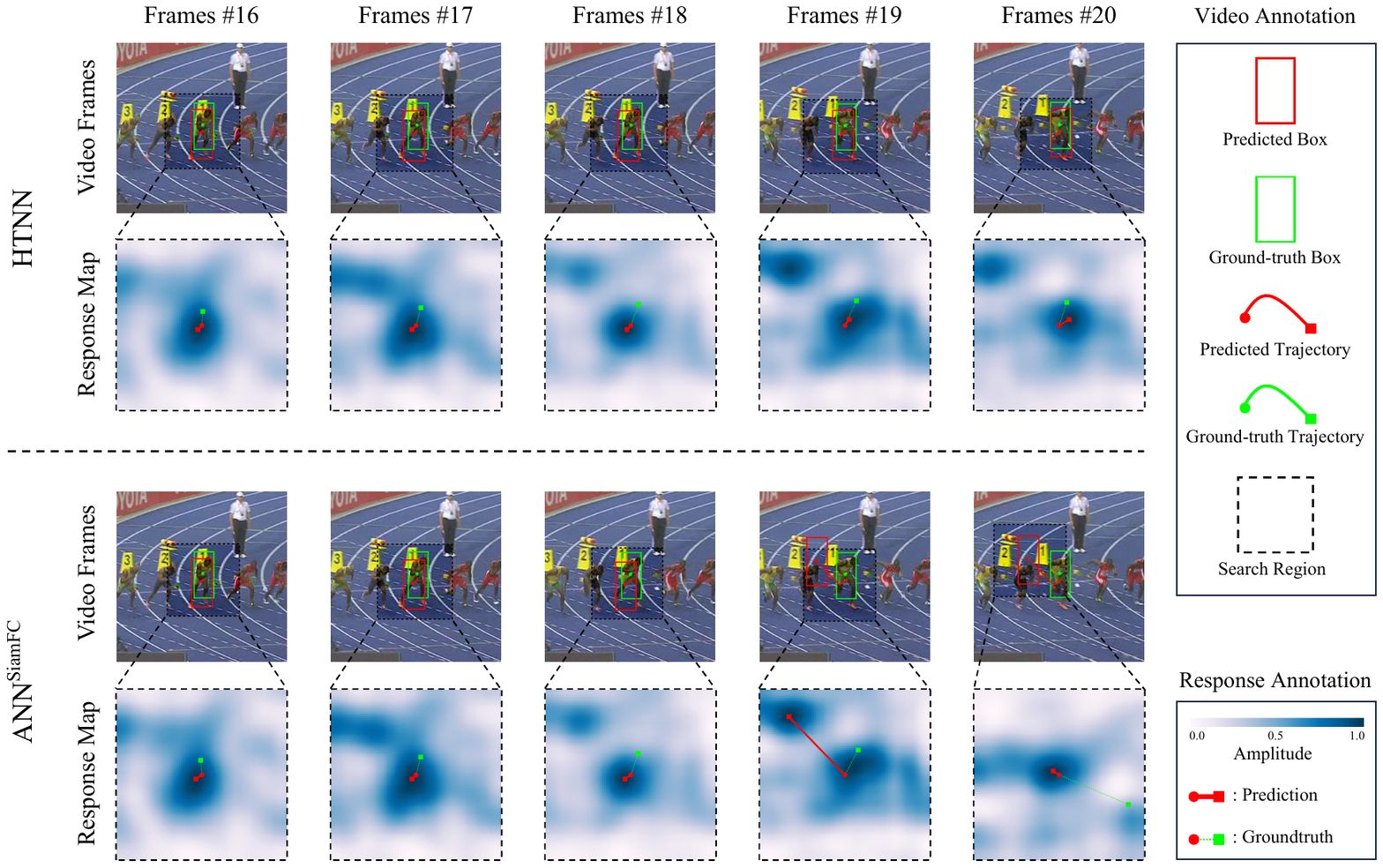}
    \caption{
    Response-map visualization comparing HTNN with \(\mathrm{ANN}^{\mathrm{SiamFC}}\). 
    Each column shows consecutive frames, response maps, predicted boxes, ground-truth boxes, and short predicted / ground-truth trajectories. 
    \(\mathrm{ANN}^{\mathrm{SiamFC}}\) exhibits an abrupt response shift in later frames, leading to instantaneous drift from the target. 
    HTNN keeps the response more concentrated around the target-side region and maintains a more stable prediction across the same interval.
    }
    \label{fig:response_map_1}
\end{figure}

\paragraph{Visual analytics of response maps.}

Figure~\ref{fig:response_map_1} illustrates a case where \(\mathrm{ANN}^{\mathrm{SiamFC}}\) suffers from instantaneous drift. 
In the later frames, its response map shifts toward a distractor-side region, and the predicted box moves away from the ground-truth box. 
HTNN suppresses this remote response peak through CANN response-map dynamics, keeping the dominant response closer to the target-side region. 
The resulting prediction remains close to the ground-truth target, which visualizes the remote-peak suppression mechanism analyzed in Section~\ref{subsec:continuous_time_response_map_dynamics}.

\begin{figure}[pos=!htbp]
    \centering
    \includegraphics[width=1.0\linewidth]{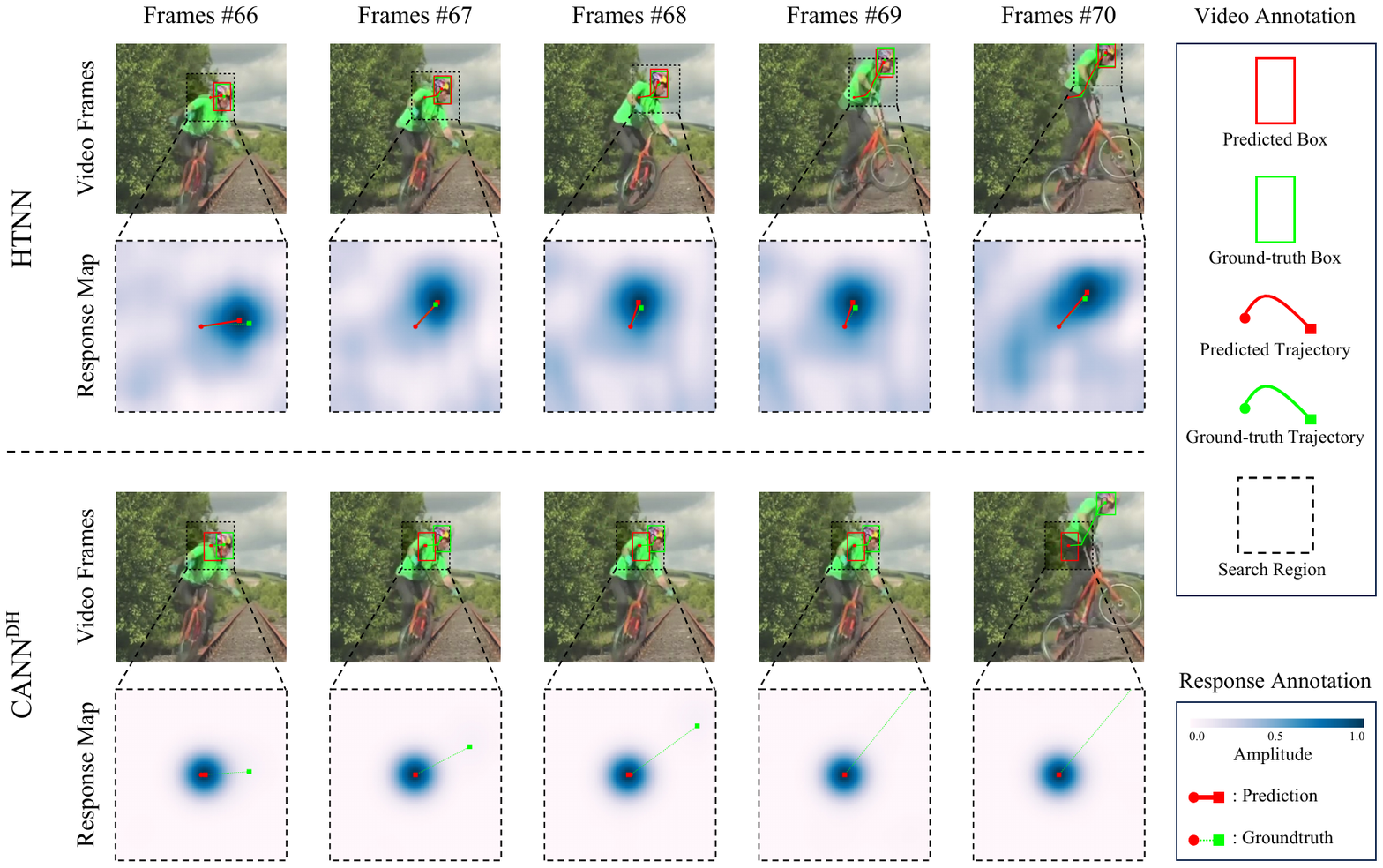}
    \caption{
    Response-map visualization comparing HTNN with \(\mathrm{CANN}^{\mathrm{DH}}\). 
    Each column shows consecutive frames, response maps, predicted boxes, ground-truth boxes, and short predicted / ground-truth trajectories. 
    \(\mathrm{CANN}^{\mathrm{DH}}\) produces a smooth response but lags behind rapid target motion. 
    HTNN follows the changing target state more promptly while preserving a concentrated response around the target.
    }
    \label{fig:response_map_2}
\end{figure}

Figure~\ref{fig:response_map_2} shows the opposite failure tendency of \(\mathrm{CANN}^{\mathrm{DH}}\). 
Its response remains smooth, but the predicted box lags behind the rapidly changing target state. 
HTNN responds more promptly to target motion and keeps the response map concentrated near the ground-truth location. 
This example indicates that the stability of HTNN is not obtained by excessive smoothing alone; it also depends on retaining current-frame ANN evidence through estimation-fusion.

Together, these two examples show the two-sided role of HTNN in response-map evolution. 
Compared with \(\mathrm{ANN}^{\mathrm{SiamFC}}\), HTNN suppresses abrupt response-induced drift. 
Compared with \(\mathrm{CANN}^{\mathrm{DH}}\), HTNN avoids excessive lag when the target state changes rapidly. 
The response-map visualizations, therefore, provide qualitative support for the stable but not over-smoothed tracking behavior observed in the quantitative results.

\begin{figure}[pos=!htbp]
    \centering
    \includegraphics[width=1.0\linewidth]{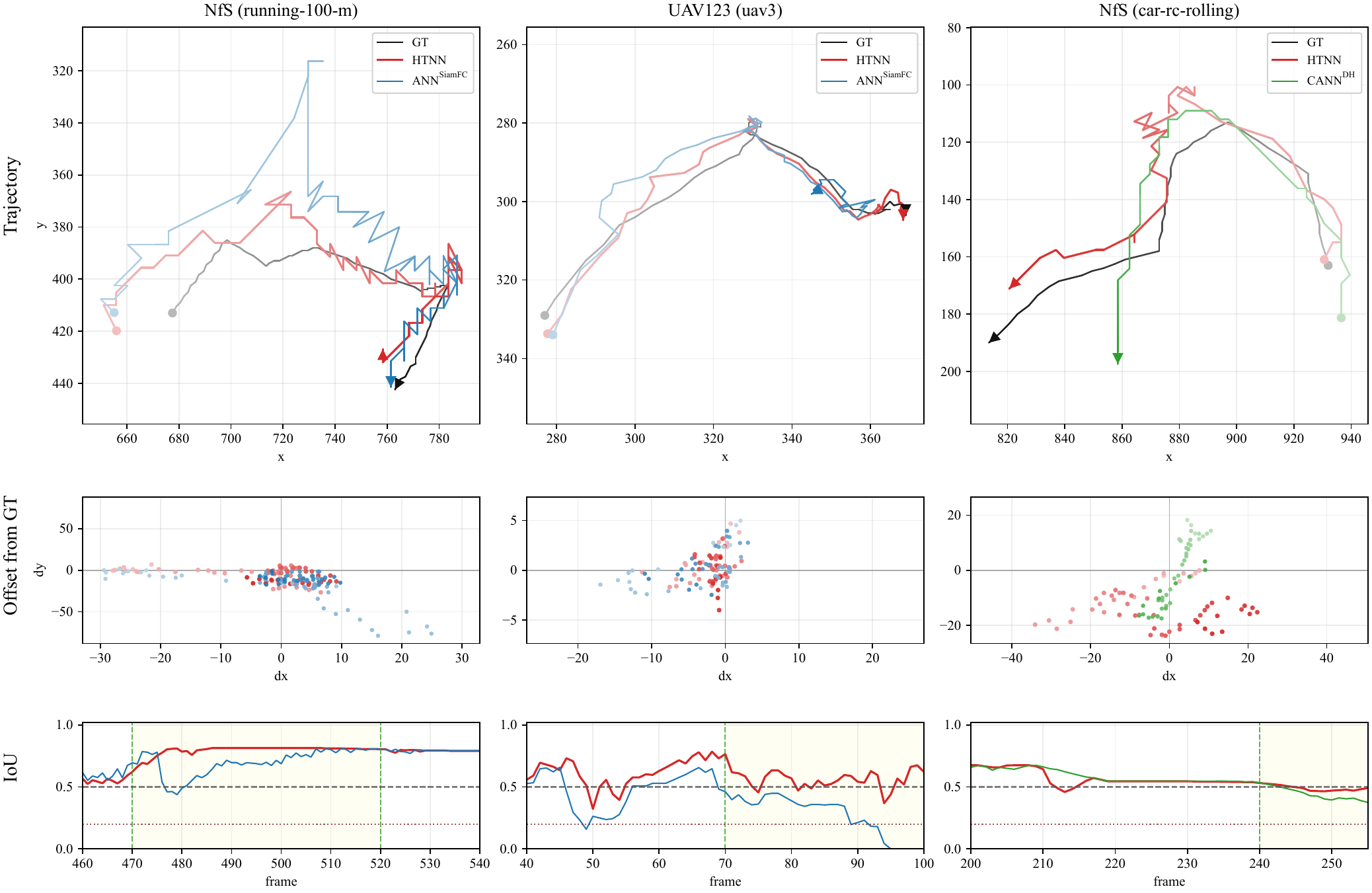}
    \caption{
    Visual analytics of tracking stability. 
    Each column shows a representative sequence. 
    The first two columns compare HTNN with \(\mathrm{ANN}^{\mathrm{SiamFC}}\), and the third column compares HTNN with \(\mathrm{CANN}^{\mathrm{DH}}\). 
    The first row shows predicted center trajectories and the ground-truth trajectory. 
    The second row shows offsets from the ground-truth center. 
    The third row shows IoU curves over the selected frame interval. 
    The yellow shaded region in the third row marks the frame interval where the stability difference between the compared methods becomes evident.
    In the first two sequences, HTNN improves stability by reducing abrupt deviations produced by \(\mathrm{ANN}^{\mathrm{SiamFC}}\). 
    In the third sequence, HTNN maintains stability without the lagged response observed in \(\mathrm{CANN}^{\mathrm{DH}}\) after a rapid target-state change.
    }
    \label{fig:trajectory_stability}
\end{figure}

\paragraph{Visual analytics of tracking trajectories.}

Figure~\ref{fig:trajectory_stability} visualizes tracking stability at the level of predicted trajectories. The first two columns compare HTNN with \(\mathrm{ANN}^{\mathrm{SiamFC}}\). 
In \textit{running-100-m}, \(\mathrm{ANN}^{\mathrm{SiamFC}}\) produces abrupt trajectory deviations, and its offsets become widely dispersed from the ground truth. 
HTNN keeps the predicted centers closer to the ground-truth trajectory and maintains a higher IoU in the highlighted interval. 
In \textit{uav3}, \(\mathrm{ANN}^{\mathrm{SiamFC}}\) becomes unstable in the later frames, which is accompanied by a sharp IoU decrease. 
HTNN keeps the prediction closer to the target and avoids this drift pattern. 
These examples show that HTNN improves tracking stability by suppressing abrupt deviations in the final prediction.

The third column compares HTNN with \(\mathrm{CANN}^{\mathrm{DH}}\). 
In \textit{car-rc-rolling}, \(\mathrm{CANN}^{\mathrm{DH}}\) produces a smooth trajectory, but it lags behind the ground-truth trajectory after the target changes direction. 
HTNN shows local fluctuations, but it follows the target-state change more promptly and maintains better overlap. 
This example shows that stable tracking in HTNN is not equivalent to maximal smoothing. 
It is achieved by combining current-frame visual evidence with CANN-based state evolution, which helps suppress abrupt drift while avoiding the large lag observed in direct CANN-based estimation.

\subsection{Ablation Study}
\label{subsec:ablation_study}

\begin{table}[htbp]
\centering
\caption{Ablation study of representation-fusion and estimation-fusion on eight benchmarks.}
\label{tab:ablation_prefusion_postfusion}
\setlength{\tabcolsep}{3.5pt}
\begin{tabular}{l cc cc cc cc}
\toprule
\textbf{Method}
& \multicolumn{2}{c}{\textbf{Direct Hybrid}\textsuperscript{a}}
& \multicolumn{2}{c}{\textbf{W/ Representation-fusion}}
& \multicolumn{2}{c}{\textbf{W/ Estimation-fusion}}
& \multicolumn{2}{c}{\textbf{Full HTNN}} \\
\cmidrule(lr){2-3}\cmidrule(lr){4-5}\cmidrule(lr){6-7}\cmidrule(lr){8-9}
\textbf{Representation-fusion}
& \multicolumn{2}{c}{\(\times\)}
& \multicolumn{2}{c}{\(\checkmark\)}
& \multicolumn{2}{c}{\(\times\)}
& \multicolumn{2}{c}{\(\checkmark\)} \\
\textbf{Estimation-fusion}
& \multicolumn{2}{c}{\(\times\)}
& \multicolumn{2}{c}{\(\times\)}
& \multicolumn{2}{c}{\(\checkmark\)}
& \multicolumn{2}{c}{\(\checkmark\)} \\
\textbf{Metric}\textsuperscript{b}
& \textbf{Pr}\(\uparrow\) & \textbf{SR}\(\uparrow\)
& \textbf{Pr}\(\uparrow\) & \textbf{SR}\(\uparrow\)
& \textbf{Pr}\(\uparrow\) & \textbf{SR}\(\uparrow\)
& \textbf{Pr}\(\uparrow\) & \textbf{SR}\(\uparrow\) \\
\midrule
OTB50
& 38.8 & 32.7
& +53.4\% & +67.0\%
& +88.9\% & +108.0\%
& \textbf{+93.6\%} & \textbf{+110.1\%} \\

OTB100
& 48.9 & 44.6
& +34.8\% & +39.7\%
& +63.8\% & +67.5\%
& \textbf{+65.4\%} & \textbf{+68.8\%} \\

GOT-10k
& 23.1 & 50.2
& +19.9\% & +11.2\%
& +53.7\% & \textbf{+20.9\%}
& \textbf{+55.0\%} & +18.3\% \\

LaSOT
& 20.2 & 26.2
& +30.7\% & +21.4\%
& +55.4\% & +34.4\%
& \textbf{+58.4\%} & \textbf{+35.1\%} \\

TColor128
& 39.8 & 33.0
& +42.2\% & +50.9\%
& +77.1\% & +92.4\%
& \textbf{+78.9\%} & \textbf{+95.2\%} \\

UAV123
& 36.7 & 35.1
& +64.9\% & +55.3\%
& +83.4\% & +68.1\%
& \textbf{+91.0\%} & \textbf{+75.8\%} \\

NfS
& 38.7 & 42.7
& +31.8\% & +25.5\%
& +66.9\% & +53.2\%
& \textbf{+67.4\%} & \textbf{+53.4\%} \\

VOT2019
& 25.8 & 20.5
& +49.2\% & +45.4\%
& +146.5\% & \textbf{+127.3\%}
& \textbf{+148.4\%} & +111.7\% \\

\midrule
Overall\textsuperscript{c}
& 30.8 & 35.1
& +39.4\% & +32.2\%
& +74.6\% & +56.3\%
& \textbf{+77.3\%} & \textbf{+56.4\%} \\
\bottomrule
\end{tabular}
\par\vspace{3pt}\footnotesize\raggedright
\textsuperscript{a} In this table, Direct Hybrid denotes the \(\mathrm{CANN}^{\mathrm{DH}}\) baseline introduced above.
\textsuperscript{b} The Direct Hybrid columns report Pr and SR, and the remaining columns report relative changes with respect to Direct Hybrid. 
\textsuperscript{c} The Overall row is computed by first taking the video-number-weighted average of each model's absolute performance across the eight datasets and then measuring the relative change against Direct Hybrid.
\par
\end{table}

Table~\ref{tab:ablation_prefusion_postfusion} examines the contribution of the two fusion operations in HTNN. 
The reference setting is Direct Hybrid, where the ANN response map is directly used as the CANN input, and the final target state is decoded from the CANN response. 
This setting implements state-space alignment alone, without the two fusion operations introduced in HTNN. 
The other variants add representation-fusion, estimation-fusion, or both under the same tracking protocol.

Representation-fusion improves the external drive before CANN evolution. 
Combining the ANN response with a motion cue and their interaction makes the CANN input more consistent with local inter-frame changes. 
This design is connected to the theoretical setting where the CANN drive is expected to be dominated by target-centered evidence with bounded response fluctuations. 
Adding representation-fusion alone improves the overall Pr and SR by 39.4\% and 32.2\%, respectively, showing that shaping the CANN drive is already beneficial before changing the final decoding rule.

Estimation-fusion produces a larger improvement. 
Instead of relying only on the CANN response after finite-time evolution, it combines the ANN and CANN response maps before \(\arg\max\). 
This directly follows the additive-fusion analysis: the ANN response supplies current-frame visual evidence, while the CANN response supplies a temporally evolved state-space response. 
Adding estimation-fusion alone improves the overall Pr and SR by 74.6\% and 56.3\%, respectively, indicating that the main gain comes from using both aligned responses in the final estimation.

The full HTNN achieves an overall improvement of 77.3\% in Pr and 56.4\% in SR over Direct Hybrid, exceeding all ablated variants in the overall average.
The result shows that the two fusion operations serve different roles. 
Representation-fusion constructs a more suitable CANN drive, while estimation-fusion uses the complementary ANN and CANN responses for final decoding. 
Their combination supports the framework-level claim that systematic and synergistic ANN--CANN hybridization is necessary for effective tracking.

\subsection{Parameter Sensitivity Analysis}
\label{subsec:parameter_sensitivity}

\begin{figure}[pos=!htbp]
    \centering
    \includegraphics[width=1.0\linewidth]{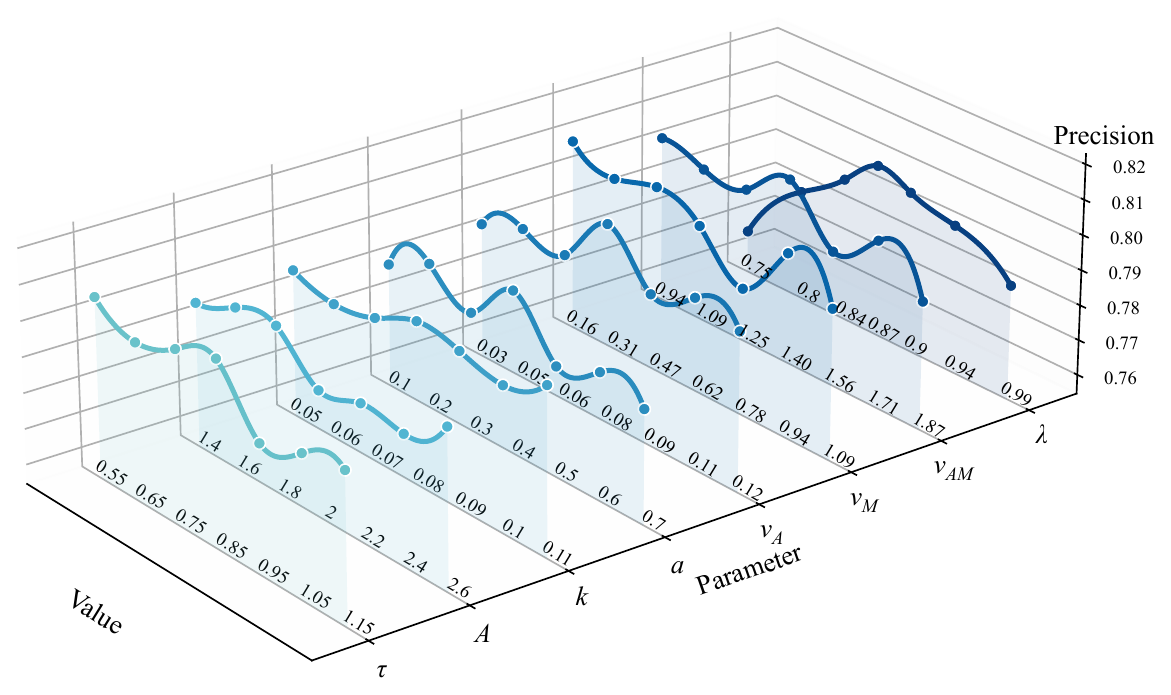}
    \caption{
    Parameter sensitivity analysis of HTNN. 
    Precision is evaluated by sweeping one parameter at a time while keeping the remaining parameters fixed at the default setting. 
    The swept parameters include CANN dynamical parameters \(\tau_c,A,k,a\), representation-fusion weights \(\nu_A,\nu_M,\nu_{AM}\), and the estimation-fusion weight \(\lambda\). 
    The \(\lambda\) curve exhibits a unimodal trend with a peak near the ANN-dominant side, while the other parameters show moderate sensitivity across the tested ranges.
    }
    \label{fig:parameters}
\end{figure}

Figure~\ref{fig:parameters} examines the sensitivity of HTNN to fusion parameters and CANN dynamical parameters. 
A particularly informative trend appears in the estimation-fusion weight \(\lambda\). 
The Precision curve shows a clear unimodal pattern, with its peak located near the ANN-dominant side rather than at either endpoint. 
This indicates that final localization should rely primarily on current-frame ANN evidence, while a nonzero CANN contribution is still needed to improve tracking stability. 
This empirical pattern is consistent with the MSE analysis in Section~\ref{subsec:mse_of_additive_hybridization}, where additive fusion admits an interior optimum under explicit dynamical-parameter conditions. 
In this sense, the \(\lambda\) sweep provides experimental support for the theoretical role of estimation-fusion.

The remaining parameters show moderate variation across the tested ranges. 
The CANN dynamical parameters \(\tau_c\), \(A\), \(k\), and \(a\) do not exhibit a sharp collapse under small perturbations, and the representation-fusion weights \(\nu_A\), \(\nu_M\), and \(\nu_{AM}\) also remain within a practical operating range. 
Although the theory gives a sufficient condition involving the bump width \(a\), this condition is derived under local leading-order assumptions and should be interpreted as a design guide rather than a necessary empirical boundary. 
Overall, the parameter study suggests that HTNN is not dependent on a fragile parameter choice; its performance is mainly shaped by the framework-level design of state-space alignment and two-stage fusion.

\section{Discussion and Conclusions}

In this work, we present a theory-grounded ANN--CANN hybridization framework for continuous-state estimation, extending current HNNs from neuron-scale coupling toward a population-scale hybridization paradigm grounded in neural population dynamics.
Through state-space alignment, the framework unifies heterogeneous ANN and CANN representations over a shared space, characterizes their bias--variance complementarity, and derives a fusion mechanism to realize the functional synergy. 
To instantiate this framework, we propose HTNN for VOT tasks. Systematic experiments on nine VOT benchmarks yield three core conclusions.
First, this hybridization consistently improves localization accuracy and tracking stability compared to baseline methods across diverse, challenging tracking conditions.
Second, drift analyses confirm that HTNN maintains stability by preventing abrupt jumps and controlling the accumulation of lag-induced errors as predicted in our theory.
Third, temporal-continuity analysis characterizes the framework's operating regime, showing that the benefits of CANN-based state evolution weaken when sparse frame sampling or discontinuous motion violates local continuity.

As a proof-of-concept implementation of the population-scale hybridization framework, HTNN also points to several directions for further development.
On the ANN side, we employ SiamFC as a lightweight testbed to validate the feasibility of the framework; adapting this architecture to more advanced AI models (such as SiamCAR~\cite{Future_ANN_1}, TransT~\cite{Future_ANN_2}, and SGLATrack~\cite{Future_ANN_3}) to achieve stronger tracking performance is a promising avenue for future work.
On the CANN side, the present model uses a single two-dimensional attractor dynamics, whereas real tracking may require adaptive dynamical mechanisms across different spatial and temporal scales, and may further benefit from learnable components.
At the framework level, the current analysis explains ANN--CANN hybridization through one-step estimation and local continuous-time dynamics.
A more complete theory should model how response maps, CANN state evolution, and search-region updates interact throughout the full tracking process.
Such a theory would also support adaptive fusion, allowing the fusion weight and CANN response parameters to be adjusted according to confidence and temporal continuity.

The experimental results highlight the potential of ANN--CANN hybridization for open-world tracking applications, where a tracker must accurately localize the target while avoiding unstable jumps under unknown environmental variations.
The integration of CANN dynamics fulfills this requirement by mechanisms of suppressing remote response peaks and maintaining continuous state evolution. 
Overall, this unified framework provides a powerful algorithmic solution for stable target tracking in autonomous and embodied systems, paving the way for low-power, dependable perception when deployed on neuromorphic hardware.

\clearpage

\appendix
\startappendixtoc
\appendixtableofcontents

\setcounter{equation}{0}
\renewcommand{\theequation}{\thesection.\arabic{subsection}.\arabic{equation}}
\makeatletter
\@addtoreset{equation}{subsection}
\makeatother

\setcounter{table}{0}
\renewcommand{\thetable}{\thesection.\arabic{table}}

\section{Proofs and Derivations for the ANN--CANN Hybridization Framework}
\label{app:theoretical_derivations}

\setcounter{assumption}{0}
\renewcommand{\theassumption}{(\alph{assumption})}

This appendix provides the mathematical details for the theoretical analysis in Sections~\ref{subsec:bias_variance_complementarity}--\ref{subsec:continuous_time_response_map_dynamics}.
Its organization follows the three theoretical components of the Method section: one-step bias--variance complementarity , functional synergy through convex additive fusion, and continuous-time stable tracking.
The state-space alignment in Section~\ref{sec:framework_setting} provides the shared target-state representation used throughout the analysis, while the HTNN model in Section~\ref{sec:htnn_architecture} instantiates the resulting framework.

Throughout the appendix, the notation is aligned with the main text.
All asymptotic notation in this appendix is taken with respect to the effective sample size \(n\to\infty\).
For deterministic quantities, \(r_n=o(a_n)\) means \(r_n/a_n\to0\).
For random quantities, \(R_n=o_p(a_n)\) means \(R_n/a_n\) converges to \(0\) in probability.
Accordingly, expansions of random estimators use \(o_p(\cdot)\), whereas deterministic statistical quantities obtained after taking expectations or covariances, such as bias, covariance, and MSE, use \(o(\cdot)\).

\subsection{Derivations for Bias--Variance Complementarity}
\label{app:bias_variance_complementarity}

This section provides the assumptions and derivations for the one-step bias--variance analysis in Section~\ref{subsec:bias_variance_complementarity}.
We first specify the local statistical assumptions for the ANN response map and the local dynamical assumptions for the CANN activity bump.
We then derive the Bayes response induced by Gaussian heatmap supervision, prove the ANN and CANN estimator expansions, and compare their leading-order variance terms.

\subsubsection{Local assumptions for the one-step analysis}
\label{app:technical_assumptions_theory}

The assumptions below specify the local statistical and dynamical regime used in Section~\ref{subsec:bias_variance_complementarity}.
They are not intended as a complete model of full-sequence tracking.
Rather, they isolate the leading-order behavior of ANN--CANN hybridization in a single frame-to-frame state transition.

\begin{assumption}[Supervised heatmap and label noise]
\label{assump:heatmap_label_noise}
Let \(\bm x^\ast\in\mathbb R^2\) denote the annotated target center for the current frame.
Conditioned on the true target center \(\bm x_1\), the annotation noise is isotropic Gaussian:
\begin{equation}
\bm x^\ast\mid \bm x_1\sim\mathcal N(\bm x_1,\sigma_{\mathrm{lab}}^2I_2),
\end{equation}
where \(I_2\) is the \(2\times2\) identity matrix.
The supervised heatmap centered at \(\bm x^\ast\) is
\begin{equation}
g_h(\bm x-\bm x^\ast)=
\exp\!\left(-\frac{\|\bm x-\bm x^\ast\|^2}{2h^2}\right),
\end{equation}
where \(h>0\) is the supervision width.
In local coordinates, we write \(\bm u=\bm x-\bm x_1\) and \(\bm\zeta=\bm x^\ast-\bm x_1\), where \(\bm\zeta\) denotes the annotation noise.
\end{assumption}

\begin{assumption}[Local asymptotic regularity of the ANN response map]
\label{assump:ann_asymptotic}
Let \(S_A^{(n)}(\bm u)\) be the trained ANN response map around the current target state, where \(n\) denotes the effective number of supervised training samples.
There exists a deterministic Bayes response \(f^\star(\bm u)\) such that
\begin{equation}
S_A^{(n)}(\bm u)=f^\star(\bm u)+\delta_n(\bm u).
\end{equation}
Here, \(\delta_n\) represents the finite-sample fluctuation of the learned response map around the Bayes response.
Statistically, this corresponds to the local asymptotic regime in which a regular supervised estimator concentrates around its population target, while its residual fluctuation is of order \(n^{-1/2}\).
Specifically, on a local neighborhood \(\mathcal B\subset\mathbb R^2\) of \(\bm u=\bm 0\), we assume
\begin{equation}
\sqrt n\,\delta_n\Rightarrow G
\quad\text{in } C^2(\mathcal B),
\end{equation}
where \(G\) is a zero-mean Gaussian random field.
Here, \(C^2(\mathcal B)\) denotes the space of twice continuously differentiable functions on \(\mathcal B\), so this convergence controls the local asymptotic behavior of the response map and its first- and second-order derivatives.
This assumption abstracts the standard central-limit-type behavior of smooth response estimators and is used only to characterize local fluctuations around the target-centered response.
\end{assumption}

\begin{assumption}[Local covariance model]
\label{assump:local_covariance}
For closed-form covariance constants, the local covariance kernel of \(G\) is modeled as an isotropic Gaussian kernel:
\begin{equation}
K(\bm u,\bm v)
:=
\operatorname{Cov}(G(\bm u),G(\bm v))
=
\sigma_G^2
\exp\!\left(
-\frac{\|\bm u-\bm v\|^2}{2\xi^2}
\right),
\end{equation}
where \(\sigma_G^2>0\) is the pointwise fluctuation variance and \(\xi>0\) is the local correlation length.
With a general covariance operator, the same arguments yield abstract covariance expressions; this local model gives the closed-form constants used in Section~\ref{subsec:bias_variance_complementarity}.
\end{assumption}

\begin{assumption}[CANN single-bump and translational-mode reduction]
\label{assump:cann_modal_reduction}
The CANN branch operates in a local single-bump regime around \(\bm x_1\).
For the one-step transition \(\bm x_1=\bm x_0+\bm\Delta\), the frame-to-frame displacement is assumed to be small relative to the CANN bump width:
\begin{equation}
\frac{\|\bm\Delta\|}{a}=o(1).
\end{equation}
Equivalently, in a finite local regime, one may read this condition as
\(\|\bm\Delta\|\le \varepsilon_\Delta a\) for a small constant \(0<\varepsilon_\Delta\ll1\).
This condition keeps the previous and current target states within the same local bump neighborhood, so that the bump centered near \(\bm x_0\) can be approximated as translating toward \(\bm x_1\) through the translational modes.
Large frame-to-frame jumps comparable to or larger than the bump width are outside this local one-step analysis.

Its stationary synaptic-input bump prototype is
\begin{equation}
\tilde U(\bm u)
=
U_0
\exp\!\left(-\frac{\|\bm u\|^2}{4a^2}\right),
\end{equation}
where \(U_0>0\) is the bump amplitude and \(a>0\) is the bump width.
Writing \(\bm u=(u_1,u_2)^\top\), the two normalized translational modes are
\begin{equation}
r_i(\bm u)=
\frac{u_i}{\sqrt{2\pi}a^2}
\exp\!\left(-\frac{\|\bm u\|^2}{4a^2}\right),
\qquad i\in\{1,2\},
\end{equation}
with the standard \(L^2(\mathbb R^2)\) inner product
\begin{equation}
\langle \phi,\psi\rangle
=
\int_{\mathbb R^2}\phi(\bm u)\psi(\bm u)\,\mathrm d\bm u.
\end{equation}
Define
\begin{equation}
a_{i,n}:=\langle\delta_n,r_i\rangle,
\qquad
\bm a_n=(a_{1,n},a_{2,n})^\top.
\end{equation}
Within the weak-stimulus, small-distortion, and finite-response regime, the CANN center dynamics is dominated by these two translational modes.
The non-translational deformation modes contribute only higher-order corrections to the center equation over the response window \([0,T]\).
This reduction follows the standard perturbation picture of 2D CANNs, where the translational modes are neutral and the remaining distortion modes decay under recurrent dynamics~\citep{SiWuCANN1,SiWuCANN2}.
\end{assumption}

\subsubsection{Bayes response under Gaussian heatmap supervision}
\label{app:bayes_response}

This section derives the deterministic Bayes response \(f^\star\) used in the ANN estimator analysis.

\begin{proof}[Derivation of the Bayes response]
Under squared-loss heatmap supervision, the Bayes response is the conditional expectation of the supervised heatmap:
\begin{equation}
f^\star(\bm x)
=
\mathbb E\!\left[g_h(\bm x-\bm x^\ast)\mid \bm x_1\right].
\end{equation}
Using the local coordinate \(\bm u=\bm x-\bm x_1\) and the annotation noise \(\bm\zeta=\bm x^\ast-\bm x_1\), Assumption~\ref{assump:heatmap_label_noise} gives \(\bm\zeta\sim\mathcal N(\bm 0,\sigma_{\mathrm{lab}}^2I_2)\).
Hence
\begin{equation}
f^\star(\bm u)
=
\int_{\mathbb R^2}
\exp\!\left(-\frac{\|\bm u-\bm\zeta\|^2}{2h^2}\right)
\frac{1}{2\pi\sigma_{\mathrm{lab}}^2}
\exp\!\left(-\frac{\|\bm\zeta\|^2}{2\sigma_{\mathrm{lab}}^2}\right)
\,\mathrm d\bm\zeta .
\end{equation}
This integral is the convolution of an unnormalized isotropic Gaussian with a normalized isotropic Gaussian density.
Completing the square gives
\begin{equation}
f^\star(\bm u)
=
M
\exp\!\left(-\frac{\|\bm u\|^2}{2\rho^2}\right),
\qquad
\rho^2=h^2+\sigma_{\mathrm{lab}}^2,
\qquad
M=\frac{h^2}{h^2+\sigma_{\mathrm{lab}}^2}.
\end{equation}
The amplitude \(M\) appears because \(g_h\) has unit peak and is not probability-density normalized.

The gradient and Hessian at the peak are
\begin{equation}
\nabla f^\star(\bm 0)=\bm 0,
\qquad
\nabla^2 f^\star(\bm 0)=-\frac{M}{\rho^2}I_2.
\end{equation}
Thus \(\bm u=\bm 0\) is a strict local maximizer of \(f^\star\), and the local curvature of the Bayes response is controlled by \(M/\rho^2\).
\end{proof}

\subsubsection{Proof of ANN asymptotic unbiasedness and covariance}
\label{app:proof_ann_unbiased_2d}

\begin{proof}[Proof of Proposition~\ref{prop:ann_unbiased_2d_main}]
Let \(\hat{\bm u}_A\) be the local maximizer of
\begin{equation}
S_A^{(n)}(\bm u)=f^\star(\bm u)+\delta_n(\bm u)
\end{equation}
near \(\bm u=\bm 0\).
By Appendix~\ref{app:bayes_response},
\begin{equation}
\nabla^2 f^\star(\bm 0)=-\frac{M}{\rho^2}I_2,
\end{equation}
which is strictly negative definite.
Together with the \(C^2(\mathcal B)\) convergence in Assumption~\ref{assump:ann_asymptotic}, this implies that, with probability tending to one, \(S_A^{(n)}\) has a unique local maximizer in a sufficiently small neighborhood of \(\bm u=\bm 0\).

The first-order optimality condition is
\begin{equation}
\nabla S_A^{(n)}(\hat{\bm u}_A)=\bm 0.
\end{equation}
Expanding around \(\bm u=\bm 0\) gives
\begin{equation}
\bm 0
=
\nabla f^\star(\bm 0)
+
\nabla^2 f^\star(\bm 0)\hat{\bm u}_A
+
\nabla\delta_n(\bm 0)
+
o_p(n^{-1/2}).
\end{equation}
Since \(\nabla f^\star(\bm 0)=\bm 0\) and \(\nabla^2 f^\star(\bm 0)=-(M/\rho^2)I_2\), we obtain
\begin{equation}
\hat{\bm u}_A
=
\frac{\rho^2}{M}\nabla\delta_n(\bm 0)
+
o_p(n^{-1/2}).
\end{equation}
Therefore,
\begin{equation}
\hat{\bm x}_A
=
\bm x_1+
\frac{\rho^2}{M}\nabla\delta_n(\bm 0)
+
o_p(n^{-1/2}).
\end{equation}

Because \(\sqrt n\,\delta_n\Rightarrow G\) in \(C^2(\mathcal B)\), we have
\begin{equation}
\sqrt n\,\nabla\delta_n(\bm 0)
\Rightarrow
\nabla G(\bm 0).
\end{equation}
The field \(G\) is zero mean, so the leading term of \(\hat{\bm x}_A-\bm x_1\) has zero mean.
Hence
\begin{equation}
\operatorname{Bias}_A
=
\mathbb E[\hat{\bm x}_A-\bm x_1]
=
o(n^{-1/2}).
\end{equation}

It remains to compute the covariance.
By Assumption~\ref{assump:local_covariance},
\begin{equation}
K(\bm u,\bm v)
=
\operatorname{Cov}(G(\bm u),G(\bm v))
=
\sigma_G^2\exp\!\left(-\frac{\|\bm u-\bm v\|^2}{2\xi^2}\right).
\end{equation}
For \(i,j\in\{1,2\}\), the \((i,j)\)-th entry of the gradient covariance is
\begin{equation}
\big[\operatorname{Cov}(\nabla G(\bm 0))\big]_{ij}
=
\operatorname{Cov}(\partial_iG(\bm 0),\partial_jG(\bm 0))
=
\left.
\partial_{u_i}\partial_{v_j}K(\bm u,\bm v)
\right|_{\bm u=\bm v=\bm 0}.
\end{equation}
Writing \(\bm w=\bm u-\bm v\), we have
\begin{equation}
\partial_{u_i}\partial_{v_j}K(\bm u,\bm v)
=
-\partial_{w_i}\partial_{w_j}
\left[
\sigma_G^2\exp\!\left(-\frac{\|\bm w\|^2}{2\xi^2}\right)
\right].
\end{equation}
Evaluating at \(\bm w=\bm 0\) gives
\begin{equation}
\operatorname{Cov}(\partial_iG(\bm 0),\partial_jG(\bm 0))
=
\frac{\sigma_G^2}{\xi^2}\delta_{ij}.
\end{equation}
Therefore,
\begin{equation}
\operatorname{Cov}(\nabla\delta_n(\bm 0))
=
\frac{\sigma_G^2}{n\xi^2}I_2
+
o(n^{-1}).
\end{equation}
Substituting the local maximizer expansion gives
\begin{equation}
\Sigma_A
:=
\operatorname{Cov}(\hat{\bm x}_A\mid \bm x_1)
=
\frac{\sigma_G^2}{n}
\frac{\rho^4}{M^2\xi^2}I_2
+
o(n^{-1}).
\end{equation}
Taking the trace gives
\begin{equation}
s_A
:=
\operatorname{tr}(\Sigma_A)
=
\frac{2\sigma_G^2\rho^4}{nM^2\xi^2}
+
o(n^{-1}).
\end{equation}
This proves Proposition~\ref{prop:ann_unbiased_2d_main}.
\end{proof}

\subsubsection{Proof of CANN finite-response bias and covariance}
\label{app:proof_cann_bv_2d}

\begin{proof}[Proof of Proposition~\ref{prop:cann_bv_2d_main}]
Let \(\bm y(t)\in\mathbb R^2\) denote the displacement of the CANN bump center relative to the current target state \(\bm x_1\).
At the beginning of the response window, the bump is centered at the previous state, so
\begin{equation}
\bm y(0)=\bm x_0-\bm x_1=-\bm\Delta.
\end{equation}
The CANN external drive is the ANN response map in the aligned formulation:
\begin{equation}
I_{\mathrm{ext}}(\bm u,t)=f^\star(\bm u)+\delta_n(\bm u).
\end{equation}
Under the single-bump approximation, the CANN synaptic input is written as
\begin{equation}
U(\bm u,t)
=
\tilde U(\bm u-\bm y(t))
+
R_\perp(\bm u,t),
\end{equation}
where \(R_\perp\) denotes the stable non-translational deformation component.

Projecting the CANN dynamics onto the shifted translational modes gives
\begin{equation}
\dot y_i(t)
=
\gamma_{\mathrm{dyn}}
\left\langle I_{\mathrm{ext}}(\cdot,t),r_i(\cdot-\bm y(t))\right\rangle
+
o(\|\bm y(t)\|)
+
o_p(n^{-1/2}),
\qquad
i\in\{1,2\},
\end{equation}
where
\begin{equation}
\gamma_{\mathrm{dyn}}
=
\frac{\sqrt{2/\pi}}{\tau_cU_0}.
\end{equation}
This coefficient follows from
\begin{equation}
\partial_{u_i}\tilde U(\bm u)
=
-\frac{u_i}{2a^2}U_0
\exp\!\left(-\frac{\|\bm u\|^2}{4a^2}\right)
=
-U_0\sqrt{\frac{\pi}{2}}\,r_i(\bm u),
\end{equation}
which converts the external-drive projection onto \(r_i\) into a center-velocity term.

We first compute the deterministic contribution of \(f^\star\).
For \(i=1\), using
\begin{equation}
r_1(\bm u)=
\frac{u_1}{\sqrt{2\pi}a^2}
\exp\!\left(-\frac{\|\bm u\|^2}{4a^2}\right),
\end{equation}
we have
\begin{equation}
\left\langle f^\star,r_1(\cdot-\bm y)\right\rangle
=
\frac{M}{\sqrt{2\pi}a^2}
\int_{\mathbb R^2}
(u_1-y_1)
\exp\!\left(
-\frac{\|\bm u\|^2}{2\rho^2}
-\frac{\|\bm u-\bm y\|^2}{4a^2}
\right)
\,\mathrm d\bm u .
\end{equation}
Expanding this Gaussian integral to first order in \(\bm y\) gives
\begin{equation}
\left\langle f^\star,r_1(\cdot-\bm y)\right\rangle
=
-
\frac{8Ma^2\rho^2}{\sqrt{2/\pi}(\rho^2+2a^2)^2}
y_1
+
O(\|\bm y\|^2).
\end{equation}
The same calculation holds for \(i=2\).
Multiplying by \(\gamma_{\mathrm{dyn}}\), the deterministic restoring coefficient is
\begin{equation}
\beta
=
\gamma_{\mathrm{dyn}}
\frac{8Ma^2\rho^2}{\sqrt{2/\pi}(\rho^2+2a^2)^2}
=
\frac{8Ma^2\rho^2}{\tau_cU_0(\rho^2+2a^2)^2}.
\end{equation}

For the fluctuation term, Assumption~\ref{assump:cann_modal_reduction} gives
\begin{equation}
\left\langle \delta_n,r_i(\cdot-\bm y(t))\right\rangle
=
a_{i,n}
+
o_p(n^{-1/2})
\end{equation}
in the local tracking regime.
Thus, after first-order local linearization, the CANN center dynamics is
\begin{equation}
\dot y_i(t)
=
-\beta y_i(t)
+
\gamma_{\mathrm{dyn}}a_{i,n}
+
o_p(n^{-1/2}),
\qquad
i\in\{1,2\}.
\end{equation}

Solving this linear equation on \([0,T]\), and writing
\begin{equation}
\alpha=e^{-\beta T},
\end{equation}
yields
\begin{equation}
\bm y(T)
=
-\alpha\bm\Delta
+
\frac{1-\alpha}{\beta}\gamma_{\mathrm{dyn}}\bm a_n
+
o_p(n^{-1/2}).
\end{equation}
Since \(\hat{\bm x}_C=\bm x_1+\bm y(T)\), we obtain
\begin{equation}
\hat{\bm x}_C
=
\bm x_1-\alpha\bm\Delta
+
\frac{1-\alpha}{\beta}\gamma_{\mathrm{dyn}}\bm a_n
+
o_p(n^{-1/2}).
\label{eq:cann_estimator_expansion_appendix}
\end{equation}
Because the leading finite-sample fluctuation is zero mean, \(\mathbb E[\bm a_n]=\bm 0+o(n^{-1/2})\).
Taking expectations in Eq.~\eqref{eq:cann_estimator_expansion_appendix} gives
\begin{equation}
\operatorname{Bias}_C
=
\mathbb E[\hat{\bm x}_C-\bm x_1]
=
-\alpha\bm\Delta
+
o(n^{-1/2}).
\end{equation}

We now compute the covariance of \(\bm a_n\).
From Assumption~\ref{assump:local_covariance}, for \(a_{1,n}=\langle\delta_n,r_1\rangle\),
\begin{equation}
\operatorname{Var}(a_{1,n})
=
\frac{1}{n}
\int_{\mathbb R^2}
\int_{\mathbb R^2}
r_1(\bm u)K(\bm u,\bm v)r_1(\bm v)
\,\mathrm d\bm u\,\mathrm d\bm v
+
o(n^{-1}).
\end{equation}
The same expression holds for \(a_{2,n}\), and the cross term vanishes by symmetry.
Using the Fourier convention
\begin{equation}
\widehat\phi(\bm k)
=
\int_{\mathbb R^2}e^{-i\bm k^\top\bm u}\phi(\bm u)\,\mathrm d\bm u,
\end{equation}
whose inverse transform contains the factor \((2\pi)^{-2}\), we have
\begin{equation}
\widehat K(\bm k)
=
2\pi\sigma_G^2\xi^2
\exp\!\left(-\frac{\xi^2\|\bm k\|^2}{2}\right),
\end{equation}
and
\begin{equation}
\widehat r_1(\bm k)
=
-i\,4\sqrt{2\pi}a^2 k_1
\exp(-a^2\|\bm k\|^2).
\end{equation}
Therefore,
\begin{equation}
\int\!\!\int r_1(\bm u)K(\bm u,\bm v)r_1(\bm v)
\,\mathrm d\bm u\,\mathrm d\bm v
=
\frac{1}{(2\pi)^2}
\int_{\mathbb R^2}
|\widehat r_1(\bm k)|^2\widehat K(\bm k)
\,\mathrm d\bm k .
\end{equation}
Substituting the two Fourier transforms and using
\begin{equation}
\int_{\mathbb R^2} k_1^2 e^{-B\|\bm k\|^2}\,\mathrm d\bm k
=
\frac{\pi}{2B^2},
\qquad
B=2a^2+\frac{\xi^2}{2},
\end{equation}
gives
\begin{equation}
\operatorname{Var}(a_{1,n})
=
\frac{\sigma_G^2}{n}
\frac{32\pi a^4\xi^2}{(\xi^2+4a^2)^2}
+
o(n^{-1}).
\end{equation}
Thus,
\begin{equation}
\operatorname{Cov}(\bm a_n)
=
\frac{\sigma_G^2}{n}J(a,\xi)I_2
+
o(n^{-1}),
\qquad
J(a,\xi)
=
\frac{32\pi a^4\xi^2}{(\xi^2+4a^2)^2}.
\label{eq:cov_an_appendix}
\end{equation}

From Eq.~\eqref{eq:cann_estimator_expansion_appendix},
\begin{equation}
\hat{\bm x}_C-\mathbb E\hat{\bm x}_C
=
\frac{1-\alpha}{\beta}\gamma_{\mathrm{dyn}}\bm a_n
+
o_p(n^{-1/2}).
\end{equation}
Using Eq.~\eqref{eq:cov_an_appendix}, we obtain
\begin{equation}
\Sigma_C
:=
\operatorname{Cov}(\hat{\bm x}_C\mid\bm\Delta)
=
\left(\frac{1-\alpha}{\beta}\gamma_{\mathrm{dyn}}\right)^2
\frac{\sigma_G^2}{n}J(a,\xi)I_2
+
o(n^{-1}).
\end{equation}
Since
\begin{equation}
\frac{\gamma_{\mathrm{dyn}}}{\beta}
=
\frac{\sqrt{2/\pi}(\rho^2+2a^2)^2}{8Ma^2\rho^2},
\end{equation}
we get
\begin{equation}
\Sigma_C
=
\frac{\sigma_G^2}{n}
(1-\alpha)^2
\frac{\xi^2(\rho^2+2a^2)^4}
{M^2\rho^4(\xi^2+4a^2)^2}
I_2
+
o(n^{-1}).
\end{equation}
Taking the trace gives
\begin{equation}
s_C
:=
\operatorname{tr}(\Sigma_C)
=
\frac{2\sigma_G^2}{n}
(1-\alpha)^2
\frac{\xi^2(\rho^2+2a^2)^4}
{M^2\rho^4(\xi^2+4a^2)^2}
+
o(n^{-1}).
\end{equation}
This proves Proposition~\ref{prop:cann_bv_2d_main}.
\end{proof}

\subsubsection{Variance comparison and lower-variance condition}
\label{app:proof_variance_comparison_2d}

\begin{proof}[Derivation of the variance comparison]
From Eq.~\eqref{eq:sigma_A_2d_main} and Eq.~\eqref{eq:sigma_C_2d_main}, the leading covariance ratio is
\begin{equation}
\frac{\Sigma_C}{\Sigma_A}
=
(1-\alpha)^2
\frac{\xi^4(\rho^2+2a^2)^4}
{\rho^8(\xi^2+4a^2)^2}
+
o(1).
\end{equation}
Equivalently,
\begin{equation}
\Sigma_C
=
\bar\kappa^2\Sigma_A
+
o(n^{-1}),
\qquad
\bar\kappa
=
(1-\alpha)
\frac{\xi^2(\rho^2+2a^2)^2}
{\rho^4(\xi^2+4a^2)}.
\end{equation}
The same relation holds after taking traces:
\begin{equation}
s_C
=
\bar\kappa^2s_A
+
o(n^{-1}).
\end{equation}
Thus, \(\bar\kappa<1\) means that the CANN branch has a smaller leading-order variance than the ANN branch.

Since \(1-\alpha\le1\), a sufficient condition independent of the response time is
\begin{equation}
\frac{\xi^2(\rho^2+2a^2)^2}
{\rho^4(\xi^2+4a^2)}
<1.
\end{equation}
Let \(t=a^2\).
After expanding and canceling common terms, this inequality becomes
\begin{equation}
\xi^2t
<
\rho^2(\rho^2-\xi^2).
\end{equation}
Therefore, when \(\rho^2>\xi^2\), choosing
\begin{equation}
a^2
<
\frac{\rho^2(\rho^2-\xi^2)}{\xi^2}
\end{equation}
is sufficient for \(\bar\kappa<1\).
This proves the lower-variance condition used in Section~\ref{subsec:bias_variance_complementarity}.
\end{proof}

\subsection{Derivations for Functional Synergy through Additive Fusion}
\label{app:functional_synergy_additive_fusion}
\label{app:proof_hybrid_mse}

This section provides the derivations for Section~\ref{subsec:mse_of_additive_hybridization}.
The goal is to justify how convex additive fusion uses the bias--variance complementarity established in Section~\ref{subsec:bias_variance_complementarity} to reduce the leading-order MSE.
We first derive the ANN--CANN cross covariance induced by the shared response-map fluctuation.
We then derive the MSE of the convex additive estimator, the endpoint expansions, and the convex-fusion theorem.

\subsubsection{ANN--CANN cross covariance under shared response-map fluctuations}
\label{app:cross_covariance_additive_fusion}

\begin{proof}[Derivation of the ANN--CANN cross covariance]
Define the centered stochastic errors of the ANN and CANN estimators as
\begin{equation}
\tilde{\bm e}_A
:=
\hat{\bm x}_A-\mathbb E[\hat{\bm x}_A],
\qquad
\tilde{\bm e}_C
:=
\hat{\bm x}_C-\mathbb E[\hat{\bm x}_C].
\end{equation}
Following the notation in Section~\ref{subsec:mse_of_additive_hybridization}, define
\begin{equation}
\Sigma_{AC}
:=
\mathbb E[\tilde{\bm e}_A\tilde{\bm e}_C^\top],
\qquad
s_{AC}:=\operatorname{tr}(\Sigma_{AC}).
\end{equation}
From Proposition~\ref{prop:ann_unbiased_2d_main}, the leading ANN fluctuation is
\begin{equation}
\tilde{\bm e}_A
=
\frac{\rho^2}{M}\nabla\delta_n(\bm 0)
+
o_p(n^{-1/2}).
\end{equation}
From Proposition~\ref{prop:cann_bv_2d_main}, the centered CANN fluctuation is
\begin{equation}
\tilde{\bm e}_C
=
\frac{1-\alpha}{\beta}\gamma_{\mathrm{dyn}}\bm a_n
+
o_p(n^{-1/2}).
\end{equation}
Thus the leading cross covariance is determined by
\(\operatorname{Cov}(\partial_i\delta_n(\bm 0),a_{i,n})\), or equivalently by
\(\operatorname{Cov}(\partial_iG(\bm 0),\langle G,r_i\rangle)\) at the Gaussian-field limit.

For \(i=1\), using the covariance kernel \(K(\bm u,\bm v)=\operatorname{Cov}(G(\bm u),G(\bm v))\), we have
\begin{equation}
\operatorname{Cov}(\partial_1G(\bm 0),\langle G,r_1\rangle)
=
\int_{\mathbb R^2}
\left.
\partial_{u_1}K(\bm u,\bm v)
\right|_{\bm u=\bm 0}
r_1(\bm v)\,\mathrm d\bm v .
\end{equation}
Under Assumption~\ref{assump:local_covariance},
\begin{equation}
K(\bm u,\bm v)
=
\sigma_G^2
\exp\!\left(
-\frac{\|\bm u-\bm v\|^2}{2\xi^2}
\right),
\end{equation}
and therefore
\begin{equation}
\left.
\partial_{u_1}K(\bm u,\bm v)
\right|_{\bm u=\bm 0}
=
\frac{v_1\sigma_G^2}{\xi^2}
\exp\!\left(-\frac{\|\bm v\|^2}{2\xi^2}\right).
\end{equation}
Using
\begin{equation}
r_1(\bm v)
=
\frac{v_1}{\sqrt{2\pi}a^2}
\exp\!\left(-\frac{\|\bm v\|^2}{4a^2}\right),
\end{equation}
direct Gaussian integration gives
\begin{equation}
\operatorname{Cov}(\partial_1G(\bm 0),\langle G,r_1\rangle)
=
\frac{4\sqrt{2\pi}\sigma_G^2a^2\xi^2}
{(\xi^2+2a^2)^2}.
\end{equation}
The same expression holds for \(i=2\), and the off-diagonal terms vanish by symmetry.

Substituting the leading ANN and CANN fluctuations gives the per-coordinate cross covariance
\begin{equation}
\operatorname{Cov}(\tilde e_{A,i},\tilde e_{C,i})
=
\frac{\sigma_G^2}{n}
(1-\alpha)
\frac{\xi^2(\rho^2+2a^2)^2}
{M^2(\xi^2+2a^2)^2}
+
o(n^{-1}),
\qquad i\in\{1,2\}.
\end{equation}
Hence
\begin{equation}
s_{AC}
=
\operatorname{tr}(\Sigma_{AC})
=
\frac{2\sigma_G^2}{n}
(1-\alpha)
\frac{\xi^2(\rho^2+2a^2)^2}
{M^2(\xi^2+2a^2)^2}
+
o(n^{-1}).
\label{eq:sac_appendix}
\end{equation}
Using
\begin{equation}
s_A
=
\frac{2\sigma_G^2}{n}
\frac{\rho^4}{M^2\xi^2}
+
o(n^{-1}),
\end{equation}
and
\begin{equation}
s_C
=
\frac{2\sigma_G^2}{n}
(1-\alpha)^2
\frac{\xi^2(\rho^2+2a^2)^4}
{M^2\rho^4(\xi^2+4a^2)^2}
+
o(n^{-1}),
\end{equation}
we obtain
\begin{equation}
s_{AC}
=
\chi_{AC}\sqrt{s_As_C}
+
o(n^{-1}),
\qquad
\chi_{AC}
=
\frac{\xi^2(\xi^2+4a^2)}
{(\xi^2+2a^2)^2}.
\label{eq:sac_chi_appendix}
\end{equation}
Moreover, \(\chi_{AC}\in(0,1)\), because
\begin{equation}
(\xi^2+2a^2)^2-\xi^2(\xi^2+4a^2)
=
4a^4
>
0.
\end{equation}
Thus, the two estimators share the same finite-sample response-map fluctuation source, but the translational-mode projection keeps their leading cross covariance below full scalar correlation.
\end{proof}

\subsubsection{MSE of convex additive fusion}
\label{app:hybrid_mse_derivation}

\begin{proof}[Derivation of Eq.~\eqref{eq:hybrid_mse_main}]
Let
\begin{equation}
\bm e_A:=\hat{\bm x}_A-\bm x_1,
\qquad
\bm e_C:=\hat{\bm x}_C-\bm x_1.
\end{equation}
From Propositions~\ref{prop:ann_unbiased_2d_main} and~\ref{prop:cann_bv_2d_main},
\begin{equation}
\mathbb E[\bm e_A]
=
o(n^{-1/2}),
\qquad
\mathbb E[\bm e_C]
=
-\alpha\bm\Delta
+
o(n^{-1/2}).
\end{equation}
The convex additive estimator is
\begin{equation}
\hat{\bm x}_H(\lambda)
=
\lambda\hat{\bm x}_A+(1-\lambda)\hat{\bm x}_C,
\qquad
\lambda\in[0,1],
\end{equation}
and its error is
\begin{equation}
\bm e_H(\lambda)
:=
\hat{\bm x}_H(\lambda)-\bm x_1
=
\lambda\bm e_A+(1-\lambda)\bm e_C.
\end{equation}
Thus the leading bias is
\begin{equation}
\operatorname{Bias}_H(\lambda)
=
\mathbb E[\bm e_H(\lambda)]
=
-(1-\lambda)\alpha\bm\Delta
+
o(n^{-1/2}).
\end{equation}
Since \(d=\|\bm\Delta\|^2\), the squared bias term is
\begin{equation}
\|\operatorname{Bias}_H(\lambda)\|^2
=
(1-\lambda)^2\alpha^2d
+
o(n^{-1}).
\end{equation}

For the covariance term, write
\begin{equation}
s_A:=\operatorname{tr}(\Sigma_A),
\qquad
s_C:=\operatorname{tr}(\Sigma_C),
\qquad
s_{AC}:=\operatorname{tr}(\Sigma_{AC}).
\end{equation}
Using the centered errors \(\tilde{\bm e}_A\) and \(\tilde{\bm e}_C\), the centered hybrid error is
\begin{equation}
\tilde{\bm e}_H(\lambda)
=
\lambda\tilde{\bm e}_A+(1-\lambda)\tilde{\bm e}_C.
\end{equation}
Therefore,
\begin{equation}
\operatorname{tr}\!\left(\operatorname{Cov}(\bm e_H(\lambda))\right)
=
\lambda^2s_A
+
(1-\lambda)^2s_C
+
2\lambda(1-\lambda)s_{AC}
+
o(n^{-1}).
\end{equation}
Adding the squared bias and covariance trace gives
\begin{equation}
\operatorname{MSE}_H(\lambda)
=
(1-\lambda)^2\alpha^2d
+
\lambda^2s_A
+
(1-\lambda)^2s_C
+
2\lambda(1-\lambda)s_{AC}
+
o(n^{-1}),
\end{equation}
which is Eq.~\eqref{eq:hybrid_mse_main}.
\end{proof}

\subsubsection{Endpoint expansions and local improvement conditions}
\label{app:endpoint_expansions_additive_fusion}

\begin{proof}[Derivation of the endpoint expansions]
Let \(L_H(\lambda)\) denote the leading quadratic part of \(\operatorname{MSE}_H(\lambda)\):
\begin{equation}
L_H(\lambda)
=
(1-\lambda)^2\alpha^2d
+
\lambda^2s_A
+
(1-\lambda)^2s_C
+
2\lambda(1-\lambda)s_{AC}.
\end{equation}
Near the ANN endpoint, set \(\lambda=1-\epsilon\) with \(0<\epsilon\ll1\).
Then
\begin{equation}
\begin{aligned}
L_H(1-\epsilon)-L_H(1)
&=
\epsilon^2\alpha^2d
+
(1-\epsilon)^2s_A
+
\epsilon^2s_C
+
2(1-\epsilon)\epsilon s_{AC}
-
s_A \\
&=
-2\epsilon(s_A-s_{AC})
+
O(\epsilon^2).
\end{aligned}
\end{equation}
Thus, adding a small CANN contribution improves the ANN endpoint when \(s_{AC}<s_A\).
Under the lower-variance condition \(\bar\kappa<1\), we have \(s_C<s_A\).
Together with Eq.~\eqref{eq:sac_chi_appendix} and \(\chi_{AC}\in(0,1)\), this gives \(s_{AC}<s_A\) at leading order.

Near the CANN endpoint, set \(\lambda=\epsilon\) with \(0<\epsilon\ll1\).
Then
\begin{equation}
\begin{aligned}
L_H(\epsilon)-L_H(0)
&=
(1-\epsilon)^2\alpha^2d
+
\epsilon^2s_A
+
(1-\epsilon)^2s_C
+
2\epsilon(1-\epsilon)s_{AC}
-
(\alpha^2d+s_C) \\
&=
-2\epsilon(\alpha^2d+s_C-s_{AC})
+
O(\epsilon^2).
\end{aligned}
\end{equation}
A sufficient condition for local improvement at the CANN endpoint, independent of the motion magnitude \(d\), is
\begin{equation}
s_{AC}<s_C.
\end{equation}

Using \(s_C=\bar\kappa^2s_A+o(n^{-1})\) and Eq.~\eqref{eq:sac_chi_appendix}, the leading-order condition \(s_{AC}<s_C\) is equivalent to
\begin{equation}
\chi_{AC}<\bar\kappa.
\end{equation}
Substituting
\begin{equation}
\chi_{AC}
=
\frac{\xi^2(\xi^2+4a^2)}
{(\xi^2+2a^2)^2},
\qquad
\bar\kappa
=
(1-\alpha)
\frac{\xi^2(\rho^2+2a^2)^2}
{\rho^4(\xi^2+4a^2)},
\end{equation}
gives
\begin{equation}
\frac{\xi^2(\xi^2+4a^2)}
{(\xi^2+2a^2)^2}
<
(1-\alpha)
\frac{\xi^2(\rho^2+2a^2)^2}
{\rho^4(\xi^2+4a^2)}.
\end{equation}
After canceling the common positive factor \(\xi^2\) and taking square roots, this becomes
\begin{equation}
\frac{\xi^2+4a^2}{\xi^2+2a^2}
<
\sqrt{1-\alpha}
\frac{\rho^2+2a^2}{\rho^2},
\end{equation}
which is Eq.~\eqref{eq:sac_less_sc_condition_main}.
\end{proof}

\subsubsection{Proof of convex additive fusion theorem}
\label{app:proof_convex_additive_fusion_theorem}

\begin{proof}[Proof of Theorem~\ref{thm:optimal_lambda_main}]
The leading quadratic part of the MSE is
\begin{equation}
L_H(\lambda)
=
(1-\lambda)^2\alpha^2d
+
\lambda^2s_A
+
(1-\lambda)^2s_C
+
2\lambda(1-\lambda)s_{AC}.
\end{equation}
Expanding it as a quadratic function of \(\lambda\) gives
\begin{equation}
L_H(\lambda)
=
\alpha^2d+s_C
+
2(s_{AC}-\alpha^2d-s_C)\lambda
+
Q_H\lambda^2,
\end{equation}
where
\begin{equation}
Q_H
=
\alpha^2d+s_A+s_C-2s_{AC}.
\end{equation}
Under the conditions \(s_{AC}<s_A\) and \(s_{AC}<s_C\), we have
\begin{equation}
Q_H
=
(\alpha^2d+s_C-s_{AC})
+
(s_A-s_{AC})
>
0.
\end{equation}
Thus \(L_H(\lambda)\) is strictly convex, and its unconstrained minimizer is obtained from \(\partial L_H(\lambda)/\partial\lambda=0\):
\begin{equation}
\lambda^\star
=
\frac{\alpha^2d+s_C-s_{AC}}
{\alpha^2d+s_A+s_C-2s_{AC}}.
\end{equation}
This gives Eq.~\eqref{eq:lambda_star_main}.

The same two inequalities imply that this minimizer lies inside the convex interval.
The numerator is positive because
\begin{equation}
\alpha^2d+s_C-s_{AC}>0,
\end{equation}
and
\begin{equation}
1-\lambda^\star
=
\frac{s_A-s_{AC}}{Q_H}
>
0.
\end{equation}
Therefore, \(\lambda^\star\in(0,1)\).
Since \(L_H(\lambda)\) is strictly convex and \(\lambda^\star\) is an interior minimizer, its leading MSE is smaller than the leading MSE at both pure-estimator endpoints \(\lambda=0\) and \(\lambda=1\).
The \(o(n^{-1})\) remainder in Eq.~\eqref{eq:hybrid_mse_main} gives the corresponding statement for the leading-order MSE expansion.

It remains to relate the two inequalities \(s_{AC}<s_A\) and \(s_{AC}<s_C\) to the dynamical-parameter conditions.
The inequality \(s_{AC}<s_A\) follows from the lower-variance regime \(\bar\kappa<1\), because this gives \(s_C<s_A\), and Eq.~\eqref{eq:sac_chi_appendix} gives \(s_{AC}=\chi_{AC}\sqrt{s_As_C}+o(n^{-1})\) with \(\chi_{AC}\in(0,1)\).
The inequality \(s_{AC}<s_C\) is guaranteed by Eq.~\eqref{eq:sac_less_sc_condition_main}.

Finally, we show that these requirements can be jointly satisfied.
As \(T\to\infty\), \(\alpha=e^{-\beta T}\to0\), so \(\sqrt{1-\alpha}\to1\).
Thus Eq.~\eqref{eq:sac_less_sc_condition_main} can be satisfied for sufficiently large \(T\) whenever
\begin{equation}
\frac{\xi^2+4a^2}{\xi^2+2a^2}
<
\frac{\rho^2+2a^2}{\rho^2}.
\end{equation}
This inequality is equivalent to
\begin{equation}
a^2
>
\frac{\rho^2-\xi^2}{2}.
\end{equation}
Together with the sufficient condition for \(\bar\kappa<1\),
\begin{equation}
a^2
<
\frac{\rho^2(\rho^2-\xi^2)}{\xi^2},
\end{equation}
we obtain the feasible design range
\begin{equation}
\frac{\rho^2-\xi^2}{2}
<
a^2
<
\frac{\rho^2(\rho^2-\xi^2)}{\xi^2}.
\end{equation}
When \(\rho^2>\xi^2\), this interval is nonempty.
Thus, by choosing \(a\) in this range and then choosing \(T\) sufficiently large, the two inequalities \(s_{AC}<s_A\) and \(s_{AC}<s_C\) can be jointly satisfied.
This proves Theorem~\ref{thm:optimal_lambda_main}.
\end{proof}

\subsection{Derivations for Continuous-Time Stable Tracking}
\label{app:continuous_time_response_map_dynamics}

This section gives the derivations supporting Section~\ref{subsec:continuous_time_response_map_dynamics}.
The derivations follow the three parts of the main text: local continuous-time complementarity and synergy, spurious remote-peak suppression with continuous center evolution, and estimation fusion before final decoding.

\subsubsection{Local continuous-time complementarity and synergy}
\label{app:ct_bv_synergy}

\begin{proof}[Derivation of Eqs.~\eqref{eq:ct_bias_variance_result}--\eqref{eq:ct_lambda_star}]
We consider the local continuous-time regime in which the ANN estimate fluctuates around the ground-truth trajectory:
\begin{equation}
\hat{\bm x}_A(t)
=
\bm x^*(t)+\bm\varepsilon(t),
\qquad
\mathbb E[\bm\varepsilon(t)]=\bm 0,
\end{equation}
with temporal covariance
\begin{equation}
\operatorname{Cov}(\bm\varepsilon(t),\bm\varepsilon(s))
=
\sigma_A^2e^{-\eta_{\varepsilon}|t-s|}I_2.
\end{equation}
In the local small-offset regime, the translational-mode reduction of the CANN center dynamics is approximated by the first-order system
\begin{equation}
\tau_{\mathrm{ct}}\dot{\hat{\bm x}}_C(t)+\hat{\bm x}_C(t)=\hat{\bm x}_A(t),
\end{equation}
where \(\tau_{\mathrm{ct}}>0\) is the effective center-dynamics time constant.

Using the stationary causal solution of this linear system, we have
\begin{equation}
\hat{\bm x}_C(t)
=
\int_0^\infty
\frac{1}{\tau_{\mathrm{ct}}}
e^{-u/\tau_{\mathrm{ct}}}
\hat{\bm x}_A(t-u)\,\mathrm du .
\end{equation}
Substituting \(\hat{\bm x}_A(t)=\bm x^*(t)+\bm\varepsilon(t)\) gives
\begin{equation}
\hat{\bm x}_C(t)
=
\int_0^\infty
\frac{1}{\tau_{\mathrm{ct}}}
e^{-u/\tau_{\mathrm{ct}}}
\bm x^*(t-u)\,\mathrm du
+
\bm\eta_C(t),
\end{equation}
where
\begin{equation}
\bm\eta_C(t)
=
\int_0^\infty
\frac{1}{\tau_{\mathrm{ct}}}
e^{-u/\tau_{\mathrm{ct}}}
\bm\varepsilon(t-u)\,\mathrm du .
\end{equation}

Under the local constant-velocity approximation
\begin{equation}
\bm x^*(t)=\bm x^*(0)+\bm vt,
\end{equation}
we have \(\bm x^*(t-u)=\bm x^*(t)-u\bm v\). Therefore,
\begin{equation}
\begin{aligned}
\int_0^\infty
\frac{1}{\tau_{\mathrm{ct}}}
e^{-u/\tau_{\mathrm{ct}}}
\bm x^*(t-u)\,\mathrm du
&=
\bm x^*(t)
-
\bm v
\int_0^\infty
\frac{u}{\tau_{\mathrm{ct}}}
e^{-u/\tau_{\mathrm{ct}}}
\,\mathrm du \\
&=
\bm x^*(t)-\tau_{\mathrm{ct}}\bm v .
\end{aligned}
\end{equation}
Since \(\mathbb E[\bm\eta_C(t)]=\bm 0\), the ANN and CANN biases relative to \(\bm x^*(t)\) are
\begin{equation}
\operatorname{Bias}_A(t)=\bm 0,
\qquad
\operatorname{Bias}_C(t)=-\tau_{\mathrm{ct}}\bm v .
\end{equation}

We next compute the covariance of the CANN stochastic term.
For one coordinate, write \(\eta_C(t)\) for the corresponding scalar component of \(\bm\eta_C(t)\). Then
\begin{equation}
\operatorname{Var}(\eta_C(t))
=
\sigma_A^2
\int_0^\infty
\int_0^\infty
\frac{1}{\tau_{\mathrm{ct}}^2}
e^{-(u+w)/\tau_{\mathrm{ct}}}
e^{-\eta_{\varepsilon}|u-w|}
\,\mathrm du\,\mathrm dw .
\end{equation}
Let \(U\) and \(W\) be independent exponential random variables with rate \(1/\tau_{\mathrm{ct}}\). The double integral equals
\begin{equation}
\mathbb E\!\left[e^{-\eta_{\varepsilon}|U-W|}\right].
\end{equation}
For two independent exponential variables with the same rate, \(|U-W|\) is exponential with rate \(1/\tau_{\mathrm{ct}}\). Thus,
\begin{equation}
\mathbb E\!\left[e^{-\eta_{\varepsilon}|U-W|}\right]
=
\int_0^\infty
\frac{1}{\tau_{\mathrm{ct}}}
e^{-r/\tau_{\mathrm{ct}}}
e^{-\eta_{\varepsilon}r}
\,\mathrm dr
=
\frac{1}{1+\eta_{\varepsilon}\tau_{\mathrm{ct}}}.
\end{equation}
Hence
\begin{equation}
\operatorname{Cov}(\hat{\bm x}_C(t))
=
\operatorname{Cov}(\bm\eta_C(t))
=
\frac{\sigma_A^2}{1+\eta_{\varepsilon}\tau_{\mathrm{ct}}}I_2.
\end{equation}
Together with
\begin{equation}
\operatorname{Cov}(\hat{\bm x}_A(t))=\sigma_A^2I_2,
\end{equation}
this proves Eq.~\eqref{eq:ct_bias_variance_result}.

The cross covariance between the ANN estimation error and the CANN stochastic term is
\begin{equation}
\operatorname{Cov}(\bm\varepsilon(t),\bm\eta_C(t))
=
\int_0^\infty
\frac{1}{\tau_{\mathrm{ct}}}
e^{-u/\tau_{\mathrm{ct}}}
\operatorname{Cov}(\bm\varepsilon(t),\bm\varepsilon(t-u))
\,\mathrm du .
\end{equation}
Using the exponential temporal covariance gives
\begin{equation}
\operatorname{Cov}(\bm\varepsilon(t),\bm\eta_C(t))
=
\sigma_A^2
\int_0^\infty
\frac{1}{\tau_{\mathrm{ct}}}
e^{-u/\tau_{\mathrm{ct}}}
e^{-\eta_{\varepsilon}u}
\,\mathrm du\, I_2
=
\frac{\sigma_A^2}{1+\eta_{\varepsilon}\tau_{\mathrm{ct}}}I_2.
\end{equation}
Therefore, the scalar variance and cross term in the continuous-time local regime are
\begin{equation}
s_A^{\mathrm{ct}}=2\sigma_A^2,
\qquad
s_C^{\mathrm{ct}}=\frac{2\sigma_A^2}{1+\eta_{\varepsilon}\tau_{\mathrm{ct}}},
\qquad
s_{AC}^{\mathrm{ct}}=\frac{2\sigma_A^2}{1+\eta_{\varepsilon}\tau_{\mathrm{ct}}}.
\end{equation}

For the convex additive estimate
\begin{equation}
\hat{\bm x}_H(t;\lambda)
=
\lambda\hat{\bm x}_A(t)+(1-\lambda)\hat{\bm x}_C(t),
\qquad
\lambda\in[0,1],
\end{equation}
the leading bias is
\begin{equation}
\operatorname{Bias}_H(t;\lambda)
=
-(1-\lambda)\tau_{\mathrm{ct}}\bm v .
\end{equation}
Thus the continuous-time MSE is
\begin{equation}
\operatorname{MSE}^{\mathrm{ct}}_H(\lambda)
=
(1-\lambda)^2\tau_{\mathrm{ct}}^2\|\bm v\|^2
+
\lambda^2s_A^{\mathrm{ct}}
+
(1-\lambda)^2s_C^{\mathrm{ct}}
+
2\lambda(1-\lambda)s_{AC}^{\mathrm{ct}}.
\end{equation}
Using \(s_C^{\mathrm{ct}}=s_{AC}^{\mathrm{ct}}\), this can be written as
\begin{equation}
\operatorname{MSE}^{\mathrm{ct}}_H(\lambda)
=
(1-\lambda)^2\tau_{\mathrm{ct}}^2\|\bm v\|^2
+
\lambda^2
\left(
s_A^{\mathrm{ct}}-s_C^{\mathrm{ct}}
\right)
+
s_C^{\mathrm{ct}}.
\end{equation}
Differentiating with respect to \(\lambda\) gives
\begin{equation}
\frac{\mathrm d}{\mathrm d\lambda}
\operatorname{MSE}^{\mathrm{ct}}_H(\lambda)
=
-2(1-\lambda)\tau_{\mathrm{ct}}^2\|\bm v\|^2
+
2\lambda
\left(
s_A^{\mathrm{ct}}-s_C^{\mathrm{ct}}
\right).
\end{equation}
The stationary point therefore satisfies
\begin{equation}
\lambda^\star_{\mathrm{ct}}
=
\frac{\tau_{\mathrm{ct}}^2\|\bm v\|^2}
{\tau_{\mathrm{ct}}^2\|\bm v\|^2+s_A^{\mathrm{ct}}-s_C^{\mathrm{ct}}}.
\end{equation}
Since
\begin{equation}
s_A^{\mathrm{ct}}-s_C^{\mathrm{ct}}
=
2\sigma_A^2
-
\frac{2\sigma_A^2}{1+\eta_{\varepsilon}\tau_{\mathrm{ct}}}
=
\frac{2\sigma_A^2\eta_{\varepsilon}\tau_{\mathrm{ct}}}
{1+\eta_{\varepsilon}\tau_{\mathrm{ct}}},
\end{equation}
we obtain
\begin{equation}
\lambda^\star_{\mathrm{ct}}
=
\frac{\tau_{\mathrm{ct}}^2\|\bm v\|^2}
{
\tau_{\mathrm{ct}}^2\|\bm v\|^2
+
2\sigma_A^2\eta_{\varepsilon}\tau_{\mathrm{ct}}/(1+\eta_{\varepsilon}\tau_{\mathrm{ct}})
}.
\end{equation}
When \(\|\bm v\|>0\) and \(\eta_{\varepsilon}\tau_{\mathrm{ct}}>0\), both the numerator and the additional denominator term are positive, so \(\lambda^\star_{\mathrm{ct}}\in(0,1)\).
\end{proof}

\subsubsection{Spurious remote-peak suppression and continuous center evolution}
\label{app:remote_peak_stability}

\begin{proof}[Derivation of the remote-peak pull]
Consider a Gaussian component of the ANN response map:
\begin{equation}
I_{\bm\mu,\sigma,A}(\bm x)
=
A\exp\!\left(
-\frac{\|\bm x-\bm\mu\|^2}{2\sigma^2}
\right),
\end{equation}
where \(A>0\) is the amplitude, \(\bm\mu\in\mathbb R^2\) is the peak center, and \(\sigma>0\) is the peak width.
Under the single-bump approximation,
\begin{equation}
U(\bm x,t)
\approx
\tilde U(\bm x-\hat{\bm x}_C(t)),
\qquad
\tilde U(\bm u)
=
U_0
\exp\!\left(-\frac{\|\bm u\|^2}{4a^2}\right).
\end{equation}
Let \(\bm u=\bm x-\hat{\bm x}_C(t)\) and \(\bm q=\bm\mu-\hat{\bm x}_C(t)\). Then
\begin{equation}
I_{\bm\mu,\sigma,A}(\bm u)
=
A
\exp\!\left(
-\frac{\|\bm u-\bm q\|^2}{2\sigma^2}
\right).
\end{equation}

The center motion is obtained by projecting the external-drive component onto the CANN translational modes.
Using the same normalization as in Assumption~\ref{assump:cann_modal_reduction}, this projection gives a contribution of the form
\begin{equation}
\dot{\hat{\bm x}}_C^{(\bm\mu,\sigma,A)}(t)
=
\beta(A,\sigma)
\exp\!\left(
-\frac{\|\bm\mu-\hat{\bm x}_C(t)\|^2}{2(\sigma^2+2a^2)}
\right)
\bigl(\bm\mu-\hat{\bm x}_C(t)\bigr).
\label{eq:gaussian_component_projection_appendix}
\end{equation}
The coefficient is
\begin{equation}
\beta(A,\sigma)
=
\frac{8Aa^2\sigma^2}
{\tau_cU_0(\sigma^2+2a^2)^2}.
\end{equation}

We now derive Eq.~\eqref{eq:gaussian_component_projection_appendix}.
The overlap between the Gaussian response component and the CANN activity bump is
\begin{equation}
F_\sigma(\bm q)
=
\int_{\mathbb R^2}
I_{\bm\mu,\sigma,A}(\bm u)
\tilde U(\bm u)
\,\mathrm d\bm u .
\end{equation}
Multiplying the two Gaussian functions gives
\begin{equation}
F_\sigma(\bm q)
=
\frac{4\pi a^2\sigma^2AU_0}{\sigma^2+2a^2}
\exp\!\left(
-\frac{\|\bm q\|^2}{2(\sigma^2+2a^2)}
\right).
\end{equation}
The product of the two Gaussian functions has mean
\begin{equation}
\frac{2a^2}{\sigma^2+2a^2}\bm q.
\end{equation}
Using
\begin{equation}
\partial_{u_i}\tilde U(\bm u)
=
-\frac{u_i}{2a^2}\tilde U(\bm u),
\end{equation}
the projection onto the \(i\)-th translational direction is proportional to
\begin{equation}
\int_{\mathbb R^2}
u_i
I_{\bm\mu,\sigma,A}(\bm u)
\tilde U(\bm u)
\,\mathrm d\bm u
=
\frac{2a^2}{\sigma^2+2a^2}q_iF_\sigma(\bm q).
\end{equation}
Substituting \(F_\sigma(\bm q)\) and the translational-mode normalization gives Eq.~\eqref{eq:gaussian_component_projection_appendix}.

For the two-peak ANN response map in Eq.~\eqref{eq:two_peak_response_map}, the spurious component has
\begin{equation}
A=A_s,
\qquad
\sigma=\omega_s,
\qquad
\bm\mu=\bm x_s(t).
\end{equation}
Let
\begin{equation}
r_s(t)=\|\bm x_s(t)-\hat{\bm x}_C(t)\|.
\end{equation}
Then the spurious peak contributes
\begin{equation}
\dot{\hat{\bm x}}_{C,s}(t)
=
\beta_s
\exp\!\left(
-\frac{r_s(t)^2}{2(\omega_s^2+2a^2)}
\right)
\bigl(\bm x_s(t)-\hat{\bm x}_C(t)\bigr),
\end{equation}
where
\begin{equation}
\beta_s
=
\frac{8A_sa^2\omega_s^2}
{\tau_cU_0(\omega_s^2+2a^2)^2}.
\end{equation}
Its magnitude is
\begin{equation}
\|\dot{\hat{\bm x}}_{C,s}(t)\|
=
\beta_s r_s(t)
\exp\!\left(
-\frac{r_s(t)^2}{2(\omega_s^2+2a^2)}
\right).
\end{equation}
Define
\begin{equation}
g(r)
=
r
\exp\!\left(
-\frac{r^2}{2(\omega_s^2+2a^2)}
\right).
\end{equation}
Then
\begin{equation}
g'(r)
=
\exp\!\left(
-\frac{r^2}{2(\omega_s^2+2a^2)}
\right)
\left(
1-\frac{r^2}{\omega_s^2+2a^2}
\right).
\end{equation}
Thus \(g(r)\) increases on \(0<r<\sqrt{\omega_s^2+2a^2}\), reaches its maximum at \(r=\sqrt{\omega_s^2+2a^2}\), and decreases for \(r>\sqrt{\omega_s^2+2a^2}\).
This proves that a remote spurious peak affects the CANN estimate through a spatial-overlap-gated velocity contribution rather than through a direct coordinate jump.
\end{proof}

\begin{proof}[Derivation of continuous center evolution]
The reduced CANN center dynamics is an ordinary differential equation whose dominant external-drive components have the form
\begin{equation}
\bm q
\mapsto
\beta
\exp\!\left(
-\frac{\|\bm q\|^2}{2S}
\right)
\bm q,
\qquad
S>0.
\end{equation}
This vector field is smooth in \(\bm q\).
Its Jacobian is
\begin{equation}
D_{\bm q}
\left[
\exp\!\left(
-\frac{\|\bm q\|^2}{2S}
\right)
\bm q
\right]
=
\exp\!\left(
-\frac{\|\bm q\|^2}{2S}
\right)
\left(
I-\frac{\bm q\bm q^\top}{S}
\right),
\end{equation}
which is locally bounded.
Therefore, the reduced center vector field is locally Lipschitz in \(\hat{\bm x}_C\) whenever the response amplitudes and peak locations are locally bounded.

If the response amplitudes and peak locations are locally bounded and measurable in time, and the mode-reduction remainder is locally integrable, the reduced center equation satisfies the standard Carath\'{e}odory conditions.
Hence the CANN center trajectory is locally absolutely continuous:
\begin{equation}
\hat{\bm x}_C\in AC_{\mathrm{loc}}.
\end{equation}
In particular, \(\hat{\bm x}_C(t)\) remains continuous and differentiable almost everywhere, even if the ANN argmax estimate switches discontinuously between response peaks.
\end{proof}

\subsubsection{Estimation fusion before final decoding}
\label{app:estimation_fusion_stable_tracking}

\begin{proof}[Derivation of the local approximation in Eq.~\eqref{eq:estimation_fusion_local_approximation}]
In the local unimodal regime, suppose that the ANN response map and the CANN activity bump are locally concentrated around nearby centers.
Their second-order approximations can be written as
\begin{equation}
S_A(\bm x,t)
\approx
C_A-\frac{h_A}{2}\|\bm x-\hat{\bm x}_A(t)\|^2,
\end{equation}
and
\begin{equation}
U(\bm x,t)
\approx
C_C-\frac{h_C}{2}\|\bm x-\hat{\bm x}_C(t)\|^2,
\end{equation}
where \(h_A>0\) and \(h_C>0\) are local curvatures.
For estimation fusion before decoding, define
\begin{equation}
S_H(\bm x,t)
=
\lambda S_A(\bm x,t)+(1-\lambda)U(\bm x,t),
\qquad
\lambda\in[0,1].
\end{equation}
The local maximizer satisfies
\begin{equation}
\nabla_{\bm x}S_H(\bm x,t)
=
-\lambda h_A(\bm x-\hat{\bm x}_A(t))
-(1-\lambda)h_C(\bm x-\hat{\bm x}_C(t))
=
\bm 0.
\end{equation}
Solving for the local maximizer gives
\begin{equation}
\hat{\bm x}_H(t)
\approx
\frac{
\lambda h_A\hat{\bm x}_A(t)
+
(1-\lambda)h_C\hat{\bm x}_C(t)
}
{
\lambda h_A+(1-\lambda)h_C
}.
\end{equation}
When the two local curvatures are comparable, or when their ratio is absorbed into the fusion weight, this reduces to the convex additive approximation
\begin{equation}
\hat{\bm x}_H(t)
\approx
\lambda\hat{\bm x}_A(t)+(1-\lambda)\hat{\bm x}_C(t).
\end{equation}
This gives the local approximation used in Eq.~\eqref{eq:estimation_fusion_local_approximation}.
\end{proof}

\begin{proof}[Support analysis under a remote spurious peak]
The local approximation above does not apply when the ANN response contains a remote spurious peak.
If the ANN argmax estimate jumps to the distractor while the CANN center remains near the target-side activity bump, directly averaging decoded estimates gives
\begin{equation}
\hat{\bm x}_{\mathrm{dec}}(t)
=
\lambda\hat{\bm x}_A(t)+(1-\lambda)\hat{\bm x}_C(t)
\approx
\lambda\bm x_s(t)+(1-\lambda)\hat{\bm x}_C(t).
\end{equation}
This point can lie between the target-side bump and the remote distractor peak, where neither branch assigns strong response support.

Estimation fusion before final decoding instead forms
\begin{equation}
S_H(\bm x,t)
=
\lambda S_A(\bm x,t)+(1-\lambda)U(\bm x,t),
\qquad
\hat{\bm x}_H(t)
=
\arg\max_{\bm x}S_H(\bm x,t).
\end{equation}
To see how this preserves response support, assume that the CANN activity bump around its current center is locally
\begin{equation}
U(\bm x,t)
=
C_U
\exp\!\left(
-\frac{\|\bm x-\hat{\bm x}_C(t)\|^2}{2\eta_U^2}
\right),
\end{equation}
where \(C_U>0\) is the bump amplitude and \(\eta_U>0\) is the local bump width.
Let
\begin{equation}
R_s(t)=\|\bm x_s(t)-\hat{\bm x}_C(t)\|.
\end{equation}
When \(\hat{\bm x}_C(t)\) remains close to the target-side peak and \(R_s(t)\) is large, ignoring exponentially small cross-tail terms gives
\begin{equation}
S_H(\hat{\bm x}_C(t),t)
\approx
\lambda A_\ast+(1-\lambda)C_U,
\end{equation}
whereas
\begin{equation}
S_H(\bm x_s(t),t)
\approx
\lambda A_s+(1-\lambda)C_U
\exp\!\left(
-\frac{R_s(t)^2}{2\eta_U^2}
\right).
\end{equation}
Thus the target-side fused score remains higher than the remote spurious peak whenever
\begin{equation}
\lambda A_\ast+(1-\lambda)C_U
>
\lambda A_s
+
(1-\lambda)C_U
\exp\!\left(
-\frac{R_s(t)^2}{2\eta_U^2}
\right).
\end{equation}
Equivalently,
\begin{equation}
\lambda(A_s-A_\ast)
<
(1-\lambda)C_U
\left[
1-
\exp\!\left(
-\frac{R_s(t)^2}{2\eta_U^2}
\right)
\right].
\end{equation}
This sufficient inequality shows why the remote ANN peak receives weak support from the CANN activity bump, whereas the target-side response is reinforced near the current CANN center.
Therefore, fusing \(S_A(\bm x,t)\) and \(U(\bm x,t)\) before final decoding preserves response support under remote distractors, while direct averaging of decoded coordinates can produce an unsupported intermediate state.
\end{proof}

\section{Additional Implementation Details}
\label{app:implementation_details}

\paragraph{SiamFC feature branch.}
The SiamFC feature branch follows a fully convolutional Siamese matching design.
Given the initial template patch
\(\bm V_{\mathrm{temp}}^{(0)}\in\mathbb{R}^{3\times127\times127}\)
and the search patch at frame \(k\),
\(\bm V_{\mathrm{search}}^{(k)}\in\mathbb{R}^{3\times255\times255}\),
the two patches are processed by the same convolutional feature extractor \(\varphi(\cdot)\).
We describe the convolutional backbone explicitly in Table~\ref{tab:siamfc_backbone}.

\begin{table}[!htbp]
\centering
\caption{Architecture of the SiamFC feature extractor. BN denotes batch normalization.}
\label{tab:siamfc_backbone}
\begin{tabular}{@{}lcccc@{}}
\toprule
\textbf{Layer} & \textbf{Channels} & \textbf{Kernel} & \textbf{Stride} & \textbf{Extra operations} \\
\midrule
Conv1 & \(3 \rightarrow 96\) & \(11\times11\) & 2 & BN, ReLU, \(3\times3\) max pool, stride 2 \\
Conv2 & \(96 \rightarrow 256\) & \(5\times5\) & 1 & groups=2, BN, ReLU, \(3\times3\) max pool, stride 2 \\
Conv3 & \(256 \rightarrow 384\) & \(3\times3\) & 1 & BN, ReLU \\
Conv4 & \(384 \rightarrow 384\) & \(3\times3\) & 1 & groups=2, BN, ReLU \\
Conv5 & \(384 \rightarrow 256\) & \(3\times3\) & 1 & groups=2 \\
\bottomrule
\end{tabular}
\end{table}

For the template patch, the backbone produces
\(\varphi(\bm V_{\mathrm{temp}}^{(0)})\in\mathbb{R}^{256\times6\times6}\).
For the search patch, it produces
\(\varphi(\bm V_{\mathrm{search}}^{(k)})\in\mathbb{R}^{256\times22\times22}\).
The ANN response map is computed by fully convolutional cross-correlation:
\[
S_A^{(k)}
=
\varphi(\bm V_{\mathrm{temp}}^{(0)})
\star
\varphi(\bm V_{\mathrm{search}}^{(k)}).
\]
This operation is implemented as grouped convolution over the batch dimension and yields a
\(17\times17\) response map.
Following the SiamFC implementation, \(S_A^{(k)}\) is multiplied by a scalar output factor \(10^{-3}\).
During inference, \(S_A^{(k)}\) is upsampled and mapped onto the \(N\times N\) toroidal grid used by the CANN branch.

\paragraph{FlyNet feature branch.}
FlyNet is adapted from \citep{HNN4_FlyNet} as a heuristic HNN feature branch under the same tracking
interface. It replaces the convolutional feature extractor with a sparse
projection and winner-take-all coding module. Each input patch is first converted
to grayscale using luminance weights \((0.299,0.587,0.114)\). The exemplar patch
is center padded to \(255\times255\), and both exemplar and search patches are
area-downsampled by a factor of 4. This gives a flattened vector of dimension
\(63\times63=3969\) for each branch.

The exemplar and search vectors are concatenated into a pair representation of
dimension \(7938\). This vector is projected by a fixed sparse binary projection
matrix into a 512-dimensional representation. For each projected unit, 10\% of
the input dimensions are sampled. A winner-take-all operation then keeps the
largest 50\% of the projected activations and binarizes the code. The resulting
binary code is passed through a three-layer MLP:
\[
  512 \rightarrow 1024 \rightarrow 1024 \rightarrow 17^2 .
\]
The output is reshaped into a \(17\times17\) response map. Thus, FlyNet shares
the same downstream response-map interface as the SiamFC feature branch.

\paragraph{HSTNN temporal branch.}
HSTNN is implemented as a SiamFC tracker augmented with a direct hybrid
RNN-SNN temporal response head. The SiamFC feature extractor and
cross-correlation head are kept fixed. At each frame, the tracker evaluates the
same three search scales as SiamFC, upsamples the response maps to the
\(85\times85\) inference grid, applies the standard scale penalty, and selects
the scale with the largest peak response. The selected response
\(r_t\in\mathbb{R}^{1\times85\times85}\) is min-max normalized before entering
the temporal head.

The temporal head contains two parallel recurrent cells. The first cell is a
convolutional GRU with a \(3\times3\) kernel. The second cell is a convolutional
leaky integrate-and-fire module:
\[
  u_t = \lambda u_{t-1} + W_x * r_t + W_s * s_{t-1}, \quad
  s_t = \mathbb{I}[u_t \geq \theta],
\]
where the reported HSTNN setting uses \(\lambda=0.85\), \(\theta=1.0\), and
16 hidden channels. During training, the spike function is optimized with a
surrogate gradient, and the membrane is reset after spikes. We summarize the
temporal-head architecture in Table~\ref{tab:hstnn_s1_temporal_head}.

\begin{table}[!htbp]
\centering
\caption{Architecture of the HSTNN temporal response head.}
\label{tab:hstnn_s1_temporal_head}
\begin{tabular}{@{}lcccc@{}}
\toprule
\textbf{Module} & \textbf{Input/state} & \textbf{Kernel} & \textbf{Channels} & \textbf{Extra operations} \\
\midrule
ConvGRU & response, hidden state & \(3\times3\) & \(1 \rightarrow 16\) & update/reset gates, tanh candidate \\
ConvLIF & response, membrane/spike state & \(3\times3\) & \(1 \rightarrow 16\) & recurrent spike convolution, reset after spike \\
Fuse Conv1 & GRU hidden + LIF spike & \(1\times1\) & \(32 \rightarrow 16\) & ReLU \\
Fuse Conv2 & fused feature & \(1\times1\) & \(16 \rightarrow 1\) & refined response output \\
\bottomrule
\end{tabular}
\end{table}

The fused output is the refined response map. During
tracking, the argmax of this refined response replaces the SiamFC response peak
for target localization, while the scale and target-size update follow the
standard SiamFC rule.

\paragraph{Training details for feature branches.}
The SiamFC and FlyNet feature branches are trained on the GOT-10k training
split. The SiamFC branch is trained for 50 epochs, and the FlyNet branch is
trained for 10 epochs.
Training uses randomly sampled exemplar-search image pairs. The exemplar crop
has size \(127\times127\), and the search crop has size \(255\times255\). The
training augmentation includes random image stretching and random crop
perturbations around the target.

For SiamFC, the training target is a binary logistic response label on the
\(17\times17\) response map. Positive locations are defined around the response
center, and negative locations are defined outside the positive region. The loss
is a class-balanced binary cross-entropy loss over response-map locations.

For FlyNet, the search crop is randomly translated during training. The target
center inside the translated search crop is used to construct a response-map
label centered at the corresponding location. FlyNet is trained with the same
class-balanced binary cross-entropy objective as SiamFC.

The SiamFC and FlyNet feature branches use stochastic gradient descent with
momentum \(0.9\), weight decay \(5\times10^{-4}\), initial learning rate
\(10^{-3}\), and an exponential learning-rate schedule toward \(10^{-5}\).
Unless otherwise specified, the batch size is 64.

\paragraph{Training details for HSTNN.}
HSTNN is trained on GOT-10k temporal segments after loading the corresponding SiamFC checkpoint. The SiamFC network is frozen, and only the temporal head is
optimized. Each training sample is a contiguous sequence segment. At each
temporal step, the template is taken from the first frame, while the search crop
is centered on the previous ground-truth box. The selected SiamFC response is
normalized and passed through the temporal head. The target is a normalized
Gaussian response centered at the current ground-truth displacement on the
\(85\times85\) response grid.

The temporal objective averages losses over the sequence. For each step, it
uses cross-entropy between the target Gaussian distribution and the softmax of
the refined response, plus a weighted center regression term computed from the
soft-argmax response center. The center loss weight is \(0.1\), and the Gaussian
standard deviation is 2 response-grid cells. The reported HSTNN checkpoint
uses 8-frame segments, 16 hidden channels, batch size 4, AdamW with learning
rate \(10^{-3}\), weight decay \(10^{-4}\), gradient clipping at 10, and the
200-iteration checkpoint.

\paragraph{Evaluation details for GOT-10k and VOT2019.}
The official toolkit do not directly report
\(\mathrm{SR}\) for GOT-10k and VOT2019. Therefore, in addition to the official toolkit outputs,
we recompute \(\mathrm{SR}\) from the saved tracking results to make the
success-rate results comparable across datasets. For GOT-10k, we follow the
official validation-set filtering convention and evaluate only visible frames
after the initial frame. For VOT2019, we parse the unsupervised tracking outputs
and compute the same metric over valid evaluated frames. This post-processing
only augments the reported evaluation metrics and does not change the underlying
tracking results.

\stopappendixtoc

\printcredits

\section*{Declaration of competing interest}
The authors declare that they have no known competing financial interests or personal relationships that could have appeared to influence the work reported in this paper.

\section*{Acknowledgements}
This work was partially supported by Brain Science and Brain-like Intelligence Technology-National Science and Technology Major Project 2021ZD0200300, Hong Kong Polytechnic University under Project P0058180, PP0055934, and PolyU15217625 for GRF project funded in 2025/26 Exercise.

\section*{Declaration of generative AI and AI-assisted technologies in the manuscript preparation process}
During the preparation of this work, the authors used OpenAI Codex to assist with code drafting and editing. After using this tool, the authors reviewed, tested where applicable, and edited the output as needed. The authors take full responsibility for the content of the manuscript.

\bibliographystyle{cas-model2-names}
\bibliography{refs}

\end{document}